\newtheorem{lemma}{Lemma}[section]
\newcommand{\cmark}{\ding{51}}%
\newcommand{\xmark}{\ding{55}}%
\renewcommand{\figureautorefname}{Figure}
\def\equationautorefname~#1\null{Eqn.~(#1)\null}
\def\figureautorefname~#1\null{Fig.~#1\null}
\def\lemmaautorefname~#1\null{Lemma~#1\null}
\def\CHK#1 {\textcolor{magenta}{{\bf [CHK:}~#1{\bf ]}}~}
\def\ADD#1 {\textcolor{cyan}{{\bf [ADD:}~#1{\bf}]}~}
\begin{document}
% \bstctlcite{IEEEexample:BSTcontrol}
\title{Video DataFlywheel: Resolving the Impossible Data Trinity in Video-Language Understanding}

\author{
    Xiao~Wang, % ~\IEEEmembership{Student Member,~IEEE,}
    Jianlong~Wu,~\IEEEmembership{Member,~IEEE,}
    Zijia~Lin,\\
    Fuzheng~Zhang,
    Di~Zhang,
    and Liqiang~Nie~\IEEEmembership{Senior Member,~IEEE,}
% Michael~Roberg,~\IEEEmembership{Student Member,~IEEE,}
%       Tibault~Reveyrand,~\IEEEmembership{Member,~IEEE,}\\
%       Ignacio~Ramos,~\IEEEmembership{Student Member,~IEEE,}
%       Erez Falkenstein,~\IEEEmembership{Student Member,~IEEE,}
      % and~Zoya~Popovi\'c,~\IEEEmembership{Fellow,~IEEE}% <-this % stops a space

    \thanks{$\bullet$ Xiao Wang, Jianlong Wu, and Liqiang Nie are with the School of Computer Science and Technology, Harbin Institute of Technology, Shenzhen 518055, China (e-mail: scz.wangxiao@gmail.com; wujianlong@hit.edu.cn; nieliqiang@gmail.com). Corresponding author: Jianlong Wu and Liqiang Nie.}% <-this % stops a space
    \thanks{$\bullet$ Zijia~Lin, Fuzheng~Zhang, and Di~Zhang are with the Kuaishou Technology, Beijing 100193, China (e-mail: \{linzijia, zhangfuzheng, zhangdi08\}@kuaishou.com).}% <-this % stops a space
}

% The paper headers
% \markboth{IEEE Transactions on Pattern Analysis and Machine Intelligence, 2024
% }{Wang \MakeLowercase{\textit{et al.}}: Title}

% ====================================================================

% For Computer Society papers, we must declare the abstract and index terms
% PRIOR to the title within the \IEEEtitleabstractindextext IEEEtran
% command as these need to go into the title area created by \maketitle.
% As a general rule, do not put math, special symbols or citations
% in the abstract or keywords.
\IEEEtitleabstractindextext{%
% === ABSTRACT ====================================================================
% =================================================================================
\begin{abstract}

\justifying
Recently, video-language understanding has achieved great success through large-scale pre-training. However, data scarcity remains a prevailing challenge. This study quantitatively reveals an ``impossible trinity'' among data quantity, diversity, and quality in pre-training datasets. 
Recent efforts seek to refine large-scale, diverse ASR datasets compromised by low quality through synthetic annotations.
These methods successfully leverage useful information in multimodal video content (frames, tags, ASR transcripts, etc.) to refine the original annotations.
Nevertheless, they struggle to mitigate noise within synthetic annotations and lack scalability as the dataset size expands.
To address these issues, we introduce the Video DataFlywheel framework, which iteratively refines video annotations with improved noise control methods.
For iterative refinement, we first leverage a video-language model to generate synthetic annotations, resulting in a refined dataset. Then, we pre-train on it and fine-tune on human refinement examples for a stronger model. These processes are repeated for continuous improvement.
For noise control, we present AdaTaiLr, a novel noise control method that requires weaker assumptions on noise distribution, thereby proving more effective in large datasets with theoretical guarantees. 
The combination of iterative refinement and AdaTaiLr can achieve better scalability in video-language understanding. 
Extensive experiments show that our framework outperforms existing data refinement baselines, delivering a 3\% performance boost and improving dataset quality with minimal diversity loss. Furthermore, our refined dataset facilitates significant improvements in various video-language understanding tasks, including video question answering and text-video retrieval.

\end{abstract}

% === KEYWORDS ====================================================================
% =================================================================================
% Note that keywords are not normally used for peer-review papers.
\begin{IEEEkeywords}
Video-language Pre-training, Data-centric, Video Question Answering, Text-video Retrieval
\end{IEEEkeywords}
}

% ====================================================================
\maketitle

% For peer-reviewed papers, you can put extra information on the cover
% page as needed:
\ifCLASSOPTIONpeerreview
\begin{center} \bfseries EDICS Category: 3-BBND \end{center}
\fi
%
% For peerreview papers, this IEEEtran command inserts a page break and
% creates the second title. It will be ignored for other modes.
% \IEEEpeerreviewmaketitle

% ====================================================================
% ====================================================================
% ====================================================================

\section{Introduction}
% === I. INTRODUCTION =============================================================
% =================================================================================
\IEEEPARstart{I}{n} general video-language understanding, the models are first pre-trained using numerous video-text pairs, and then fine-tuned with minimal task-specific data. This pipeline has demonstrated effectiveness in various tasks, including video-text retrieval \cite{li_umt_2023}, video question answering \cite{xu_pllava_2024}, and text-to-video generation \cite{chen_panda_70m_2024}.

\begin{figure}[t]
    \centering
    \includegraphics[width=0.99\linewidth]{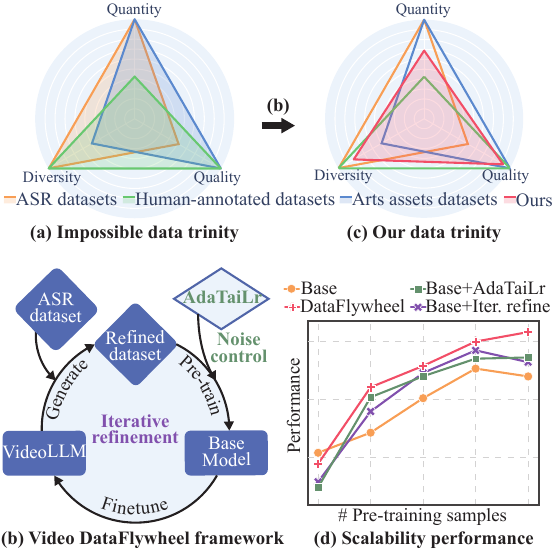}
    % \vspace{-2em}
    \caption{
    For the impossible data trinity (a) among video-language pre-training datasets, we propose the Video DataFlywheel (b) for data refinement. It achieves better trinity (c) and scalability (d) in large data.
    }
    % \vspace{-1em}
    \label{fig:intro}
\end{figure}

However, data scarcity presents a persistent challenge in video-language pre-training \cite{miech_howto100m_2019, bain_webvid_2021, zellers_yt-tmp-180m_2021}. Recent studies suggest that merely increasing data volume may even harm downstream performance \cite{luo_clip4clip_2022, xue_clip_vip_2023, dong_m2paap_2024}. 
Through quantitative analysis, we delve deeply into data issues and identify the \textbf{impossible trinity} within existing datasets. As illustrated in \autoref{fig:intro}(a), datasets from existing curation approaches, including human-annotated, art assert, and Automatic Speech Recognition (ASR) datasets, cannot simultaneously achieve high data quantity, diversity, and quality.

Existing methods addressing the impossible trinity primarily focus on refining the annotations of ASR datasets, as these datasets are of high quantity and diversity but suffer from low quality.
By leveraging foundation models like Large Language Models (LLM) \cite{yang_just_ask_2022, chen_vast_2023, wang_internvid_2023} and Vision Language Models (VLM) \cite{xue_clip_vip_2023, chen_vast_2023, chen_panda_70m_2024, wang_internvid_2023}, these approaches generate synthetic annotations using multimodal video content including video frames, tags, titles, and ASR transcripts.
While showing promise, these methods encounter two key challenges.
(1) \textit{Lack of scalability}. We find that when the size of the refined dataset increases, the downstream performance may not increase accordingly. This is probably due to the limits of foundation models. Specifically, since foundation models generate all annotations, the quality is constrained by the knowledge in these models and their information extraction capability from the multimodal video content, leading to quick saturation as data size scales. To solve this chicken-egg dilemma, it's crucial to iteratively refine the dataset by pre-training stronger foundation models using refined datasets.
(2) \textit{Difficulty in noise control}. Synthetic annotations contain noise from various sources, including model hallucinations and misleading side information. Current noise reduction techniques in vision-language \cite{miech_mil_nce_2020, huang_ncr_2021, feng_ctpr_2023} rely on assumptions about noise distributions that often misalign with real-world data\footnote{Further discussion is provided in Appendix \ref{sec:sup_noise_baselines}}. Consequently, existing dataset refinement methods typically employ pre-trained text-video retrieval models \cite{chen_panda_70m_2024, wang_internvid_2023} to filter out low-similarity annotations, which diminishs annotation diversity \cite{nguyen_improving_2023} and may not consistently improve performance \cite{wang_internvid_2023}.

In this study, we introduce the \textbf{Vid}eo \textbf{D}ata\textbf{F}lywheel (\textbf{VidDF}) framework as a solution to the impossible data trinity. As shown in \autoref{fig:intro} (b), the VidDF framework iteratively refines text annotations and integrates advanced noise control methods. 
Initially, we employ a VideoLLM to generate synthetic annotations based on multimodal video content, maximizing the use of both video data and Video LLM's knowledge for better annotation.
Next, we pre-train a model using the refined dataset. To reduce noise in refined annotations, we propose AdaTaiLr, a novel noise control method utilizing Total Variation Distance (TVD) \cite{loss_truncation, tailr} as a theoretically more robust distance metric instead of KL divergence. Unlike existing noise control methods in vision-language pre-training \cite{huang_ncr_2021, feng_ctpr_2023}, AdaTaiLr does not require the data distribution to be Gaussian mixtures, but only demands the clean distribution as the primary data component. Further, AdaTaiLr enhances TVD estimation through adaptive adjustment of trade-off hyper-parameters, providing theoretical guarantees and setting it apart from previous noise control methods in language modeling \cite{loss_truncation, tailr, li_ENT_2023}.
Finally, we fine-tune the pre-trained model to learn how to refine multimodal video content for better annotations based on a few human-annotated samples. The fine-tuned VideoLLM is then used for annotation refinement. 
Such iterative refinement enables us to surpass the performance limits of foundation models, ensuring better enhancements as the dataset size scales.

We conduct comprehensive experiments to validate the superiority of our framework. 
We first evaluate the quality of our refined dataset. As depicted in \autoref{fig:intro} (b), our analysis reveals that VidDF breaks the impossible trinity by improving data quality with little diversity compromises. For further quantitative results, we pre-train a model on the refined dataset and perform zero-shot video captioning on MSR-VTT \cite{msrvtt_2016}, MSVD \cite{msvd_2011}, and VATEX \cite{vatex_2019} datasets. Our framework outperforms current data refinement methods by a significant 3.1\%. 
Then, on the effectiveness of the noise control method, our ablation studies confirm that AdaTaiLr consistently outperforms noise control baselines.
On the iterative refinement framework, we find that when we scale the dataset, solely using noise control or iterative refinement led to performance saturation or decline, respectively, as portrayed in \autoref{fig:intro} (d). These results highlight the limitations of foundational models and the noise in synthetic annotations, respectively, and underscore the importance of our collaborative approach to combine noise control with iterative refinement for better scalability in video language pre-training.
Finally, by integrating our refined dataset with existing models, we observe significant performance improvements in video question answering and text-video retrieval, demonstrating the utility of our refined dataset.

In summary, this work contributes in four key aspects:
\begin{itemize}
    \item Our quantitative analysis reveals an impossible trinity among quantity, diversity, and quality in existing video-language pre-training datasets. This insight informs a framework to guide the curation, evaluation, and improvement of future pre-training datasets.
    \item We introduce the VidDF framework, which addresses the impossible trinity with a more scalable approach by iteratively refining ASR datasets. This process leverages a VideoLLM pre-trained on refined datasets from previous iterations and fine-tuned with human-annotated examples.
    \item For noise control during pre-training, we present AdaTaiLr. This novel noise control method utilizes a theoretically more robust objective function, which requires weaker assumptions on noise distribution and proves more effective in large datasets with theoretical guarantees.
    \item Comprehensive experiments validate the VidDF framework's superiority by improving data quality with minimal diversity compromise. AdaTaiLr consistently outperforms noise control baselines and helps VidDF achieve better scalability in video language pre-training, leading to notable improvements in downstream video question answering and text-video retrieval tasks.
\end{itemize}

\section{Related work}
% === I. RELATED WORK =============================================================
% =================================================================================

\subsection{Video-Language Datasets} \label{sec:related_vidl_ds}
This study focuses on video-language datasets comprising paired video and text, where the text describes the video content. This choice is motivated by two key factors: 1) contemporary video-language models \cite{chen_vast_2023, chat_univi_2023, videochat2} commonly utilize paired video-text data during pretraining, and 2) well-annotated vision-text data can facilitate the generation of qualified instruction-tuning data (e.g., question-answer pairs), as exemplified by LLaVA \cite{liu_llava_2023}.

Existing datasets can be categorized into three main types based on text annotation sources: ASR datasets, art assets datasets, and human-annotated datasets. Each type presents specific limitations for video-language understanding, forming an impossible trinity, as depicted in \autoref{fig:intro} (a).
\textit{1) ASR datasets} \cite{miech_howto100m_2019, zellers_yt-tmp-180m_2021, xue_hd-vila_2022} are typically sourced from YouTube videos, with ASR transcripts serving as annotations. While these datasets offer diversity and quantity due to YouTube's extensive and free content, the quality is often compromised by ASR transcripts. For instance, a manual evaluation in HowTo100M \cite{miech_howto100m_2019} reveals that about 49\% of annotations lack corresponding content in video.
\textit{2) Art assets datasets} \cite{bain_webvid_2021} are curated from art assets platforms like Shutterstock\footnote{\url{https://www.shutterstock.com/video}}, featuring high quality and quantity annotations by artists worldwide. However, these datasets are limited in domain scope and lack diversity.
\textit{3) Human-annotated datasets} \cite{msrvtt_2016, msvd_2011, vatex_2019, youcook_2016, anet_caption_2017, qvhighlights_2021} are meticulously curated from a broad selection of videos and annotated by multiple human annotators to ensure diversity and quality. Nevertheless, the expensive annotation cost limits their quantity.
We leave the details of the impossible trinity in Appendix \ref{sec:sup_imp_trinity}.

To overcome the impossible trinity, this study aims to enhance the annotation quality of ASR datasets.

\subsection{Refining Video-Language Datasets from Web}

\begin{figure}[t]
    \centering
    \includegraphics[width=0.99\linewidth]{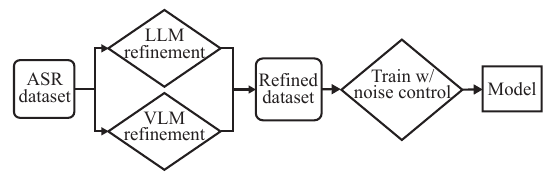}
    \caption{Unified framework of existing dataset refinement methods, consisting of three procedures in diamond boxes.}
    \label{fig:general_refine_pipe}
\end{figure}

\begin{table}[t]

\caption{
Comparison between vision-language dataset refinement methods.
}

\label{tab:related_work_tab}
\resizebox{\columnwidth}{!}{

\begin{tabular}{ccccc}
\hline
Method &
  \begin{tabular}[c]{@{}c@{}}LLM \\ refinement\end{tabular} &
  \begin{tabular}[c]{@{}c@{}}VFM\\ refinement\end{tabular} &
  \begin{tabular}[c]{@{}c@{}}Noise\\ control\end{tabular} &
  \begin{tabular}[c]{@{}c@{}}Iterative\\ refinement\end{tabular} \\ \hline
MIL-NCE \cite{miech_mil_nce_2020}    & \xmark & \xmark & \cmark & \xmark \\
NCR \cite{huang_ncr_2021}            & \xmark & \xmark & \cmark & \xmark \\
Just Ask \cite{yang_just_ask_2022}   & \cmark & \xmark & \xmark & \xmark \\
LaCLIP \cite{fan_laclip_2023}        & \cmark & \xmark & \xmark & \xmark \\
CTPR \cite{feng_ctpr_2023}           & \xmark & \xmark & \cmark & \xmark \\
CLIP-ViP \cite{xue_clip_vip_2023}    & \xmark & \cmark & \xmark & \xmark \\
VAST \cite{chen_vast_2023}           & \cmark & \cmark & \xmark & \xmark \\
Panda70M \cite{chen_panda_70m_2024}  & \cmark & \cmark & \xmark & \xmark \\
InternVid \cite{wang_internvid_2023} & \cmark & \cmark & \xmark & \xmark \\
VeCLIP \cite{lai_veclip_2024}        & \cmark & \cmark & \xmark & \xmark \\ \hline
Ours                                 & \cmark & \cmark & \cmark & \cmark \\ \hline
\end{tabular}

}
\end{table}

Video-language datasets sourced from the web, especially ASR datasets discussed in Section \ref{sec:related_vidl_ds}, commonly exhibit low annotation quality. In this section, we introduce a novel and unified framework for existing refining methods. Each method aligns with one or more of the three procedures outlined in Figure \ref{fig:general_refine_pipe}.

\textit{Large Language Model refinement} leverages LLMs to enhance annotations through text rewriting augmentation \cite{fan_laclip_2023}, content extraction \cite{yang_just_ask_2022}, and integration of various sources such as ASR transcripts \cite{chen_vast_2023}, web text \cite{lai_veclip_2024}, or multimodal captioners \cite{chen_vast_2023, wang_internvid_2023, chen_panda_70m_2024, lai_veclip_2024}. These methods rely on high-quality raw annotations or collaboration with vision models due to the absence of visual perception ability.

\textit{Vision Language Model refinement} utilizes pre-trained caption models to produce visually grounded synthetic annotations \cite{li_blip_2022, betker_dali3_2023, nguyen_improving_2023, xue_clip_vip_2023, chen_vast_2023, chen_panda_70m_2024, wang_internvid_2023, lai_veclip_2024}. However, their quality heavily depends on caption models, leading to a issues like noise and lack of diversity. For instance, Nguyen et al. \cite{nguyen_improving_2023} observed that unfiltered BLIP2 captions perform even worse than raw web captions on the DataComp 128M dataset.

\textit{Noise control} focuses on reducing noise in annotations by assuming certain noise distributions, such as MIL-NCE \cite{miech_mil_nce_2020}, NCR \cite{huang_ncr_2021}, and CTPR \cite{feng_ctpr_2023}. However, these assumptions may not always align with real data distributions (See Appendix \ref{sec:sup_noise_baselines} for details).
%
% \textit{4) Iterative refinement} involves refining the entire system using pre-trained models, which are trained on raw \cite{li_blip_2022, betker_dali3_2023} or synthetic captions \cite{nguyen_improving_2023}, fine-tuned with human-annotated data, and then employed for annotating raw web data. Managing error accumulation in such an iterative process is challenging. Nguyen et al. \cite{nguyen_improving_2023} employed discriminative models, particularly CLIP \cite{clip_2021}, to filter out low-quality captions. However, this presents a chicken-and-egg dilemma as high-quality discriminative models' availability hinges on high-quality data's presence. Ultimately, the quality of the final outcomes is constrained by the discriminative models.

In this study, we introduce a novel noise control method AdaTaiLr, and propose an iterative refinement framework for better scalability in video-language pre-training. We compare our method with existing video dataset refinement methods in \autoref{tab:related_work_tab}.

\subsection{Video Large Language Models} \label{sec:related_work_vidlm_model}

Video Large Language Models (VideoLLM) typically comprise Vision Foundational Models (VFMs) \cite{clip_2021}, LLMs \cite{vicuna_llm}, and connectors bridging them. VideoLLMs are categorized based on their connectors into concatenation-based, Q-Former-based, and cross-attention-based models.
Concatenation-based models \cite{video_chatgpt_2023, video_llava, llava_next} utilize an MLP on VFM patch embeddings, concatenating the MLP output with LLM token embeddings. While simple and effective, these models are memory-intensive due to the long VFM patch length.
Q-Former-based models \cite{videochat_2023, videochat2} employ a transformer decoder with learnable tokens to compress VFM patch embeddings, reducing memory usage but risking performance loss from information compression.
Cross-attention-based models \cite{videococa_2023} incorporate cross-attention within LLM layers to integrate visual data, requiring large training data due to high parameter complexity.
This study focuses solely on concatenation-based methods to isolate the impact of model architectures.

\section{Data Flywheel for video-language understanding}

\begin{figure*}[t]
    \centering
    \includegraphics[width=0.99\linewidth]{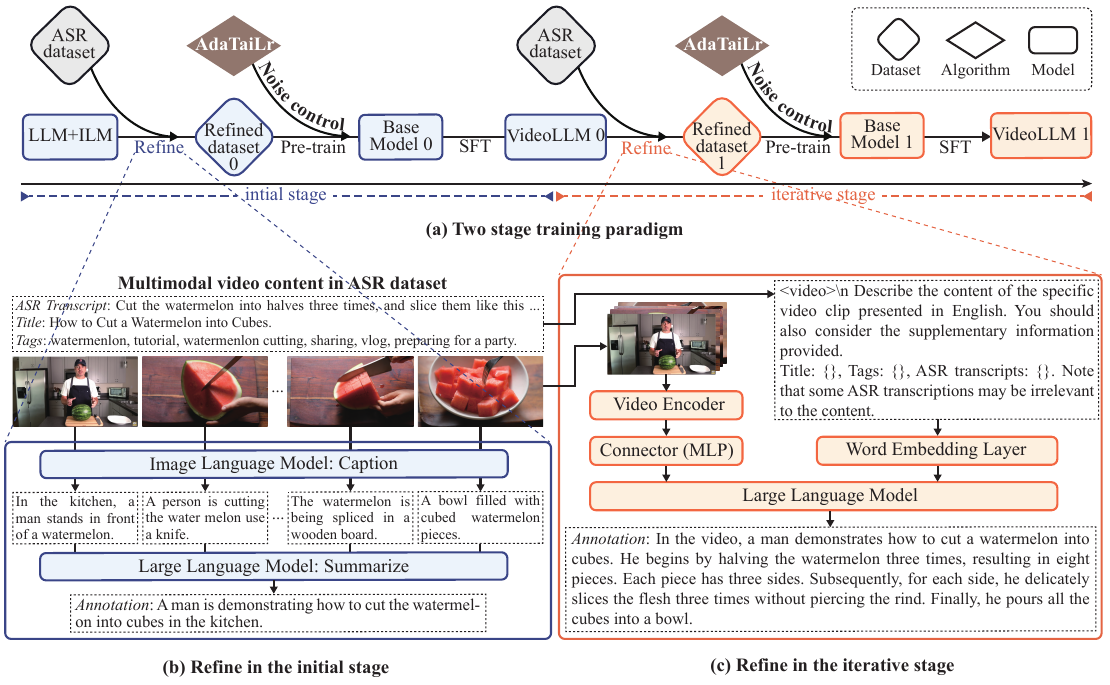}
    % \vspace{-2em}
    \caption{
    Method overview.
    (a) Our video dataflywheel framework comprises two stages. The initial refinement stage refines the ASR dataset by prompting LLM and ILM, since there is no VideoLLM at this stage. The iterative refinement stage refines the dataset using VideoLLM trained in the previous stage. AdaTaiLr is applied for noise control at both stages in pre-training.
    (b) During initial refinement, an LLM summarizes the image captions generated by frames.
    (c) In iterative refinement, a VideoLLM generates annotations based on multi-modal video content. 
    }
    % \vspace{-1em}
    \label{fig:method_overview}
\end{figure*}

% === 1. Overview =================================================================
% =================================================================================
\subsection{Overview of VidDF}

To address the challenges of scalability and noise control in dataset refinement, we propose the VidDF framework that iteratively refines dataset annotations with noise control. As illustrated in \autoref{fig:method_overview} (a), our framework comprises two stages. 
We begin with the \textit{initial stage}, where we have only LLM and Image-Language Models (ILM) instead of a VideoLLM. We refine the dataset by prompting LLM and ILM and then use the refined dataset to train our initial VideoLLM for further refinement. 
Thereafter, the \textit{iterative stage} refines the ASR dataset using VideoLLM trained in the previous stage.

Both the initial and iterative stages follow a similar pipeline, based on which this section is organized. 
Specifically, we first refine the ASR dataset in \autoref{sec:anno_refine}. 
For noise control during pre-training, we introduce a novel method AdaTaiLr for both stages in \autoref{sec:noise_control}.
Then, we Pre-Train (PT) a VideoLLM on this refined dataset, and perform Supervised Fine-Tuning (SFT) on human examples of annotation refinement in \autoref{sec:pretrain_sft_detail}. Finally, the resulting VideoLLM is used in the next refinement stage.

% === 2. Annotation refine ========================================================
% =================================================================================
\subsection{Annotation Refinement} \label{sec:anno_refine}

We refine the ASR dataset's textual annotations using a VideoLLM (or LLM+ILM), which interprets the multimodal video content's textual and visual cues. 
The rationale of this approach is based on three key characteristics of ASR datasets, as exemplified by the video example in \autoref{fig:method_overview} (b). 
\begin{itemize}
    \item The dataset comprises textual content—titles, tags, and ASR transcripts—alongside visual information extractable from video frames via ILM captioners or video encoders.
    \item The visual and textual elements are mutually informative. For instance, ASR transcripts can summarize visual content with phrases like "three times" during a cutting procedure, while visuals offer extra details beyond "like this" in a slicing procedure.
    \item The textual content often contains irrelevant information, such as "sharing" and "vlog" in tags or personal feelings in ASR transcripts.
\end{itemize}
Given the characteristics above, an LLM or VideoLLM is vital, as its extensive knowledge enables the integration of textual and visual clues for more precise and comprehensive video annotations.
The detailed refining process varies between the initial and refinement stages, which will be detailed below.

\subsubsection{Annotation Refinement at the Initial Stage}

At the initial stage, prior to the training of VideoLLMs, we utilize both LLM and ILM for annotation refinement. Intuitively, the ILM extracts visual information to the maximum extent of current model capabilities, while the LLM integtates key visual and textual elements.
As depicted in \autoref{fig:method_overview} (b), this stage involves two steps: captioning and summarization. For captioning, we sample frames uniformly and describe each using pre-trained BLIP2 \cite{li_blip2_2023}. For summarization, we leverage Vicuna \cite{vicuna_llm} to consolidate frame captions into a cohesive paragraph. 
Nevertheless, this stage also presents problems, including caption noise from ILM and the absence of video understanding capability. These issues will be addressed in \autoref{sec:noise_control} and the subsequent section, respectively.

\subsubsection{Annotation Refinement at the Iterative Stage}

At the iterative stage, we refine the ASR dataset using VideoLLM trained in the previous stage. The rationale is that VideoLLM, equipped with the capabilities of both ILM and LLM, also possesses the additional ability to understand videos.
As illustrated in \autoref{fig:method_overview} (c), following LLaVA \cite{liu_llava_2023}, we adopt a simple VideoLLM with all necessary components:
\begin{itemize}
    \item Video Encoder. We adopt TimeSformer-L \cite{bertasius_timesformer_2021}. It is a ViT-L/14 with extra self-attention before each transformer layer, focusing on the temporal dimension. The ViT is initialized from CLIP, while the temporal attention is trained from scratch. A zero-initialized fully connected layer is added after the temporal attention layer to smooth the training.
    \item Connector. Following LLaVA \cite{liu_llava_2023}, we use a two-layer perceptron to project video features from the 2-nd last layer into language space.
    \item LLM. We use Vicuna-1.5-7B \cite{vicuna_llm} as the large language model.
\end{itemize}
We use the prompt in \autoref{fig:method_overview} (c) to generate annotations by integrating both textual and visual information.

% === 3. AdaTaiLr =================================================================
% =================================================================================
\subsection{AdaTaiLr: Noise Control for Pre-training} \label{sec:noise_control}

Similar to LLMs in natural language processing, existing VideoLLMs are generally trained to minimize the KL Divergence (KLD) between predicted and real data distributions.
However, recent studies \cite{loss_truncation} suggest that KLD is sensitive to data noise, which is more prevalent in video-language data than language data.
To address noise in natural language processing, some researchers have explored leveraging total variation distance as a more robust metric \cite{loss_truncation, tailr}. Nevertheless, when applied to video-language pre-training, these methods such as TaiLr \cite{tailr} suffer from high estimation errors of TVD, because it can only be calculated through statistical estimation.
In this section, we propose \textbf{Ada}ptive \textbf{TaiLr} (\textbf{AdaTaiLr}), which offers improved TVD optimization with theoretical guarantees.

\subsubsection{Preliminaries for KLD, TVD, and TaiLr}

VideoLLMs are generally formulated as a conditional language generation task: given video-language context $\mathbf{x}$, a conditional generative model $p_\theta(\mathbf{y}|\mathbf{x})$ parameterized by $\theta$ is required to generate target text sequence $\mathbf{y}=(y_1, ..., y_T)$. 
Traditional training objectives minimize the KLD between predicted distribution $p_\theta$ and real data distribution $p_o$:
\begin{equation}
    \mathcal{L}_{\textmd{KLD}} = - \mathbb{E}_{\mathbf{y}\sim p_o}
    \left [
        \sum_{t=1}^T\textmd{log}p_\theta(y_t|\mathbf{y}_{<t},\mathbf{x})
    \right ]
     - H(p_o),
\end{equation}
where $H(p_o)$ is the entropy of the real data distribution $p_o$, which is often omitted during calculation since it is a constant with respect to $\theta$.

Because KLD is sensitive to noise in the training data \cite{loss_truncation} and suffers from mismatch to evaluation metric \cite{gold}, TaiLr \cite{tailr} introduces TVD from probability theory \cite{tvd_textbook} as a robust alternative to KLD:
\begin{equation}
    \mathcal{L}_{\textmd{TVD}} (p_o, p_\theta) = \frac{1}{2} \sum_{\mathbf{y}\in\mathcal{Y}}{\left | 
        p_o(\mathbf{y}|\mathbf{x}) - p_{\theta}(\mathbf{y}|\mathbf{x})
    \right |},
\end{equation}
where $\mathcal{Y}$ is the space of all possible text sequences. 
Intuitively, minimizing the $L_1$-norm of $p_o-p_\theta$ will make the model find a sparse solution \cite{ESL} of the probability distribution $p_\theta$. In other words, probability is allocated to the major part of the real data distribution, ignoring the outliers which are probably the noise.

Since directly calculating TVD by enumerating the whole $\mathcal{Y}$ space is impractical, Ji \textit{et al.} \cite{tailr} proposes to minimize the estimated upper bound of TVD using the TaiLr loss:
\begin{equation}
    \mathcal{L}_{\textmd{TaiLr}} = 
    \mathbb{E}_{\mathbf{y}\sim p_o}
    \left [
        - \sum_{t=1}^T
            \frac{p_\theta^{<t}(y_t)}{\gamma + (1 - \gamma)p_\theta^{<t}(y_t)} 
        \textmd{log}p_\theta^{<t}(y_t)
    \right ],
\end{equation}
where $p_\theta^{<t}(y_t)$ denotes $p_\theta^{<t}(y_t|\textbf{y}_{<t}, \textbf{x})$ for simplicity, and $\gamma\in[0,1]$ is a trade-off constant for the estimation error of the upper bound of TVD $\epsilon(p_o^{<t}, p_\theta^{<t})$:
\begin{equation} \label{eq:estimation_error}
    \epsilon(p_o^{<t}, p_\theta^{<t}, \gamma) = (1 - \gamma) \mathcal{L}_{\textmd{TVD}}(p_o^{<t}, p_\theta^{<t}) + 
    \gamma 2H_2(p_o^{<t}),
\end{equation}
where $H_\alpha(p)$ is the Tsallis $\alpha$-entropy:
\begin{equation}
    H_\alpha(p) = \left\{\begin{matrix}
     \frac{1}{\alpha(\alpha-1)}(1 - \sum_ip_i^\alpha),  &  \alpha\ne 1, \\
     -\sum_ip_i\textmd{log}p_i,  &  \alpha=1.
    \end{matrix}\right.
\end{equation}

In this paper, we find that a constant $\gamma$ in TaiLr is sub-optimal, as it does not necessarily minimize the estimation error in \autoref{eq:estimation_error}. 
This issue is particularly pronounced in video-language understanding due to the significant fluctuations of $H_2(p_o^{<t})$, caused by the larger sequence space of conditional probabilities $p_o^{<t}$ from diverse video inputs.
Thus, we propose an adaptive function to adjust $\gamma$ automatically.  We will show that our method surpasses TaiLr with theoretical guarantees in the next section.

\begin{algorithm}[t]
\caption{The Pseudo-code of AdaTaiLr}\label{alg:adatailr_pseudo}
\begin{algorithmic}
\Require \\
$\mathbf{P}\in\mathbb{R}^{L\times N}$: VideoLLM output, where $P_{ij}$ is the probability of the $i$-th token being ID $j$ in the vocabulary, $L$ is sequence length, and $N$ is vocabulary size \\
$\mathbf{y}\in\mathbb{R}^L$: Label, where $y_i$ is the ground-truth ID of the $i$-th token \\
$\lambda\in\mathbb{R}$: Hyper-parameter in \autoref{eq:Gamma_appo} % \\
% $\delta\in\mathbb{R}$: minimal weighting factor 
\Ensure $\mathcal{L}_{\textmd{AdaTaiLr}}$: AdaTaiLr loss in \autoref{eq:AdaTaiLr}
\end{algorithmic}
\begin{algorithmic}[1]
% \Ensure $y = x^n$
\For{$i \gets 1$ to $L$}
    \State $t_i \gets \frac{1}{2} \left( |1-p_{iy_i}| + \sum_{j=1,j\ne y_i}^Np_{ij} \right )$
    \State $h_i \gets \frac{1}{2} \left( 1-\sum_{j=1}^Np_{ij}^2 \right )$
    \State $\gamma_i \gets \frac{1}{2} + \lambda(t_i - 2h_i)$
\EndFor
\State $\mathcal{L}_{\textmd{AdaTaiLr}} \gets - \sum_{i=1}^L \frac{p_{iy_i}}{\gamma_i+(1-\gamma_i)p_{iy_i}}\textmd{log}p_{iy_i}$
\end{algorithmic}
\end{algorithm}

\subsubsection{AdaTaiLr} \label{sec:adatailr}

Intuitively, term $\mathcal{L}_{\textmd{TVD}}(p_o^{<t}, p_\theta^{<t})$ in the TaiLr estimation error \autoref{eq:estimation_error} will change during training. Thus, the optimal trade-off of $\gamma$ will change accordingly. Therefore, we can find a function instead of a constant for $\gamma$ that minimizes the estimation error, as indicated in the theorem below.

\begin{restatable}[Optimal $\gamma$]{theorem}{og}
\label{thm:og}
Given a VideoLLM model $p_\theta^{<t}(y_t|\textbf{y}_{<t}, \textbf{x})$ parameterized by $\theta$ and the real data distribution $p_o^{<t}(y_t|\textbf{y}_{<t}, \textbf{x})$. The following function:
\begin{equation} \label{eq:opt_gamma}
    \Gamma_{opt}(p_o^{<t}, p_\theta^{<t}) = \mathbbm{1} \left [
        \mathcal{L}_{\textmd{TVD}}(p_o^{<t}, p_\theta^{<t}) - 2H_2(p_o^{<t})
    \right ],
\end{equation}
where where $\mathbbm{1}[z]$ is the indicator function:
\begin{equation} \label{eq:indicator_func}
    \mathbbm{1}[z]=\begin{cases}
     1, z\ge 0, \\ 
     0, z<0,
    \end{cases}
\end{equation}
minimizes the upper bound of TaiLr estimation error $\epsilon$:
\begin{equation}
    \Gamma_{opt}(p_o^{<t}, p_\theta^{<t}) = \min_{\gamma} \epsilon(p_o^{<t}, p_\theta^{<t}, \gamma).
\end{equation}
\end{restatable}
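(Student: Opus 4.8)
The plan is to exploit the fact that the estimation error $\epsilon$ is \emph{affine} in $\gamma$, so that its minimum over the admissible range $\gamma\in[0,1]$ must be attained at one of the two endpoints, with the choice of endpoint dictated solely by the sign of the slope. Concretely, I would abbreviate $A = \mathcal{L}_{\textmd{TVD}}(p_o^{<t}, p_\theta^{<t})$ and $B = 2H_2(p_o^{<t})$, and observe that neither quantity depends on the optimization variable $\gamma$: $A$ is a function of $p_o$ and $p_\theta$ alone, while $B$ is a function of $p_o$ alone. Then \autoref{eq:estimation_error} rearranges into the single-variable affine form
\begin{equation}
    \epsilon(p_o^{<t}, p_\theta^{<t}, \gamma) = (1-\gamma)A + \gamma B = A + \gamma(B - A).
\end{equation}

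Next I would minimize this linear function of $\gamma$ over the compact interval $[0,1]$, which is immediate from its slope $B - A$: if the slope is negative the objective strictly decreases and the unique minimizer is $\gamma = 1$; if the slope is positive the minimizer is $\gamma = 0$; and if the slope vanishes every $\gamma$ is optimal. Translating the sign condition back, the slope $B - A$ is negative exactly when $A - B = \mathcal{L}_{\textmd{TVD}}(p_o^{<t}, p_\theta^{<t}) - 2H_2(p_o^{<t}) > 0$, forcing $\gamma = 1$, and positive exactly when this same quantity is negative, forcing $\gamma = 0$.

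Finally I would match this case split against the indicator convention of \autoref{eq:indicator_func}: since $\mathbbm{1}[z] = 1$ for $z \ge 0$ and $\mathbbm{1}[z] = 0$ for $z < 0$, evaluating at $z = \mathcal{L}_{\textmd{TVD}}(p_o^{<t}, p_\theta^{<t}) - 2H_2(p_o^{<t})$ reproduces exactly the endpoint minimizer derived above, establishing $\Gamma_{opt}(p_o^{<t}, p_\theta^{<t}) = \arg\min_{\gamma\in[0,1]} \epsilon(p_o^{<t}, p_\theta^{<t}, \gamma)$. There is no serious obstacle once the affine structure is recognized; the only points deserving care are bookkeeping ones. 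I would explicitly verify that $\mathcal{L}_{\textmd{TVD}}$ and $H_2$ are genuinely constant in $\gamma$ (so the objective is affine, and the minimum is at an endpoint rather than in the interior), and I would treat the tie-breaking case $A = B$ carefully: there $\epsilon$ is constant in $\gamma$, so any value is optimal and the choice $\mathbbm{1}[0]=1$ is consistent though not forced. I would also note that the statement's equality $\Gamma_{opt} = \min_\gamma \epsilon$ is an abuse of notation for the \emph{argmin}, and phrase the conclusion accordingly.
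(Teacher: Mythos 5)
Your proposal is correct and follows essentially the same route as the paper's proof: both rewrite $\epsilon$ as an affine function of $\gamma$, minimize it over $[0,1]$ at an endpoint determined by the sign of $\mathcal{L}_{\textmd{TVD}}(p_o^{<t}, p_\theta^{<t}) - 2H_2(p_o^{<t})$, and match the result to the indicator function. Your additional remarks on the tie-breaking case $A=B$ and on reading the statement's $\min$ as an $\arg\min$ are sound points of care that the paper's own proof leaves implicit.
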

We leave the full proof in Appendix \ref{sec:proof_og}. 

In the experimental calculations, the above optimal $\Gamma_{opt}$ has two issues. One is that the indicator function \autoref{eq:indicator_func} is not smooth and thus sensitive to noise. The other is that \autoref{eq:opt_gamma} contains the real data distribution $p_o$ which is unavailable during training. To solve these issues, we can get the following approximation theorem by using a smooth approximation of the indicator function, and the predicted distribution to approximate the real distribution.

\begin{restatable}[Approximation of Optimal $\gamma$]{theorem}{aog}
\label{thm:aog}
Assume that after some warm-up steps during training, there exists $D>0$ under which $\left \| p_\theta - p_o \right \|_1 \le 2D$.
Given one-hot distribution sampled from real data $e^{(w)}\sim p_o^{<t}$, the following function:
\begin{equation}
    \tilde{\Gamma}_{opt}(p_o^{<t}, p_\theta^{<t})=\frac{1}{2} + \lambda \left (
        \mathcal{L}_{\textmd{TVD}}(e^{(w)}, p_\theta^{<t}) - 2H_2(p_\theta^{<t})
    \right ),
\end{equation}
achieves the following approximation guarantee towards $\Gamma_{opt}$.
\begin{equation}
    \mathbb{E}_{w\sim p_o} \left [ \left |  
        \epsilon(p_o^{<t}, p_\theta^{<t}, \tilde{\Gamma}_{opt}) - \epsilon(p_o^{<t}, p_\theta^{<t}, \Gamma_{opt})
    \right | \right ]
    \le
    \frac{a}{\lambda} + bD,
\end{equation}
where $\lambda>0$ is a constant controlling the smoothness of the approximation of the indicator function, and $a$, $b$ are constants depending on the relationship of $\lambda$ and $D$, $|a|<\frac{9}{16}$ and $|b|<4$. 
\end{restatable}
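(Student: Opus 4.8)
The plan is to exploit that the estimation error $\epsilon(p_o^{<t}, p_\theta^{<t}, \gamma)$ from \autoref{eq:estimation_error} is \emph{affine} in $\gamma$, with slope $2H_2(p_o^{<t}) - \mathcal{L}_{\textmd{TVD}}(p_o^{<t}, p_\theta^{<t})$. Writing $z := \mathcal{L}_{\textmd{TVD}}(p_o^{<t}, p_\theta^{<t}) - 2H_2(p_o^{<t})$, a direct subtraction gives $\epsilon(\cdot,\cdot,\tilde{\Gamma}_{opt}) - \epsilon(\cdot,\cdot,\Gamma_{opt}) = -(\tilde{\Gamma}_{opt} - \Gamma_{opt})\,z$. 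Since both $\mathcal{L}_{\textmd{TVD}}$ and $2H_2$ lie in $[0,1]$ we have $|z|\le 1$, and because $p_o^{<t}, p_\theta^{<t}$ are fixed while the randomness is only in the sampled one-hot $e^{(w)}$, the factor $|z|$ pulls out of the expectation. Thus the whole claim reduces to bounding $|z|\cdot \mathbb{E}_{w}\!\left[\,|\tilde{\Gamma}_{opt} - \Gamma_{opt}|\,\right]$, and I would prove the two summands $\tfrac{a}{\lambda}$ and $bD$ separately via a triangle inequality through an intermediate quantity.

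Next I would introduce $\hat{\Gamma}_{opt} := \tfrac12 + \lambda z$ (clamped to $[0,1]$, the admissible range of $\gamma$), which is the smooth surrogate evaluated at the \emph{true} distributions, and split $|\tilde{\Gamma}_{opt}-\Gamma_{opt}| \le |\tilde{\Gamma}_{opt}-\hat{\Gamma}_{opt}| + |\hat{\Gamma}_{opt}-\Gamma_{opt}|$. The second term isolates the \textbf{smoothing error}: the clamped linear map equals $\mathbbm{1}[z]$ outside a window of half-width $\tfrac{1}{2\lambda}$ around $z=0$, so $|z|\cdot|\hat{\Gamma}_{opt}-\mathbbm{1}[z]|$ is supported only where $|z|=O(1/\lambda)$; maximizing the resulting $|z|(\tfrac12-\lambda|z|)$-type expression over that window gives a bound of the form $\tfrac{a}{\lambda}$, and carefully accounting for both sides of the window (and the value $\tfrac12$ at $z=0$) keeps the constant below $\tfrac{9}{16}$. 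This step is essentially a one-variable optimization.

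The first term isolates the \textbf{distribution-approximation error}, coming from replacing $p_o^{<t}$ by the one-hot sample $e^{(w)}$ inside the TVD and by $p_\theta^{<t}$ inside the entropy. For the entropy I would use that $p\mapsto 2H_2(p)=1-\|p\|_2^2$ is Lipschitz, so $|2H_2(p_\theta^{<t})-2H_2(p_o^{<t})|\le \|p_o^{<t}-p_\theta^{<t}\|_1 \le 2D$. For the TVD I would use the identity $\mathcal{L}_{\textmd{TVD}}(e^{(w)},p_\theta^{<t}) = 1-p_\theta^{<t}(w)$ (matching line~2 of \autoref{alg:adatailr_pseudo}) together with $p_o^{<t} = \mathbb{E}_{w}[e^{(w)}]$ and convexity of TVD in its first argument; Jensen then yields $\mathbb{E}_w[\mathcal{L}_{\textmd{TVD}}(e^{(w)},p_\theta^{<t})] \ge \mathcal{L}_{\textmd{TVD}}(p_o^{<t},p_\theta^{<t})$, and the subtracted $2H_2(p_\theta^{<t})$ is precisely the correction designed to offset the leading part of this convexity gap. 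After the warm-up hypothesis $\|p_\theta^{<t}-p_o^{<t}\|_1\le 2D$, I would argue the residual gap is $O(D)$; propagating it through the $\lambda$-Lipschitz (clamped) surrogate and the prefactor $\lambda|z|$, and using $|z|\le 1$, produces the $bD$ term with $|b|<4$. The coupling of the Lipschitz constant $\lambda$ with the gap scale $D$ is why $a,b$ depend on the relationship between $\lambda$ and $D$.

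The hardest part is precisely this last step: the $\lambda$-factor from differentiating the smooth surrogate threatens to amplify the single-sample fluctuations of $\mathcal{L}_{\textmd{TVD}}(e^{(w)},p_\theta^{<t})$, so the crux is to show that the entropy correction cancels the otherwise $O(1)$ convexity bias of the one-hot estimator and that the surviving error is genuinely $O(D)$ rather than $O(1)$. Establishing this cancellation quantitatively — and confining the $\lambda$-sensitive (non-clamped) region to exactly where $|z|=O(1/\lambda)$ so that no blow-up occurs — is where the near-convergence assumption is indispensable and where the constants $9/16$ and $4$ are finally pinned down.
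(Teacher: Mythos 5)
Your opening identity matches the paper's and is lossless: since $\epsilon$ is affine in $\gamma$, for each sampled $w$ one has $\epsilon(p_o^{<t},p_\theta^{<t},\tilde\Gamma_{opt})-\epsilon(p_o^{<t},p_\theta^{<t},\Gamma_{opt})=(\Gamma_{opt}-\tilde\Gamma_{opt})\,z$, so the theorem's left-hand side equals exactly $|z|\,\mathbb{E}_w[|\tilde\Gamma_{opt}-\Gamma_{opt}|]$; your smoothing-error step (the one-variable optimization over the window $|z|\le\tfrac{1}{2\lambda}$, cf.\ Lemma~\ref{lem:error_smooth}) is also fine. The genuine gap is the remaining distribution-approximation term, $|z|\,\mathbb{E}_w[|f(\tilde z)-f(z)|]$, with the absolute value \emph{inside} the expectation: this quantity is generically $\Omega(1)$ and cannot be bounded by $\tfrac{a}{\lambda}+bD$. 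The reason is that $\tilde z$ depends on the sampled token only through $\mathcal{L}_{\textmd{TVD}}(e^{(w)},p_\theta^{<t})=1-p_\theta^{<t}(w)$, whose spread across $w\sim p_o^{<t}$ is order one and does not shrink as $D\to0$ (it persists even when $p_\theta=p_o$ exactly). Concretely, with a two-token vocabulary and $p_o^{<t}=p_\theta^{<t}=(0.9,0.1)$ (so $D$ arbitrarily small), the two possible values of $\tilde z$ differ by $0.8$ and straddle $0$; for large $\lambda$ the surrogate $f(\tilde z)$ is then a Bernoulli variable equal to $0$ with probability $0.9$ and $1$ with probability $0.1$, so $\mathbb{E}_w[|f(\tilde z)-f(z)|]\ge 0.1$ whatever $f(z)$ is, while $|z|\approx 0.18$ --- exceeding $\tfrac{a}{\lambda}+bD$ once $\lambda$ is large and $D$ small. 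The cancellation you invoke in your final paragraph (Jensen plus the entropy correction offsetting the convexity bias of the one-hot estimator) is a statement about the \emph{bias} $|\mathbb{E}_w[\tilde z]-z|$ --- exactly what the paper's Lemma~\ref{lem:diff_of_z} bounds by $4D$ --- but bias control is powerless once the absolute value sits inside the expectation: $\mathbb{E}|X|\ge|\mathbb{E}X|$, and the gap between the two is the per-sample fluctuation, which no choice of the entropy offset can remove.

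The paper's proof is organized precisely to keep that fluctuation out of sight. It first drops the outer absolute value using optimality of $\Gamma_{opt}$ (the difference is pointwise nonnegative), splits the \emph{signed} quantity as $(\mathbbm{1}[z]-f(z))z+\mathbb{E}_w[(f(z)-f(\tilde z))z]$, and in the second piece moves the expectation \emph{inside} $f$, so that only $f(\mathbb{E}_w[\tilde z])$ --- and hence only the bias bound of Lemma~\ref{lem:diff_of_z} --- ever enters (Lemma~\ref{lem:error_distribution}). Be aware, though, that this exchange $\mathbb{E}_w[f(\tilde z)]=f(\mathbb{E}_w[\tilde z])$ is justified there by calling $f$ ``linear,'' while $f$ is only piecewise linear with clamping; in the example above the two sides differ by a constant. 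So your plan breaks at exactly the point where the paper's own argument is weakest: the obstruction you flagged as the ``hardest part'' is real, your (exact) decomposition makes it unavoidable, and the bias-cancellation idea cannot repair it. To follow the paper you would have to work with the signed expression and justify commuting the expectation with $f$, rather than bound $\mathbb{E}_w[|\tilde\Gamma_{opt}-\Gamma_{opt}|]$ directly.
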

The full proof is presented in Appendix \ref{sec:proof_aog}.

Therefore, in our experiment, we use the AdaTaiLr loss:
\begin{equation} \label{eq:AdaTaiLr}
    \mathcal{L}_{\textmd{AdaTaiLr}} = 
    \mathbb{E}_{\mathbf{y}\sim p_o}
    \left [
        - \sum_{t=1}^T
            \frac{p_\theta^{<t}(y_t)}{\Gamma + (1 - \Gamma)p_\theta^{<t}(y_t)} 
        \textmd{log}p_\theta^{<t}(y_t)
    \right ],
\end{equation}
where
\begin{equation} \label{eq:Gamma_appo}
    \Gamma = \frac{1}{2} + \lambda \left (
        \mathcal{L}_{\textmd{TVD}}(e^{(y_t)}, p_\theta^{<t}) - 2H_2(p_\theta^{<t})
    \right ),
\end{equation}
and $\lambda$ is a fixed constant. During training, we clamp $\Gamma$ between $[0, 1]$. To counter the negative effect of random prediction at the early training stage, we set a threshold $\delta$ as the lower bound of the weighting factor before $\textmd{log}p_\theta^{<t}(y_t)$. We provide a pseudo code of AdaTaiLr in Algorithm \autoref{alg:adatailr_pseudo}.

% === 4. SFT =================================================================
% =================================================================================
\subsection{Pre-training and Supervised Fine-tuning} \label{sec:pretrain_sft_detail}

The pre-training and supervised fine-tuning procedures are the same in both the initial and interactive stages.

During pre-training, we optimize the VideoLLM to generate target textual annotations from video inputs using the AdaTaiLr loss. The video-text pairs used for this purpose are sourced from the refined dataset from \autoref{sec:anno_refine}.

For supervised fine-tuning, we optimize the VideoLLM using the standard LM loss since the datasets in this stage are of high quality. We employ two datasets: VideoChatGPT-100K \cite{video_chatgpt_2023} for general video instructional following and asrRefine-10K for ASR dataset refinement capabilities. We curate the asrRefine-10K dataset by randomly selecting 10K video-caption pairs from the training splits of video captioning datasets Youcook2 \cite{youcook_2016}, ActivityNet-Captions \cite{anet_caption_2017}, and QV-Highlights \cite{qvhighlights_2021}, along with their ASR transcripts, titles, and tags. We construct question-answer pairs using the prompt format shown in \autoref{fig:method_overview} (c), with human captions serving as the answers. To prevent data leakage, we do not select videos from our evaluation datasets MSR-VTT \cite{msrvtt_2016}, MSVD \cite{msvd_2011}, and VATEX \cite{vatex_2019}.

\section{Experiments}
% === IV. EXPERIMENTS ==============================================================
% =================================================================================

This section is organized as follows. In \autoref{sec:exp_dataflywheel}, we conducted experiments on ASR dataset refinement, including baseline comparison, ablation studies, and sensitivity analysis.
To further understand the superiority of the VidDF framework, we performed an in-depth analysis in \autoref{sec:in_depth_analysis}.
Finally, we integrated our refined dataset with existing models to validate its improvements in video question answering (\autoref{sec:videoqa_exps}) and text-video retrieval (\autoref{sec:videoret_exps}).

\subsection{Main Results of DataFlywheel} \label{sec:exp_dataflywheel}

\subsubsection{Experimental Settings} \label{sec:dataflywheel_exp_settings}

\textbf{Quantify the Data Trinity}. We define \textit{Quantity} as the text annotations collected within a set budget, \textit{Diversity} as the number of unique tokens in the dataset, and \textit{Quality} as annotation accuracy. For precise definitions and calculation details, refer to Appendix \ref{sec:sup_imp_trinity}.

\noindent\textbf{ASR dataset and the initial stage}. 
Since our contributions focus on iterative refinement, we kept the initial refinement the same as InternVid \cite{wang_internvid_2023}, and used InternVid-10M-FLT \cite{wang_internvid_2023} as the ``Refined dataset 0'' in \autoref{fig:method_overview} (a). We filtered out clips shorter than 2 seconds and got 6,665,285 clips downloaded successfully. The number of clips used in pre-training varies among experiments. Unless otherwise specified, we used 770K videos to ensure consistency with other baselines such as Video-LLaVA \cite{video_llava}.

\noindent\textbf{Models and implementation details}
During pre-training, the ViT backbone of the video encoder and the LLM are frozen, with only the temporal attention layers and the connector being fine-tuned. We set batch size and learning rate to be 256 and 1e-3, and scaled the learning rate of the temporal attention module by 0.1 to stabilize training.
During SFT, the entire video encoder is frozen, with only the connector and LLM being fine-tuned. We set batch size and learning rate to be 128 and 2e-5.
Our model was trained using 16 A800 GPUs.
%
% Appendix \ref{sec:sup_exp_setting} presents the detailed hyperparameter settings.

% The text annotations of InternVid differ between experiments, \textbf{Base}: original annotations, \textbf{base+AdaTaiLr}: , \textbf{base+VidDataFlywheel}: .

\noindent\textbf{Evaluation datasets and metrics.} 
We adopted different datasets and metrics for evaluating pre-training and SFT models. 
For pre-training, we evaluated models on video captioning datasets MSR-VTT \cite{msrvtt_2016}, MSVD \cite{msvd_2011}, and VATEX \cite{vatex_2019} using CIDEr \cite{vedantam_cider_2015} metric. This stage focuses on aligning vision and text modalities, and thus the video captioning dataset can effectively assess the quality of pre-trained models. 
% To mitigate the influence of language style from different pre-training datasets, we perform few-shot video captioning instead of zero-shot.
%
For SFT, we evaluated models on traditional open-ended Video Question Answer (VideoQA) benchmarks including MSVD-QA \cite{msvd_2011}, MSRVTT-QA \cite{msrvtt_2016}, ActivityNet-QA \cite{anet_caption_2017}, and TGIF-QA \cite{li_tgifqa_2016}. Considering that ground-truth answers in these benchmarks are single-word, we followed Maaz et al., \cite{video_chatgpt_2023} to prompt GPT-3.5 for evaluating the accuracy. Note that we adopted \texttt{GPT-3.5-turbo-0125} since the earlier versions will be deprecated soon. 
% More details on GPT evaluation are presented in Appendix \ref{sec:sup_exp_setting}.

\begin{table*}[t]

\caption{
Relationship between PT and SFT evaluation results.
}

\label{tab:pt_sft_eval_rel}
\centering
% \resizebox{\textwidth}{!}{

\begin{tabular}{ccccccccccc}
\hline
\multirow{3}{*}{\# PT Data} & \multicolumn{4}{c}{PT Evaluation} &  & \multicolumn{5}{c}{SFT Evaluation}        \\ \cline{2-5} \cline{7-11} 
                            & MSRVTT  & MSVD   & VATEX  & Mean  &  & MSVD & MSRVTT & ActivityNet & TGIF & Mean \\ \cline{2-5} \cline{7-11} 
                            & \multicolumn{4}{c}{Caption CIDEr} &  & \multicolumn{5}{c}{QA Accuracy}           \\ \hline
85K                         & 60.2    & 136.4  & 57.5   & 84.7  &  & 71.4 & 58.3   & 46.5        & 70.4 & 61.6 \\
256K                        & 61.2    & 141.5  & 60.6   & 87.8  &  & 72.7 & 60.7   & 47.5        & 70.4 & 62.8 \\
770K                        & 62.1    & 148.4  & 62.6   & 91.0  &  & 72.9 & 61.3   & 47.8        & 71.8 & 63.5 \\
2310K                       & 62.8    & 151.0  & 63.9   & 92.6  &  & 72.6 & 61.4   & 49.7        & 72.2 & 64.0 \\ \hline
\end{tabular}
% }
\end{table*}

\subsubsection{Relationship between PT \& SFT Evaluation Results}

The whole training process consists of PT and SFT stages, which is time-consuming. If an early estimation method for final SFT performance existed, it could significantly reduce the required time. 
To investigate this, we varied the amount of video PT data (using “Refined dataset 0”) and recorded both PT and SFT evaluation results. As illustrated in \autoref{tab:pt_sft_eval_rel}, the mean SFT evaluation results correlate with the mean PT evaluation results across a wide range of PT data. Additionally, the performance improvements observed in each dataset are consistent.
These observations suggest that improvements in PT evaluation are indicative of improvements in SFT evaluation. Consequently, for most experiments in this subsection, we reported only the PT evaluation results.

\subsubsection{Comparison with Data Refinement Baselines}

\begin{table}[t]

\caption{
Comparison with other refined datasets.
}

\label{tab:data_refine_comp}
\centering
\resizebox{\linewidth}{!}{

\begin{tabular}{ccccc}
\hline
                                     & MSRVTT  & MSVD   & VATEX  & Mean  \\ \cline{2-5} 
\multirow{-2}{*}{Dataset}            & \multicolumn{4}{c}{Caption CIDEr} \\ \hline
{\color[HTML]{1F2329} Valley \cite{luo_valley_2023}}    & 59.5          & 142.0          & 58.5          & 86.6          \\
VAST \cite{chen_vast_2023}           & 59.8    & 138.0  & 56.5   & 84.8  \\
{\color[HTML]{1F2329} InternVid-10M-FLT \cite{wang_internvid_2023}} & 61.5          & 146.9          & 62.0          & 90.1          \\
Panda-70M \cite{chen_panda_70m_2024} & 62.2    & 144.1  & 62.0   & 89.4  \\ \hline
\textbf{Ours}                                                       & \textbf{63.6} & \textbf{150.5} & \textbf{64.6} & \textbf{92.9} \\ \hline
\end{tabular}

}
\end{table}

\begin{table}[t]

\caption{
Comparison with other noise control baselines.
}

\label{tab:noise_control_comp}
\centering
% \resizebox{\columnwidth}{!}{

\begin{tabular}{ccccc}
\hline
\multirow{2}{*}{Method}                & MSRVTT        & MSVD           & VATEX         & Mean          \\ \cline{2-5} 
                                       & \multicolumn{4}{c}{Caption CIDEr}                              \\ \hline
None                                   & 61.5          & 146.9          & 62.0          & 90.1          \\
Filtering \cite{nguyen_improving_2023} & 61.9          & 148.8          & 60.7          & 90.5          \\
NCR \cite{huang_ncr_2021}              & 61.3          & 148.9          & 61.1          & 90.4          \\
CTPR \cite{feng_ctpr_2023}             & 60.7          & 144.2          & 60.8          & 88.6          \\
Loss Truncation \cite{loss_truncation} & 60.8          & 146.6          & 62.0          & 89.8          \\
TaiLr \cite{tailr}                     & 62.2          & 148.7          & 61.9          & 90.9          \\
ENT \cite{li_ENT_2023}                 & 61.6          & 147.5          & 62.4          & 90.5          \\ \hline
\textbf{AdaTaiLr}                      & \textbf{63.1} & \textbf{150.3} & \textbf{62.6} & \textbf{92.0} \\ \hline
\end{tabular}

% }

\end{table}

To validate the effectiveness of our VidDF framework for dataset refinement, we compared our refined dataset after all two stages with other refined video-language datasets: Valley \cite{luo_valley_2023}, VAST-27M \cite{chen_vast_2023}, InternVid \cite{wang_internvid_2023}, and Panda-70M \cite{chen_panda_70m_2024}. Specifically, we sampled 770K (matching Valley \cite{luo_valley_2023}) pairs from each dataset as the pre-training dataset, while keeping other settings in \autoref{sec:dataflywheel_exp_settings} unchanged.

As illustrated in \autoref{tab:data_refine_comp}, our dataset consistently outperforms better in all PT evaluation datasets, achieving 3.1\% improvements over the current state-of-the-art dataset. Furthermore, experiments with a scaled number of training data show even greater improvements, illustrated in \autoref{fig:ablation_studies}.
Comparing other datasets yields interesting findings. Notably, both VAST \cite{chen_vast_2023} and Panda-70M \cite{chen_panda_70m_2024} are refined based on HD-VILA \cite{xue_hd-vila_2022} dataset, but their performance varies 5\%. There are major differences between them: 1) VAST has no noise control, while Panda-70M filters low-confidence video-text pairs based on a pre-trained video-text retrieval model. 2) Panda-70M integrates pseudo captions from multiple VLMs, enhancing caption diversity. Our dataset also benefits significantly from noise control and diversity, as discussed in \autoref{sec:iter_refine_done_right} and \autoref{sec:in_depth_analysis}, respectively.

\subsubsection{Comparison with Noise Control Baselines}

% (detailed in Appendix \ref{sec:sup_exp_setting})
To validate the superiority of our proposed noise control method, AdaTaiLr, we replaced it with other baselines, optimized their hyper-parameters, and kept all other experimental settings constant. For the similarity filtering method \cite{nguyen_improving_2023}, we used UMT \cite{li_umt_2023} as the filtering model.

As shown in \autoref{tab:noise_control_comp}, AdaTaiLr achieves state-of-the-art performance among all noise control methods with 1.2\% improvements. 
Notably, CTPR \cite{feng_ctpr_2023} significantly decreases performance, likely because its loss hypothesis (a mixture of three Gaussians) does not hold in real distributions. Loss Truncation \cite{loss_truncation} slightly decreases performance due to its main assumption that higher loss indicates larger noise, which is not supported. Evidence for these invalid hypotheses is provided in \autoref{fig:internvid_sample_loss} in the Appendix.
Additionally, we found that similarity filtering \cite{nguyen_improving_2023} can enhance performance. However, it has the drawback of filtering out 60\% of annotations, thereby reducing dataset diversity. Thus, its performance improved in the smallest dataset MSVD \cite{msvd_2011}, but decreased in the largest dataset VATEX \cite{vatex_2019}.

\subsubsection{Iterative Refinement Done Right: Insights from Ablation Studies} \label{sec:iter_refine_done_right}

\begin{figure}[t]
    \centering
    \includegraphics[width=0.66\linewidth]{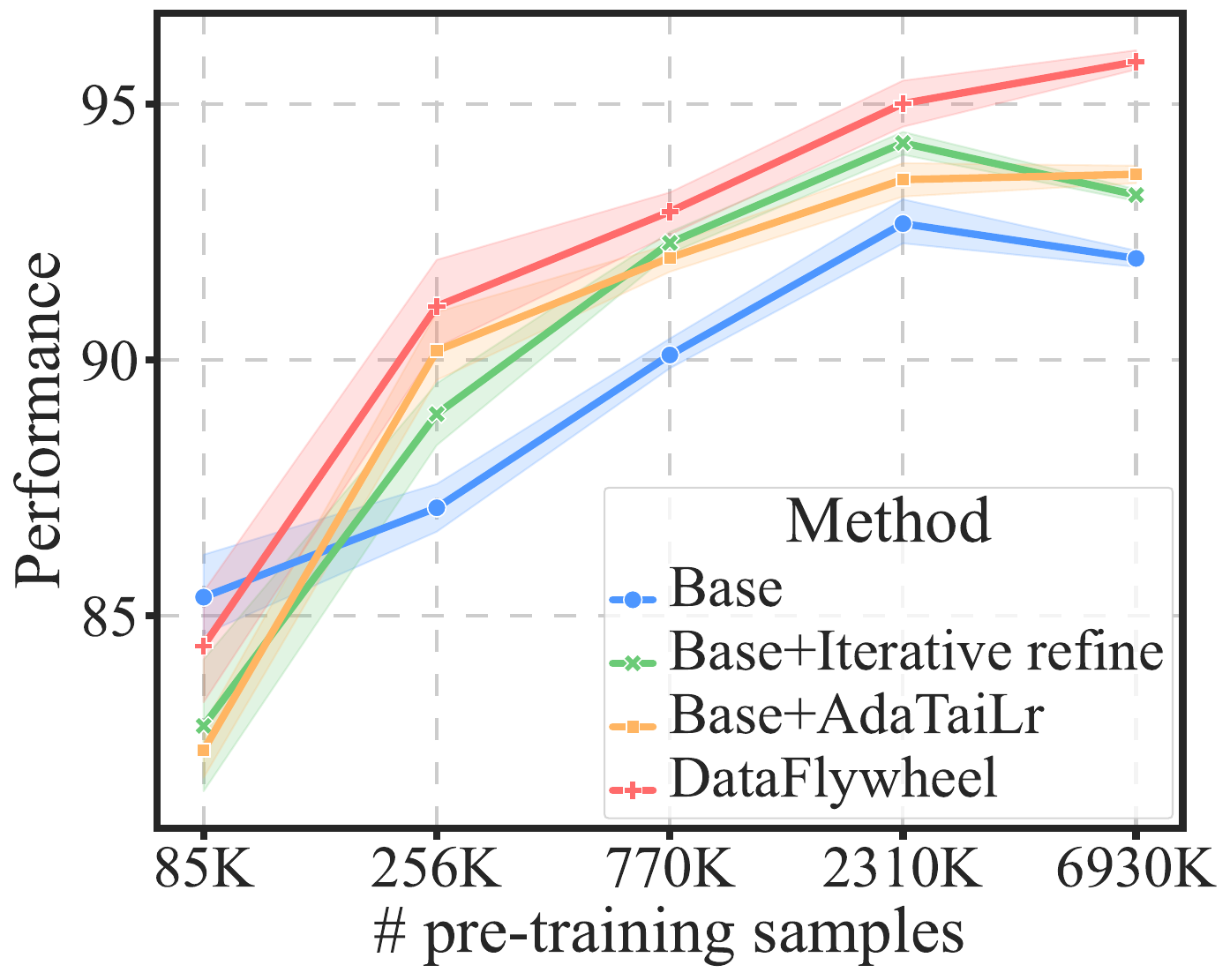}
    % \vspace{-2em}
    \caption{Ablation studies of video dataflywheel framework.}
    % \vspace{-1em}
    \label{fig:ablation_studies}
\end{figure}

We performed ablation studies to evaluate the effectiveness of the noise control method AdaTaiLr and iterative refinement. Additionally, we discovered intriguing cooperative effects between them. We followed the experimental settings described in \autoref{sec:dataflywheel_exp_settings}, except for modifying the text annotations of the refined dataset into four variants: (i) \textit{Base}: original InternVid annotations w/o AdaTaiLr during pre-training, (ii) \textit{Base+Iterative refine}: annotations without AdaTaiLr, (iii) \textit{Base+AdaTaiLr}: original InternVid annotations w/ AdaTaiLr during pre-training, and (iv) \textit{DataFlywheel}: the final refined annotations. 
The results, illustrated in \autoref{fig:ablation_studies}, yield three key observations:
\begin{itemize}
    \item \textbf{Iterative refinement alone does not scale.} Compared with \textit{Base} setting, \textit{Base+Iterative refine} can significantly improve performance. However, when we scaled the number of training data, the performance dropped significantly. So did the \textit{Base} setting. We hypothesize this is due to error accumulation from data noise.
    \item \textbf{Iterative refinement + noise control = better scalability.} When iterative refinement is combined with noise control in \textit{Base+AdaTaiLr}, performance improves as the training data scales.
    \item \textbf{VidDF performs poorly under insufficient data.} When the pre-training dataset is very small (85K), all three variants perform worse than the \textit{Base} setting. 
    For \textit{Base+AdaTaiLr}, this is likely because the convergence hypothesis does not hold during the early stages of training. As stated in \autoref{thm:aog}, some warm-up steps are required so that there exists $D>0$ under which $\left \| p_\theta - p_o \right \|_1 \le 2D$.
    For \textit{Base+Iterative refine}, this is probably due to the lack of diversity in synthetic captions when the data size is small. This will be detailed in \autoref{sec:analysis_dataflywheel}.
\end{itemize}

\subsubsection{Sensitivity Analysis}

\begin{figure}[t]
    \centering
    \includegraphics[width=0.99\linewidth]{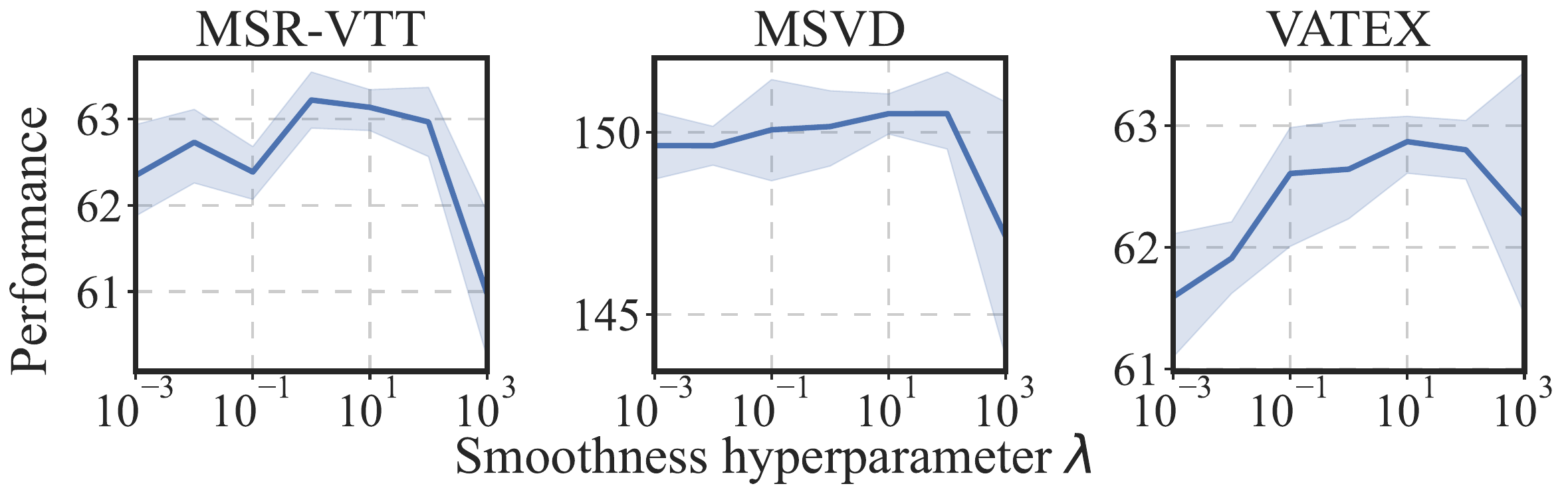}
    % \vspace{-2em}
    \caption{Sensitivity analysis of $\lambda$ controlling the smoothness of approximation in AdaTaiLr.}
    % \vspace{-1em}
    \label{fig:sensitivity_analysis_lambda}
\end{figure}

\begin{figure}[t]
    \centering
    \includegraphics[width=0.99\linewidth]{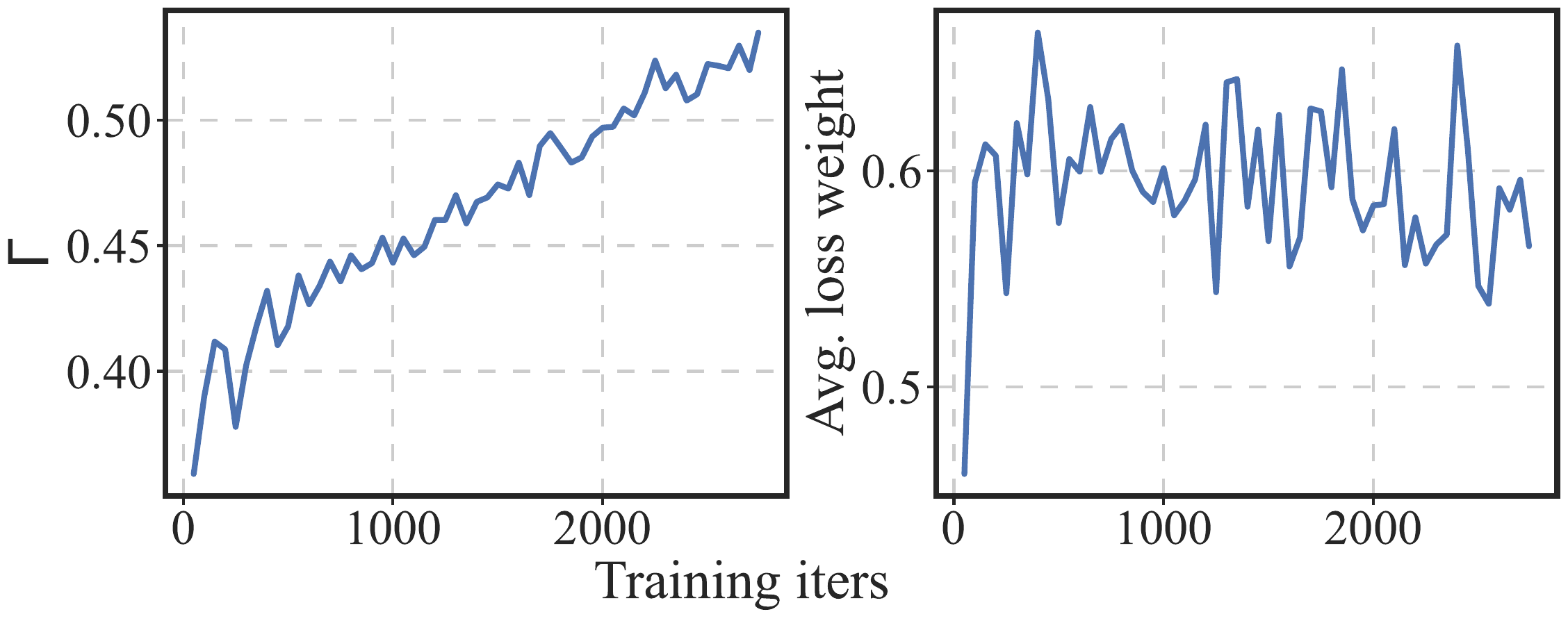}
    % \vspace{-2em}
    \caption{Sensitivity analysis of training dynamics of AdaTaiLr.}
    % \vspace{-1em}
    \label{fig:sensitivity_analysis_training}
\end{figure}

We conducted experiments to analyze the sensitivity of our methods on both hyper-parameters and training dynamics.

We first tuned the hyper-parameter $\lambda$ controlling the smoothness of approximation in \autoref{thm:aog} across a wide range, as illustrated in \autoref{fig:sensitivity_analysis_lambda}. The performance initially increases with $\lambda$, then saturates, and finally decreases sharply when $\lambda$ becomes very large. We explain such phenomenon below:
\begin{itemize}
    \item The increase-and-saturation phenomenon aligns with \autoref{thm:aog}, in which the approximation error is given by $\frac{a}{\lambda}+bD$.
    \item The sharp performance drop at high $\lambda$ values likely results from training instability, leading to a large $\left \| p_\theta - p_o \right \|_1$ and consequently a larger $D$ and approximation error.
\end{itemize}

We then analyzed the training dynamics, i.e., how trade-off factor $\Gamma$ in \autoref{eq:Gamma_appo} and LM loss weight in \autoref{eq:AdaTaiLr}
\begin{equation}
    \frac{p_\theta^{<t}(y_t)}{\Gamma + (1 - \Gamma)p_\theta^{<t}(y_t)}
\end{equation}
changes throughout training. The results, illustrated in \autoref{fig:sensitivity_analysis_training}, show the average values during training. We can gain two observations:
\begin{itemize}
    \item The trade-off factor $\Gamma$ increases during training. This aligns with our theory that $(1-\Gamma)$ controlling the bias and $\Gamma$ controlling the variance. Early in training, the model should focus on reducing bias, while later it should focus on reducing variance.
    \item Additionally, we observed that the loss weight stabilized within a certain range after the initial warm-up stage. This is expected, as the loss weight reflects the ratio of noisy data, which should remain stable throughout the training process.
\end{itemize}

\subsection{In-depth Analysis on DataFlywheel} \label{sec:in_depth_analysis}

\begin{figure}[t]
    \centering
    \includegraphics[width=0.99\linewidth]{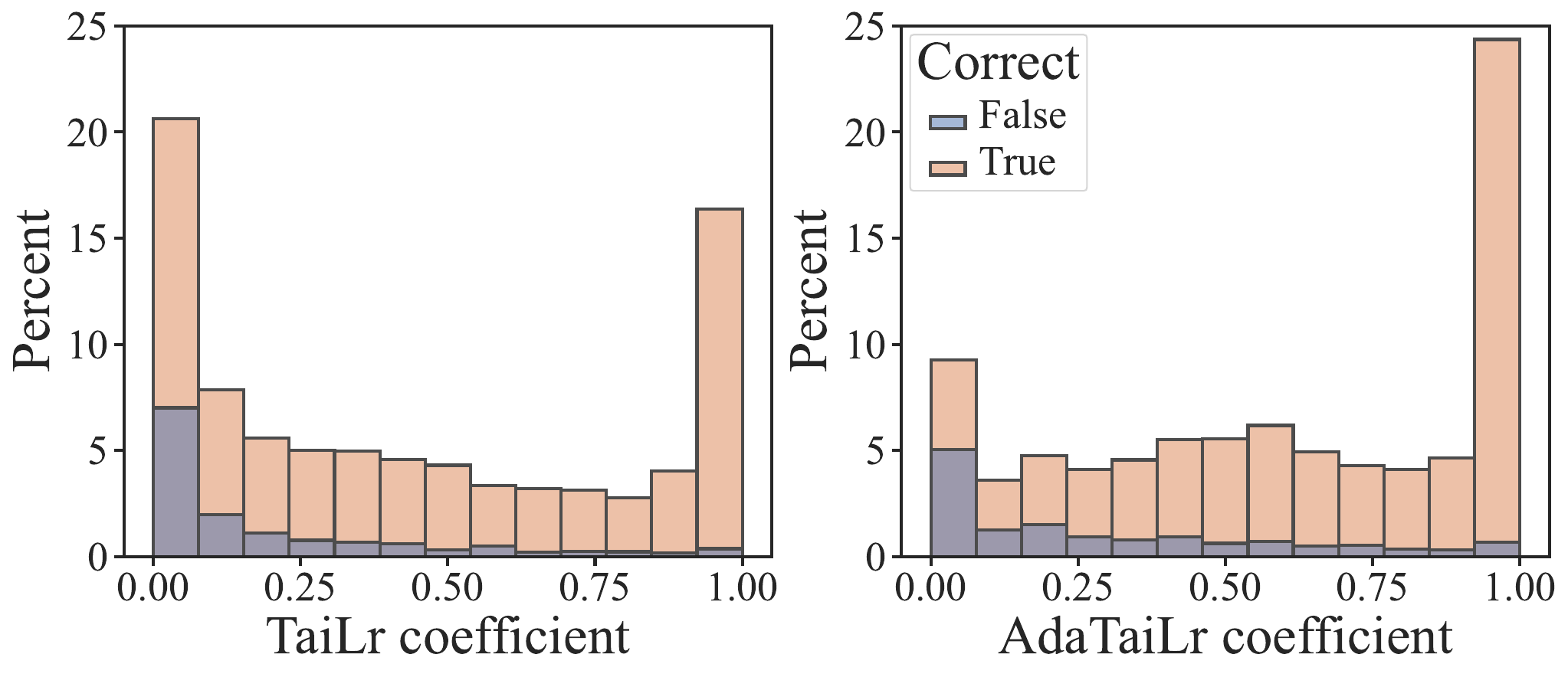}
    % \vspace{-2em}
    \caption{Comparison between distributions of the re-weight coefficient of TaiLr \cite{tailr} and AdaTaiLr Loss. AdaTaiLr can better distinguish correct text annotations.}
    % \vspace{-1em}
    \label{fig:reweight_score_comp}
\end{figure}

In this section, we delved deeper into our data flywheel framework to understand the mechanisms that make AdaTaiLr and the entire VidDF framework effective.

\subsubsection{Analysis on AdaTaiLr}

We conducted experiments to understand the working mechanism of AdaTaiLr and to determine why it outperforms TaiLr \cite{tailr} except for the theoretical enhancements.

\textbf{AdaTaiLr more effectively distinguishes correct annotations.} Given that AdaTaiLr is a loss-weighting method, we investigated how well its re-weighting coefficient corresponds with the correctness of annotations. We randomly sampled 300 video-text pairs excluded from the training set, annotated the token-level correctness of text annotations, and calculated their re-weighting coefficients for both AdaTaiLr and TaiLr \cite{tailr}. As shown in \autoref{fig:reweight_score_comp}, the re-weighting coefficient of AdaTaiLr more accurately reflects the correctness of annotations.

\begin{figure}[t]
    \centering
    \includegraphics[width=0.99\linewidth]{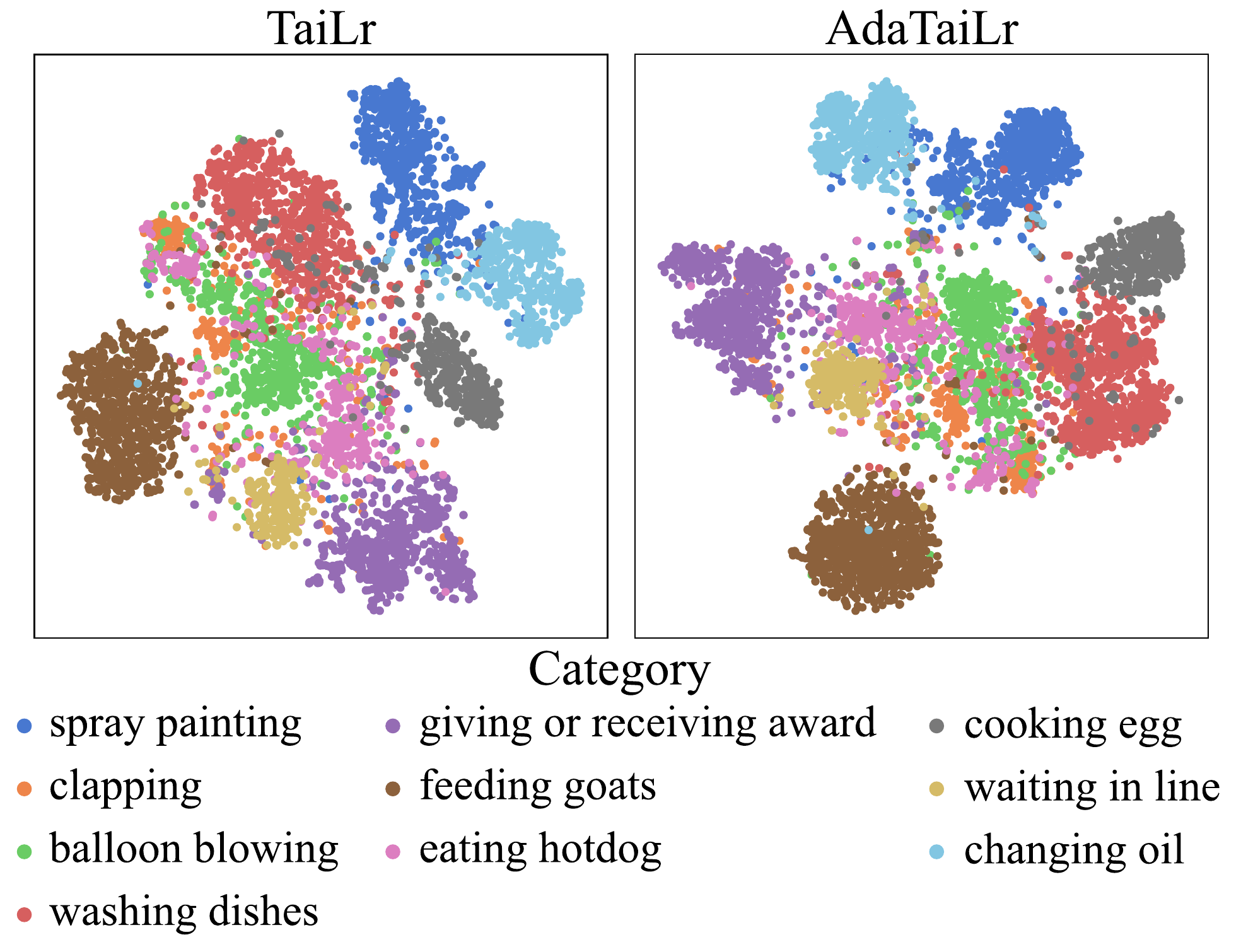}
    % \vspace{-2em}
    \caption{Visualization of video representation quality with and without AdaTaiLr as noise control.}
    % \vspace{-1em}
    \label{fig:tsne_vis}
\end{figure}

\textbf{AdaTaiLr enhances video representation learning.} To examine how AdaTaiLr influences video representation, we calculated video embeddings of all videos from 10 randomly sampled categories in the Kinetics-400 \cite{carreira_kinetics_2017} training set using the vision encoder. We sampled 8 frames per video and averaged the patch embeddings across all frames and patches to obtain the video embedding. 
The results, illustrated in \autoref{fig:tsne_vis}, show that AdaTaiLr produces more cohesive intra-class representations. For instance, the clusters for "feeding coats," "washing dishes," and "spray painting" are denser with AdaTaiLr compared to TaiLr. Additionally, AdaTaiLr yields more distinctive inter-class representations, with clearer boundaries between categories such as "spray painting" and "changing oil," or "waiting in line" and "giving or receiving award."

\subsubsection{Analysis on DataFlywheel} \label{sec:analysis_dataflywheel}

\begin{figure}[t]
    \centering
    \includegraphics[width=0.99\linewidth]{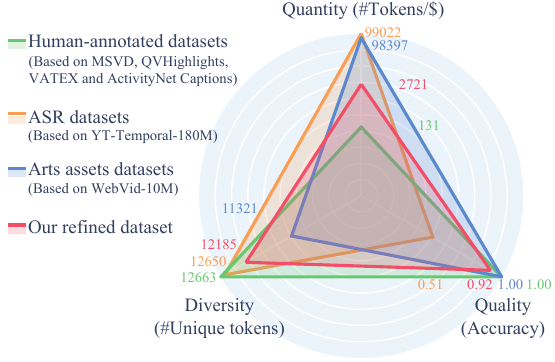}
    % \vspace{-2em}
    \caption{Data trinity comparison between existing pre-training datasets and our refined dataset.
    For datasets produced by other refinement baselines, they are close in quantity and we compare their quality and diversity in \autoref{tab:data_refine_comp} and \autoref{fig:diversity_comp}, respectively.
    }
    % \vspace{-1em}
    \label{fig:impossible_trinity}
\end{figure}

\begin{figure}[t]
    \centering
    \includegraphics[width=0.99\linewidth]{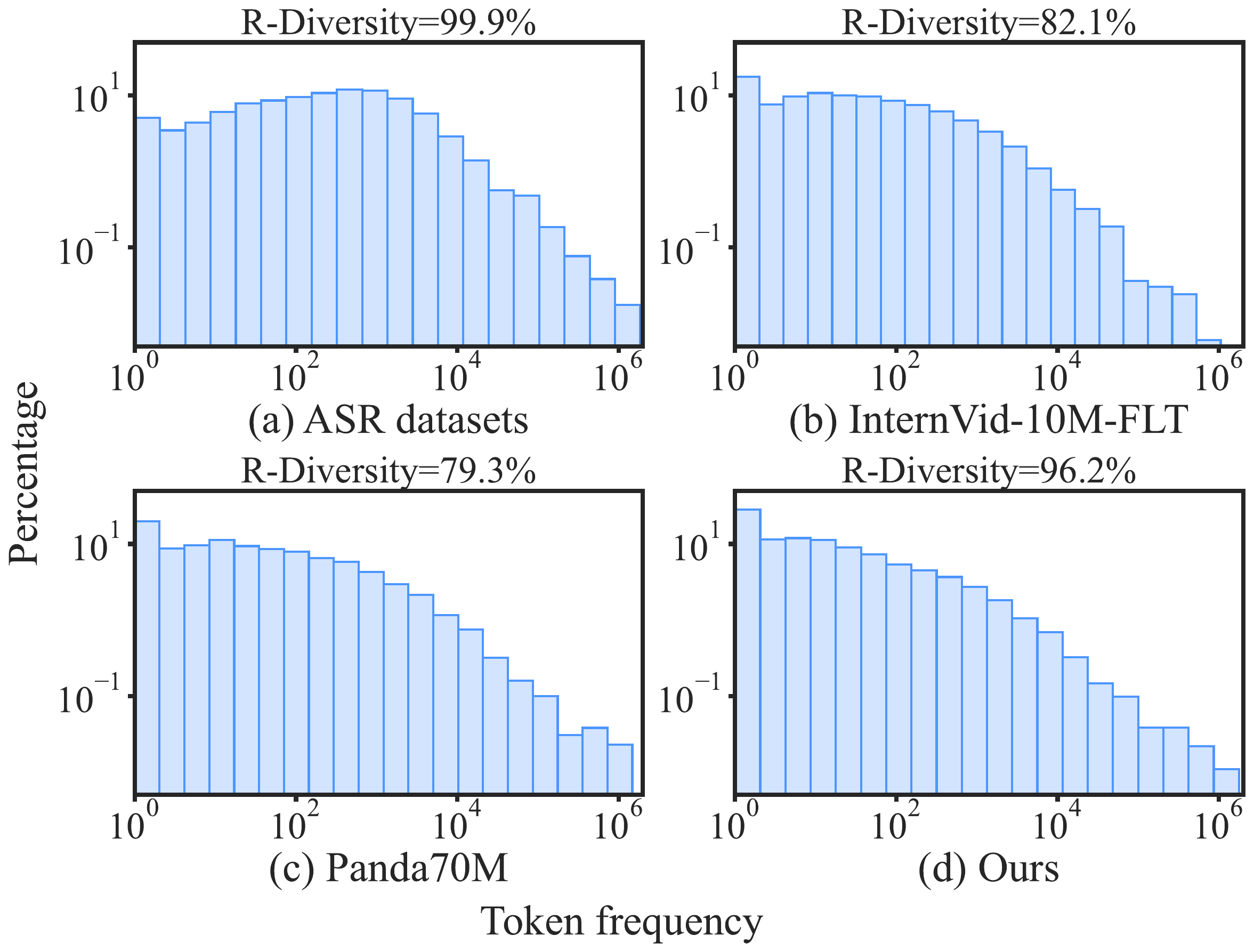}
    % \vspace{-2em}
    \caption{Comparison between dataset diversity among refined datasets.}
    % \vspace{-1em}
    \label{fig:diversity_comp}
\end{figure}

We conducted experiments to understand why the proposed VidDF framework is more effective than other data refinement methods.

\textbf{VidDF breaks the data impossible trinity.} We evaluated the data trinity of our refined dataset, as shown in \autoref{fig:impossible_trinity}. Compared to the original ASR dataset, our dataset improves the data quality dramatically with little loss of diversity and quantity. In other words, our method resolves data impossible trinity.

\textbf{DataFlywheel ensures quality with less diversity loss.} We ploted the distribution of token frequency in \autoref{fig:diversity_comp}. Our dataset has the largest diversity (96.2\% vs 82.1\%) among refined dataset baselines. This is attributed to our AdaTaiLr noise control methods, which make the most use of all annotations. In contrast, filtering out low-similarity annotations using retrieval models like InternVid \cite{wang_internvid_2023} or Panda70M \cite{chen_panda_70m_2024} significantly reduces diversity. This phenomenon is also observed in image-text pretraining \cite{nguyen_improving_2023}.

\textbf{Synthetic captions have long-tailed distribution.}We observed that all refined datasets with synthetic captions (\autoref{fig:diversity_comp} (b)-(d)) exhibit a long-tailed distribution in token frequency, unlike the uni-modal distribution of real datasets (\autoref{fig:diversity_comp} (a)). This partially explains our observation in \autoref{sec:iter_refine_done_right} that DataFlywheel performs poorly with insufficient data. Without enough data, the dataset suffers from low diversity due to the long-tailed distribution.

\subsection{Video Question Answering} \label{sec:videoqa_exps}

\begin{table*}[t]

\caption{
Performance comparison in video question answering benchmark.
}

\centering
\label{tab:videoqa_results}
% \resizebox{\textwidth}{!}{

\begin{tabular}{ccclcclcclcclcc}
\hline
 &
   &
   &
   &
  \multicolumn{2}{c}{MSVD} &
   &
  \multicolumn{2}{c}{MSRVTT} &
   &
  \multicolumn{2}{c}{ActivityNet} &
   &
  \multicolumn{2}{c}{TGIF} \\ \cline{5-6} \cline{8-9} \cline{11-12} \cline{14-15} 
\multirow{-2}{*}{Method} &
  \multirow{-2}{*}{\begin{tabular}[c]{@{}c@{}}Vision\\ Encoder\end{tabular}} &
  \multirow{-2}{*}{\begin{tabular}[c]{@{}c@{}}LLM\\ Size\end{tabular}} &
   &
  Acc. &
  Sco. &
   &
  Acc. &
  Sco. &
   &
  Acc. &
  Sco. &
   &
  Acc. &
  Sco. \\ \hline
FrozenBiLM \cite{yang_frozenbilm_2022} &
  ViT-L &
  1.3B &
   &
  33.8 &
  - &
   &
  16.7 &
  - &
   &
  25.9 &
  - &
   &
  41.9 &
  - \\
Video-LLaMA  \cite{video_llama_2023} &
  CLIP-G &
  7B &
   &
  51.6 &
  2.5 &
   &
  29.6 &
  1.8 &
   &
  12.4 &
  1.1 &
   &
  - &
  - \\
LLaMA-Adapter \cite{zhang_llama_adapter_2023} &
  ViT-B &
  7B &
   &
  54.9 &
  3.1 &
   &
  43.8 &
  2.7 &
   &
  34.2 &
  2.7 &
   &
  - &
  - \\
Video-ChatGPT \cite{video_chatgpt_2023} &
  ViT-L &
  7B &
   &
  64.9 &
  3.3 &
   &
  49.3 &
  2.8 &
   &
  35.2 &
  2.7 &
   &
  51.4 &
  3.0 \\
Video-LLaVA \cite{video_llava} &
  ViT-L &
  7B &
   &
  70.7 &
  3.9 &
   &
  59.2 &
  3.5 &
   &
  45.3 &
  3.3 &
   &
  70.0 &
  4.0 \\
Chat-UniVi \cite{chat_univi_2023} &
  ViT-L &
  7B &
   &
  65.0 &
  3.6 &
   &
  54.6 &
  3.1 &
   &
  45.8 &
  3.2 &
   &
  60.3 &
  3.4 \\
MovieChat \cite{moviechat} &
  CLIP-G &
  7B &
   &
  75.2 &
  3.8 &
   &
  52.7 &
  2.6 &
   &
  45.7 &
  3.4 &
   &
  - &
  - \\
VideoChat \cite{videochat_2023} &
  CLIP-G &
  7B &
   &
  56.3 &
  2.8 &
   &
  45.0 &
  2.5 &
   &
  26.5 &
  2.2 &
   &
  34.4 &
  2.3 \\
VideoChat2 \cite{videochat2} &
  UMT-L &
  7B &
   &
  70.0 &
  3.9 &
   &
  54.1 &
  3.3 &
   &
  49.1 &
  3.3 &
   &
  - &
  - \\
Vista-LLaMA \cite{ma_vista_llama_2023} &
  CLIP-G &
  7B &
   &
  65.3 &
  3.6 &
   &
  60.5 &
  3.3 &
   &
  48.3 &
  3.3 &
   &
  - &
  - \\
LLaMA-VID \cite{llama_vid} &
  CLIP-G &
  13B &
   &
  70.0 &
  3.7 &
   &
  58.9 &
  3.3 &
   &
  47.5 &
  3.3 &
   &
  - &
  - \\
ST-LLM \cite{st_llm_2024} &
  BLIP2 &
  7B &
   &
  74.6 &
  3.9 &
   &
  63.2 &
  3.4 &
   &
  50.9 &
  3.3 &
   &
  - &
  - \\
LLaVA-NEXT 7B \cite{llava_next} &
  ViT-L &
  7B &
   &
  78.8 &
  4.1 &
   &
  63.7 &
  3.5 &
   &
  54.3 &
  3.4 &
   &
  73.0 &
  4.0 \\
{\color[HTML]{939393} LLaVA-NEXT 13B \cite{llava_next}} &
  {\color[HTML]{939393} ViT-L} &
  {\color[HTML]{939393} 13B} &
   &
  {\color[HTML]{939393} 77.4} &
  {\color[HTML]{939393} 4.1} &
   &
  {\color[HTML]{939393} 62.6} &
  {\color[HTML]{939393} 3.4} &
   &
  {\color[HTML]{939393} 57.1} &
  {\color[HTML]{939393} 3.5} &
   &
  {\color[HTML]{939393} 78.0} &
  {\color[HTML]{939393} 4.0} \\
{\color[HTML]{939393} LLaVA-NEXT 34B \cite{llava_next}} &
  {\color[HTML]{939393} ViT-L} &
  {\color[HTML]{939393} 34B} &
   &
  {\color[HTML]{939393} 79.6} &
  {\color[HTML]{939393} 4.1} &
   &
  {\color[HTML]{939393} 62.4} &
  {\color[HTML]{939393} 3.5} &
   &
  {\color[HTML]{939393} 58.4} &
  {\color[HTML]{939393} 3.5} &
   &
  {\color[HTML]{939393} 79.1} &
  {\color[HTML]{939393} 4.2} \\
PLLaVA 7B \cite{xu_pllava_2024} &
  ViT-L &
  7B &
   &
  76.6 &
  4.1 &
   &
  62.0 &
  3.5 &
   &
  56.3 &
  3.5 &
   &
  77.5 &
  4.1 \\
{\color[HTML]{939393} PLLaVA 13B \cite{xu_pllava_2024}} &
  {\color[HTML]{939393} ViT-L} &
  {\color[HTML]{939393} 13B} &
   &
  {\color[HTML]{939393} 75.7} &
  {\color[HTML]{939393} 4.1} &
   &
  {\color[HTML]{939393} 63.2} &
  {\color[HTML]{939393} 3.6} &
   &
  {\color[HTML]{939393} 56.3} &
  {\color[HTML]{939393} 3.6} &
   &
  {\color[HTML]{939393} 77.8} &
  {\color[HTML]{939393} 4.2} \\
{\color[HTML]{939393} PLLaVA 34B \cite{xu_pllava_2024}} &
  {\color[HTML]{939393} ViT-L} &
  {\color[HTML]{939393} 34B} &
   &
  {\color[HTML]{939393} 79.9} &
  {\color[HTML]{939393} 4.2} &
   &
  {\color[HTML]{939393} 68.7} &
  {\color[HTML]{939393} 3.8} &
   &
  {\color[HTML]{939393} 60.9} &
  {\color[HTML]{939393} 3.7} &
   &
  {\color[HTML]{939393} 80.6} &
  {\color[HTML]{939393} 4.3} \\ \hline
\textbf{Ours} &
  ViT-L &
  7B &
   &
  \textbf{79.0} &
  \textbf{4.1} &
   &
  \textbf{64.5} &
  \textbf{3.6} &
   &
  \textbf{57.5} &
  \textbf{3.6} &
   &
  \textbf{78.4} &
  \textbf{4.2} \\ \hline
\end{tabular}

% }
\end{table*}

\begin{table*}[t]

\caption{
Performance comparison in text-to-video retrieval benchmark.
}

\centering
\label{tab:videoret_results}
% \resizebox{\textwidth}{!}{

\begin{tabular}{ccccccccccccccccc}
\hline
                                                            &                             & \multicolumn{3}{c}{MSRVTT}                                                              &                         & \multicolumn{3}{c}{DiDeMo}                                                              &                         & \multicolumn{3}{c}{ActivityNet}                                                         &                         & \multicolumn{3}{c}{MSVD}                                                                \\
\multirow{-2}{*}{Method}                                    & \multirow{-2}{*}{\#Pairs}   & R@1                         & R@5                         & R@10                        &                         & R@1                         & R@5                         & R@10                        &                         & R@1                         & R@5                         & R@10                        &                         & R@1                         & R@5                         & R@10                        \\ \hline
OmniVL \cite{wang_omnivl_2022}                              & 17M                         & 47.8                        & 74.2                        & 83.8                        &                         & 52.4                        & 79.5                        & 85.4                        &                         & -                           & -                           & -                           &                         & -                           & -                           & -                           \\
VINDLU \cite{cheng_vindlu_2023}                             & 25M                         & 48.8                        & 72.4                        & 82.2                        &                         & 59.8                        & 86.6                        & 91.5                        &                         & 55.9                        & 82.3                        & 90.9                        &                         & -                           & -                           & -                           \\
RTQ \cite{wang_rtq_2023}                                    & 129M                        & 53.4                        & 76.1                        & 84.4                        &                         & 57.6                        & 84.1                        & 89.8                        &                         & 53.5                        & 81.4                        & 91.9                        &                         & -                           & -                           & -                           \\
{\color[HTML]{939393} PIDRo \cite{guan_pidro_2023}}         & {\color[HTML]{939393} 400M} & {\color[HTML]{939393} 50.2} & {\color[HTML]{939393} 77.0} & {\color[HTML]{939393} 85.4} & {\color[HTML]{939393} } & {\color[HTML]{939393} 48.6} & {\color[HTML]{939393} 75.9} & {\color[HTML]{939393} 84.4} & {\color[HTML]{939393} } & {\color[HTML]{939393} 44.9} & {\color[HTML]{939393} 74.5} & {\color[HTML]{939393} 86.1} & {\color[HTML]{939393} } & {\color[HTML]{939393} 47.5} & {\color[HTML]{939393} 77.5} & {\color[HTML]{939393} 86.0} \\
{\color[HTML]{939393} Intern Video \cite{internvideo_2022}} & {\color[HTML]{939393} 646M} & {\color[HTML]{939393} 55.2} & {\color[HTML]{939393} 79.6} & {\color[HTML]{939393} 87.5} & {\color[HTML]{939393} } & {\color[HTML]{939393} 57.9} & {\color[HTML]{939393} 82.4} & {\color[HTML]{939393} 88.9} & {\color[HTML]{939393} } & {\color[HTML]{939393} 62.2} & {\color[HTML]{939393} 85.9} & {\color[HTML]{939393} 93.2} & {\color[HTML]{939393} } & {\color[HTML]{939393} 58.4} & {\color[HTML]{939393} 84.5} & {\color[HTML]{939393} 90.4} \\
{\color[HTML]{939393} CLIP-ViP \cite{xue_clip-vip_2023}}    & {\color[HTML]{939393} 500M} & {\color[HTML]{939393} 54.2} & {\color[HTML]{939393} 77.2} & {\color[HTML]{939393} 84.8} & {\color[HTML]{939393} } & {\color[HTML]{939393} 50.5} & {\color[HTML]{939393} 78.4} & {\color[HTML]{939393} 87.1} & {\color[HTML]{939393} } & {\color[HTML]{939393} 53.4} & {\color[HTML]{939393} 81.4} & {\color[HTML]{939393} 90.0} & {\color[HTML]{939393} } & {\color[HTML]{939393} -}    & {\color[HTML]{939393} -}    & {\color[HTML]{939393} -}    \\
UMT-L \cite{li_umt_2023}                                    & 5M                          & 53.3                        & 76.6                        & 83.9                        &                         & 59.7                        & 84.9                        & 90.8                        &                         & 58.1                        & 85.5                        & 92.9                        &                         & 53.7                        & 80.5                        & 86.8                        \\
ViCLIP+InternVid \cite{wang_internvid_2023}                 & 10M                         & 55.0                        & -                           & -                           &                         & 51.7                        & -                           & -                           &                         & 50.4                        & -                           & -                           &                         & 53.9                        & -                           & -                           \\
UMT+Panda70M \cite{chen_panda_70m_2024}                     & 5M                          & \textbf{58.4}               & 80.9                        & 86.9                        &                         & 60.6                        & 86.0                        & 92.4                        &                         & -                           & -                           & -                           &                         & 57.5                        & 83.6                        & 89.5                        \\ \hline
\textbf{Ours}                                               & 5M                          & 56.1                        & \textbf{81.6}               & \textbf{87.0}               &                         & \textbf{66.6}               & \textbf{86.7}               & \textbf{93.1}               &                         & \textbf{64.9}               & \textbf{88.0}               & \textbf{94.2}               &                         & \textbf{59.1}               & \textbf{84.5}               & \textbf{89.5}               \\ \hline
\end{tabular}

% }
\end{table*}

In this section, we integrated our refined dataset with models in video question answering to validate improvements.

\subsubsection{Experimental Settings}

\textbf{Models and implementation details}
In this section, we adopted PLLaVA \cite{xu_pllava_2024} as the VideoLLM, and followed the original training setting. Based on LLaVA-NEXT \cite{llava_next}, PLLAVA treats videos as multi-images arranged in temporal order. To reduce the number of visual tokens, it performs temporal pooling on the temporal dimension with stride 2.
% The detailed hyperparameter settings are presented in Appendix \ref{sec:sup_exp_setting}.

\noindent\textbf{Training datasets}. 
For image data, we leverage LLaVA-Pretrain-558K \cite{liu_llava_2023} for pre-training and a 745K dataset similar to LLaVA-SFT-760K \cite{llava_next} for SFT. The LLaVA-SFT-760K is not publicly available and contains a 15K private dataset, thus we forge a 745K dataset based on the paper. 
% More details on this dataset are presented in \autoref{sec:sup_exp_setting}.
%For video data, we use our refined dataset for pre-training and 100k Video-ChatGPT \cite{video_chatgpt_2023} for SFT. 
% Without specification, we use 770K videos for pre-training to be in accordance with the other baselines such as Video-LLaVA \cite{video_llava}.

\noindent\textbf{Evaluation datasets and metrics.} 
We incorporated various benchmarks on open-ended Video Question Answer (VideoQA) including MSVD-QA \cite{msvd_2011}, MSRVTT-QA \cite{msrvtt_2016}, ActivityNet-QA \cite{anet_caption_2017}, and TGIF-QA \cite{li_tgifqa_2016}. Considering that ground-truth answers in these benchmarks are single-word, we followed Maaz et al., \cite{video_chatgpt_2023} to prompt GPT-3.5 for evaluating the accuracy (Acc., with answers true/false) and quality (Sco., ranging from 0 to 5) of the models’ responses. We followed their original paper except we adopted \texttt{GPT-3.5-turbo-0125} for evaluation to align with recent works \cite{llava_next, xu_pllava_2024}. The GPT-3.5 version in their original version is deprecated.

%To compensate for the bias introduced by single-word answers, we then evaluated models on the Video-based Generative Performance benchmark (referred to as VCG Score), introduced by VideoChatGPT \cite{video_chatgpt_2023}. This benchmark involves longer answers, encompassing five aspects of video understanding: CI (Correctness of Information), DO (Detail Orientation), CU (Context Understanding), TU (Temporal Understanding), and CO (Consistency). The generation is also assessed using GPT-3.5. For all evaluation metrics, we followed their original paper except use adopt \texttt{GPT-3.5-turbo-0125} (The earlier versions will be deprecated soon). 
% More details on GPT evaluation are presented in Appendix \ref{sec:sup_exp_setting}.

\subsubsection{Performance Comparison}

Due to resource constraints, we trained only the 7B version of our model. The SFT results are presented in \autoref{tab:videoqa_results}. Our model achieves state-of-the-art performance, improving results by up to 2.1\% compared to existing baselines. 
Notably, it outperforms the 13B versions of LLaMA-VID \cite{llama_vid}, LLaVA-NEXT \cite{llava_next}, and PLLaVA \cite{xu_pllava_2024}, demonstrating the effectiveness of our refined dataset.

% \TODO{TO BE DISCUSSED. VCG Bench results unable to compare https://github.com/OpenGVLab/Ask-Anything/issues/175 early versions are deprecated}

\subsection{Text-to-video Retrieval} \label{sec:videoret_exps}

In this section, we integrated our refined dataset with existing models to validate improvements in text-video retrieval.

\subsubsection{Experimental Settings}

\textbf{Model, training datasets, and implementation details}. 
We used the UnMasked Teacher (UMT) \cite{li_umt_2023} as the base model to evaluate the performance of text-to-video retrieval. We selected UMT-L to be in accordance with most of the baselines.
For a fair comparison, we randomly sampled 5M video-text pairs from our dataset as the pretraining data, since previous data refinement methods Panda70M \cite{chen_panda_70m_2024} and InternVid \cite{wang_internvid_2023} samples 5M and 10M, respectively.
Except for the training data, we followed the implementation details exactly as UMT stage-2 pertaining \cite{li_umt_2023}.

\noindent\textbf{Evaluation datasets and metrics.} 
We tested fine-tuned retrieval on four benchmarks: MSR-VTT \cite{msrvtt_2016}, DiDeMo \cite{hendricks_didemo_2017}, ActivityNet-Captions \cite{anet_caption_2017}, and MSVD \cite{msvd_2011}. We followed the common evaluation protocol. 
Specifically, For MSRVTT we evaluated on 1K testing split, which is not the same as the testing videos for captioning in \autoref{sec:exp_dataflywheel}. 
For DiDeMo and ActivityNet-Captions, they contain videos with dense captions. As in the previous standard \cite{li_umt_2023}, we evaluated paragraph-to-video retrieval by concatenating all descriptions of one video into a single query. We reported results on the 1K testing set. 
For MSVD, we reported results on the 670 testing videos. 
For evaluation metrics, we employed the standard metric and reported R@1, R@5, and R@10 accuracy.

\subsubsection{Performance Comparison}

As depicted in \autoref{tab:videoret_results}, our dataset significantly enhances text-video retrieval tasks. 
Compared to the original training set of UMT-L \cite{li_umt_2023}, which primarily consists of the art assets dataset WebVid \cite{bain_webvid_2021}, our dataset improves performance across all benchmarks by 5\%-6\% on average (Avg. of R@1, R@5, and R@10). 
Additionally, we outperform many existing state-of-the-art methods \cite{guan_pidro_2023, internvideo_2022, xue_clip_vip_2023} that were pre-trained with significantly more vision-text data pairs.
When compared to state-of-the-art dataset refinement methods, our approach consistently outperforms the 5M subset of Panda70M \cite{chen_panda_70m_2024} by 1\%-3\% on the DiDeMo and MSVD datasets. However, in the R@1 metric for the MSR-VTT dataset, our model is 3\% lower than Panda70M. It is worth noting that the synthetic annotations of Panda70M are filtered through a model trained with 100K human annotations, where annotators selected the best synthetic captions from eight options in the raw Panda70M dataset. These results suggest that our algorithm-based noise control may benefit from the human annotations in certain domains.

\section{Conclusion}

This study presents a quantitative analysis that underscores the "impossible trinity" challenge inherent in video-language pre-training datasets, among data quantity, diversity, and quality. Our findings provide valuable insights for the future curation, evaluation, and enhancement of such datasets.
In response to this challenge, we introduce the Video DataFlywheel framework, an innovative system that iteratively refines text annotations derived from ASR datasets. To effectively manage noise during this refinement process, we propose AdaTaiLr, a method necessitating fewer assumptions on noise distribution, thus proving particularly efficacious in larger datasets.
Comprehensive experiments validated the effectiveness of the VidDF framework, demonstrating its ability to enhance data quality with minimal loss of diversity. Besides, the VidDF framework has better scalability than existing data refinement methods. Furthermore, our refined datasets significantly improved performance in various video-language understanding tasks, including video question answering and video-text retrieval.

In the future, we plan to: 1) Enhance the framework's autonomy to actively select videos with potentially superior refinement results or unknown knowledge. 2) Develop new noise control methods that can better integrate human annotations for supplementation. 3) Integrate additional quality evaluation methods, such as aesthetics and annotation detailedness, to broaden the dataset's applicability across fields like video generation.

\section*{Acknowledgment}

% Dr. Reveryrand would like to acknowledge the funding by XLIM, Limoges, France. 
% The authors would like to thank Dr. David Root and Dr. Jean-Pierre Teyssier at Agilent Technologies for the loan of the time-domain nonlinear measurement equipment and TriQuint Semiconductor for the donation of the transistors. 

% Can use something like this to put references on a page
% by themselves when using endfloat and the captionsoff option.
\ifCLASSOPTIONcaptionsoff
  \newpage
\fi

% trigger a \newpage just before the given reference
% number - used to balance the columns on the last page
% adjust value as needed - may need to be readjusted if
% the document is modified later
%\IEEEtriggeratref{8}
% The "triggered" command can be changed if desired:
%\IEEEtriggercmd{\enlargethispage{-5in}}

% ====== REFERENCE SECTION

%\begin{thebibliography}{1}

% IEEEabrv,

\bibliographystyle{IEEEtran}
\bibliography{IEEEabrv,Bibliography}

% Generated by IEEEtran.bst, version: 1.14 (2015/08/26)
\begin{thebibliography}{10}
\providecommand{\url}[1]{#1}
\csname url@samestyle\endcsname
\providecommand{\newblock}{\relax}
\providecommand{\bibinfo}[2]{#2}
\providecommand{\BIBentrySTDinterwordspacing}{\spaceskip=0pt\relax}
\providecommand{\BIBentryALTinterwordstretchfactor}{4}
\providecommand{\BIBentryALTinterwordspacing}{\spaceskip=\fontdimen2\font plus
\BIBentryALTinterwordstretchfactor\fontdimen3\font minus \fontdimen4\font\relax}
\providecommand{\BIBforeignlanguage}[2]{{%
\expandafter\ifx\csname l@#1\endcsname\relax
\typeout{** WARNING: IEEEtran.bst: No hyphenation pattern has been}%
\typeout{** loaded for the language `#1'. Using the pattern for}%
\typeout{** the default language instead.}%
\else
\language=\csname l@#1\endcsname
\fi
#2}}
\providecommand{\BIBdecl}{\relax}
\BIBdecl

\bibitem{li_umt_2023}
K.~Li, Y.~Wang, Y.~Li, Y.~Wang, Y.~He, L.~Wang, and Y.~Qiao, ``Unmasked {Teacher}: {Towards} {Training}-{Efficient} {Video} {Foundation} {Models},'' in \emph{ICCV}.\hskip 1em plus 0.5em minus 0.4em\relax IEEE, 2023, pp. 19\,891--19\,903.

\bibitem{xu_pllava_2024}
L.~Xu, Y.~Zhao, D.~Zhou, Z.~Lin, S.~K. Ng, and J.~Feng, ``{PLLaVA} : {Parameter}-free {LLaVA} {Extension} from {Images} to {Videos} for {Video} {Dense} {Captioning},'' 2024, arXiv:2404.16994.

\bibitem{chen_panda_70m_2024}
T.-S. Chen, A.~Siarohin, W.~Menapace, E.~Deyneka, H.-w. Chao, B.~E. Jeon, Y.~Fang, H.-Y. Lee, J.~Ren, and {others}, ``Panda-70m: {Captioning} 70m {Videos} with {Multiple} {Cross}-{Modality} {Teachers},'' in \emph{CVPR}.\hskip 1em plus 0.5em minus 0.4em\relax IEEE, 2024, pp. 13\,320--13\,331.

\bibitem{miech_howto100m_2019}
A.~Miech, D.~Zhukov, J.-B. Alayrac, M.~Tapaswi, I.~Laptev, and J.~Sivic, ``{HowTo100M}: {Learning} a {Text}-{Video} {Embedding} by {Watching} {Hundred} {Million} {Narrated} {Video} {Clips},'' in \emph{ICCV}.\hskip 1em plus 0.5em minus 0.4em\relax IEEE, 2019, pp. 2630--2640.

\bibitem{bain_webvid_2021}
M.~Bain, A.~Nagrani, G.~Varol, and A.~Zisserman, ``Frozen in {Time}: {A} {Joint} {Video} and {Image} {Encoder} for {End}-to-{End} {Retrieval},'' in \emph{ICCV}.\hskip 1em plus 0.5em minus 0.4em\relax IEEE, 2021, pp. 1708--1718.

\bibitem{zellers_yt-tmp-180m_2021}
R.~Zellers, X.~Lu, J.~Hessel, Y.~Yu, J.~S. Park, J.~Cao, A.~Farhadi, and Y.~Choi, ``{MERLOT}: {Multimodal} {Neural} {Script} {Knowledge} {Models},'' in \emph{NeurIPS}.\hskip 1em plus 0.5em minus 0.4em\relax NeurIPS Foundation, 2021, pp. 23\,634--23\,651.

\bibitem{luo_clip4clip_2022}
H.~Luo, L.~Ji, M.~Zhong, Y.~Chen, W.~Lei, N.~Duan, and T.~Li, ``Clip4clip: An {Empirical} {Study} of {Clip} for {End} to {End} {Video} {Clip} {Retrieval} and {Captioning},'' \emph{Neurocomputing}, vol. 508, 2022.

\bibitem{xue_clip_vip_2023}
H.~Xue, Y.~Sun, B.~Liu, J.~Fu, R.~Song, H.~Li, and J.~Luo, ``{CLIP}-{ViP}: {Adapting} {Pre}-trained {Image}-{Text} {Model} to {Video}-{Language} {Representation} {Alignment},'' in \emph{ICLR}.\hskip 1em plus 0.5em minus 0.4em\relax OpenReview.net, 2023, pp. 1--8.

\bibitem{dong_m2paap_2024}
X.~Dong, Z.~Feng, C.~Zhou, X.~Yu, M.~Yang, and Q.~Guo, ``M2-raap: A {Multi}-{Modal} {Recipe} for {Advancing} {Adaptation}-based {Pre}-training towards {Effective} and {Efficient} {Zero}-shot {Video}-text {Retrieval},'' in \emph{SIGIR}, 2024, pp. 2156--2166.

\bibitem{yang_just_ask_2022}
A.~Yang, A.~Miech, J.~Sivic, I.~Laptev, and C.~Schmid, ``Learning to {Answer} {Visual} {Questions} from {Web} {Videos},'' \emph{IEEE TPAMI}, pp. 1--12, 2022.

\bibitem{chen_vast_2023}
S.~Chen, H.~Li, Q.~Wang, Z.~Zhao, M.~Sun, X.~Zhu, and J.~Liu, ``{VAST}: {A} {Vision}-{Audio}-{Subtitle}-{Text} {Omni}-{Modality} {Foundation} {Model} and {Dataset},'' in \emph{NeurIPS}.\hskip 1em plus 0.5em minus 0.4em\relax NeurIPS Foundation, 2023, pp. 1--8.

\bibitem{wang_internvid_2023}
Y.~Wang, Y.~He, Y.~Li, K.~Li, J.~Yu, X.~Ma, X.~Chen, Y.~Wang, P.~Luo, Z.~Liu, Y.~Wang, L.~Wang, and Y.~Qiao, ``{InternVid}: {A} {Large}-scale {Video}-{Text} {Dataset} for {Multimodal} {Understanding} and {Generation},'' in \emph{ICLR}.\hskip 1em plus 0.5em minus 0.4em\relax OpenReview.net, 2023, pp. 1--8.

\bibitem{miech_mil_nce_2020}
A.~Miech, J.-B. Alayrac, L.~Smaira, I.~Laptev, J.~Sivic, and A.~Zisserman, ``End-to-{End} {Learning} of {Visual} {Representations} {From} {Uncurated} {Instructional} {Videos},'' in \emph{CVPR}.\hskip 1em plus 0.5em minus 0.4em\relax IEEE, 2020, pp. 9876--9886.

\bibitem{huang_ncr_2021}
Z.~Huang, G.~Niu, X.~Liu, W.~Ding, X.~Xiao, H.~Wu, and X.~Peng, ``Learning with {Noisy} {Correspondence} for {Cross}-modal {Matching},'' in \emph{NeurIPS}.\hskip 1em plus 0.5em minus 0.4em\relax NeurIPS Foundation, 2021, pp. 29\,406--29\,419.

\bibitem{feng_ctpr_2023}
Z.~Feng, Z.~Zeng, C.~Guo, Z.~Li, and L.~Hu, ``Learning {From} {Noisy} {Correspondence} {With} {Tri}-{Partition} for {Cross}-{Modal} {Matching},'' \emph{IEEE TMM}, vol.~26, 2023.

\bibitem{nguyen_improving_2023}
T.~Nguyen, S.~Y. Gadre, G.~Ilharco, S.~Oh, and L.~Schmidt, ``{Improving} {Multimodal} {Datasets} with {Image} {Captioning},'' in \emph{NeurIPS}.\hskip 1em plus 0.5em minus 0.4em\relax NeurIPS Foundation, 2023, pp. 1--14.

\bibitem{loss_truncation}
D.~Kang and T.~B. Hashimoto, ``Improved {Natural} {Language} {Generation} via {Loss} {Truncation},'' in \emph{ACL}.\hskip 1em plus 0.5em minus 0.4em\relax Association for Computational Linguistics, 2020, pp. 718--731.

\bibitem{tailr}
H.~Ji, P.~Ke, Z.~Hu, R.~Zhang, and M.~Huang, ``Tailoring {Language} {Generation} {Models} under {Total} {Variation} {Distance},'' in \emph{ICLR}.\hskip 1em plus 0.5em minus 0.4em\relax OpenReview.net, 2023, pp. 1--8.

\bibitem{li_ENT_2023}
T.~Li, H.~Xu, P.~Koehn, D.~Khashabi, and K.~Murray, ``Error {Norm} {Truncation}: {Robust} {Training} in the {Presence} of {Data} {Noise} for {Text} {Generation} {Models},'' in \emph{ICLR}.\hskip 1em plus 0.5em minus 0.4em\relax OpenReview.net, 2024, pp. 1--8.

\bibitem{msrvtt_2016}
J.~Xu, T.~Mei, T.~Yao, and Y.~Rui, ``{MSR-VTT:} {A} {Large} {Video} {Description} {Dataset} for {Bridging} {Video} and {Language},'' in \emph{CVPR}.\hskip 1em plus 0.5em minus 0.4em\relax IEEE, 2016, pp. 5288--5296.

\bibitem{msvd_2011}
D.~L. Chen and W.~B. Dolan, ``Collecting {Highly} {Parallel} {Data} for {Paraphrase} {Evaluation},'' in \emph{ACL}.\hskip 1em plus 0.5em minus 0.4em\relax The Association for Computer Linguistics, 2011, pp. 190--200.

\bibitem{vatex_2019}
X.~Wang, J.~Wu, J.~Chen, L.~Li, Y.-F. Wang, and W.~Y. Wang, ``{VaTeX}: {A} {Large}-{Scale}, {High}-{Quality} {Multilingual} {Dataset} for {Video}-and-{Language} {Research},'' in \emph{ICCV}.\hskip 1em plus 0.5em minus 0.4em\relax IEEE, 2019, pp. 4580--4590.

\bibitem{chat_univi_2023}
P.~Jin, R.~Takanobu, C.~Zhang, X.~Cao, and L.~Yuan, ``Chat-{UniVi}: {Unified} {Visual} {Representation} {Empowers} {Large} {Language} {Models} with {Image} and {Video} {Understanding},'' 2023, arXiv:2311.08046.

\bibitem{videochat2}
K.~Li, Y.~Wang, Y.~He, Y.~Li, Y.~Wang, Y.~Liu, Z.~Wang, J.~Xu, G.~Chen, P.~Luo, L.~Wang, and Y.~Qiao, ``{MVBench}: {A} {Comprehensive} {Multi}-modal {Video} {Understanding} {Benchmark},'' in \emph{CVPR}.\hskip 1em plus 0.5em minus 0.4em\relax IEEE, 2024, p. 718–731.

\bibitem{liu_llava_2023}
H.~Liu, C.~Li, Q.~Wu, and Y.~J. Lee, ``Visual {Instruction} {Tuning},'' in \emph{NeurIPS}.\hskip 1em plus 0.5em minus 0.4em\relax NeurIPS Foundation, 2023, pp. 1--12.

\bibitem{xue_hd-vila_2022}
H.~Xue, T.~Hang, Y.~Zeng, Y.~Sun, B.~Liu, H.~Yang, J.~Fu, and B.~Guo, ``{Advancing} {High}-{Resolution} {Video}-{Language} {Representation} with {Large}-{Scale} {Video} {Transcriptions},'' in \emph{CVPR}.\hskip 1em plus 0.5em minus 0.4em\relax {IEEE}, 2022, pp. 5026--5035.

\bibitem{youcook_2016}
L.~Zhou, C.~Xu, and J.~J. Corso, ``{Towards} {Automatic} {Learning} of {Procedures} from {Web} {Instructional} {Videos},'' in \emph{AAAI}.\hskip 1em plus 0.5em minus 0.4em\relax {AAAI} Press, 2018, pp. 7590--7598.

\bibitem{anet_caption_2017}
R.~Krishna, K.~Hata, F.~Ren, L.~Fei-Fei, and J.~C. Niebles, ``Dense-{Captioning} {Events} in {Videos},'' in \emph{ICCV}.\hskip 1em plus 0.5em minus 0.4em\relax IEEE, 2017, pp. 706--715.

\bibitem{qvhighlights_2021}
J.~Lei, T.~L. Berg, and M.~Bansal, ``{Detecting} {Moments} and {Highlights} in {Videos} via {Natural} {Language} {Queries},'' in \emph{NeurIPS}.\hskip 1em plus 0.5em minus 0.4em\relax NeurIPS Foundation, 2021.

\bibitem{fan_laclip_2023}
L.~Fan, D.~Krishnan, P.~Isola, D.~Katabi, and Y.~Tian, ``Improving {CLIP} {Training} with {Language} {Rewrites},'' in \emph{NeurIPS}.\hskip 1em plus 0.5em minus 0.4em\relax NeurIPS Foundation, 2023, pp. 1--14.

\bibitem{lai_veclip_2024}
Z.~Lai, H.~Zhang, B.~Zhang, W.~Wu, H.~Bai, A.~Timofeev, X.~Du, Z.~Gan, J.~Shan, C.-N. Chuah, Y.~Yang, and M.~Cao, ``{VeCLIP}: {Improving} {CLIP} {Training} via {Visual}-enriched {Captions},'' in \emph{ECCV}.\hskip 1em plus 0.5em minus 0.4em\relax IEEE, 2024, pp. 13\,320--13\,331.

\bibitem{li_blip_2022}
J.~Li, D.~Li, C.~Xiong, and S.~C.~H. Hoi, ``{BLIP}: {Bootstrapping} {Language}-{Image} {Pre}-training for {Unified} {Vision}-{Language} {Understanding} and {Generation},'' in \emph{ICML}.\hskip 1em plus 0.5em minus 0.4em\relax PMLR, 2022, pp. 12\,888--12\,900.

\bibitem{betker_dali3_2023}
J.~Betker, G.~Goh, L.~Jing, T.~Brooks, J.~Wang, L.~Li, L.~Ouyang, J.~Zhuang, J.~Lee, Y.~Guo \emph{et~al.}, ``Improving image generation with better captions,'' \emph{Computer Science. https://cdn. openai. com/papers/dall-e-3. pdf}, vol.~2, p.~8, 2023.

\bibitem{clip_2021}
A.~Radford, J.~W. Kim, C.~Hallacy, A.~Ramesh, G.~Goh, S.~Agarwal, G.~Sastry, A.~Askell, P.~Mishkin, J.~Clark, G.~Krueger, and I.~Sutskever, ``Learning {Transferable} {Visual} {Models} {From} {Natural} {Language} {Supervision},'' in \emph{ICML}.\hskip 1em plus 0.5em minus 0.4em\relax PMLR, 2021, pp. 8748--8763.

\bibitem{vicuna_llm}
L.~Zheng, W.~Chiang, Y.~Sheng, S.~Zhuang, Z.~Wu, Y.~Zhuang, Z.~Lin, Z.~Li, D.~Li, E.~P. Xing, H.~Zhang, J.~E. Gonzalez, and I.~Stoica, ``{Judging} {LLM}-as-a-judge with {MT}-bench and {Chatbot} {Arena},'' in \emph{NeurIPS}.\hskip 1em plus 0.5em minus 0.4em\relax NeurIPS Foundation, 2023, pp. 1--12.

\bibitem{video_chatgpt_2023}
M.~Maaz, H.~Rasheed, S.~Khan, and F.~S. Khan, ``Video-{ChatGPT}: {Towards} {Detailed} {Video} {Understanding} via {Large} {Vision} and {Language} {Models},'' in \emph{ACL}.\hskip 1em plus 0.5em minus 0.4em\relax Association for Computational Linguistics, 2024, pp. 1--12.

\bibitem{video_llava}
B.~Lin, B.~Zhu, Y.~Ye, M.~Ning, P.~Jin, and L.~Yuan, ``Video-{LLaVA}: {Learning} {United} {Visual} {Representation} by {Alignment} {Before} {Projection},'' 2023, arXiv:2311.10122.

\bibitem{llava_next}
\BIBentryALTinterwordspacing
Y.~Zhang, B.~Li, h.~Liu, Y.~j. Lee, L.~Gui, D.~Fu, J.~Feng, Z.~Liu, and C.~Li, ``{LLaVA}-{NeXT}: A {Strong} {Zero}-{shot} {Video} {Understanding} {Model},'' 2024. [Online]. Available: \url{https://llava-vl.github.io/blog/2024-04-30-llava-next-video/}
\BIBentrySTDinterwordspacing

\bibitem{videochat_2023}
K.~Li, Y.~He, Y.~Wang, Y.~Li, W.~Wang, P.~Luo, Y.~Wang, L.~Wang, and Y.~Qiao, ``{VideoChat}: {Chat}-{Centric} {Video} {Understanding},'' 2023, arXiv:2305.06355.

\bibitem{videococa_2023}
S.~Yan, T.~Zhu, Z.~Wang, Y.~Cao, M.~Zhang, S.~Ghosh, Y.~Wu, and J.~Yu, ``{VideoCoCa}: {Video}-{Text} {Modeling} with {Zero}-{Shot} {Transfer} from {Contrastive} {Captioners},'' 2023, arXiv:2212.04979.

\bibitem{li_blip2_2023}
J.~Li, D.~Li, S.~Savarese, and S.~C.~H. Hoi, ``{BLIP-2:} {Bootstrapping} {Language}-{Image} {Pre}-training with {Frozen} {Image} {Encoders} and {Large} {Language} {Models},'' in \emph{ICML}.\hskip 1em plus 0.5em minus 0.4em\relax {PMLR}, 2023, pp. 19\,730--19\,742.

\bibitem{bertasius_timesformer_2021}
G.~Bertasius, H.~Wang, and L.~Torresani, ``Is {Space}-{Time} {Attention} {All} {You} {Need} for {Video} {Understanding}?'' in \emph{ICML}.\hskip 1em plus 0.5em minus 0.4em\relax PMLR, 2021, pp. 813--824.

\bibitem{gold}
R.~Y. Pang and H.~He, ``{Text} {Generation} by {Learning} from {Demonstrations},'' in \emph{ICLR}.\hskip 1em plus 0.5em minus 0.4em\relax OpenReview.net, 2021, pp. 88--96.

\bibitem{tvd_textbook}
R.~Van~Handel, ``{Probability} in {High} {Dimension},'' \emph{Lecture Notes (Princeton University)}, 2014.

\bibitem{ESL}
T.~Hastie, R.~Tibshirani, J.~H. Friedman, and J.~H. Friedman, \emph{The {Elements} of {Statistical} {Learning}: {Data} {Mining}, {Inference}, and {Prediction}}.\hskip 1em plus 0.5em minus 0.4em\relax Springer, 2009, vol.~2.

\bibitem{vedantam_cider_2015}
R.~Vedantam, C.~L. Zitnick, and D.~Parikh, ``{CIDEr}: {Consensus}-based image description evaluation,'' in \emph{CVPR}.\hskip 1em plus 0.5em minus 0.4em\relax IEEE, 2015, pp. 4566--4575.

\bibitem{li_tgifqa_2016}
Y.~Li, Y.~Song, L.~Cao, J.~R. Tetreault, L.~Goldberg, A.~Jaimes, and J.~Luo, ``{TGIF:} {A} {New} {Dataset} and {Benchmark} on {Animated} {GIF} {Description},'' in \emph{CVPR}.\hskip 1em plus 0.5em minus 0.4em\relax IEEE, 2016, pp. 4641--4650.

\bibitem{luo_valley_2023}
R.~Luo, Z.~Zhao, M.~Yang, J.~Dong, D.~Li, P.~Lu, T.~Wang, L.~Hu, M.~Qiu, and Z.~Wei, ``Valley: {Video} {Assistant} with {Large} {Language} model {Enhanced} {abilitY},'' 2023, arXiv:2306.07207.

\bibitem{carreira_kinetics_2017}
J.~Carreira and A.~Zisserman, ``Quo {Vadis}, {Action} {Recognition}? {A} {New} {Model} and the {Kinetics} {Dataset},'' in \emph{{Conference} on {Computer} {Vision} and {Pattern} {Recognition}}.\hskip 1em plus 0.5em minus 0.4em\relax IEEE, 2017, pp. 4724--4733.

\bibitem{yang_frozenbilm_2022}
A.~Yang, A.~Miech, J.~Sivic, I.~Laptev, and C.~Schmid, ``Zero-{Shot} {Video} {Question} {Answering} via {Frozen} {Bidirectional} {Language} {Models},'' in \emph{NeurIPS}.\hskip 1em plus 0.5em minus 0.4em\relax NeurIPS Foundation, 2022.

\bibitem{video_llama_2023}
H.~Zhang, X.~Li, and L.~Bing, ``Video-{LLaMA}: {An} {Instruction}-tuned {Audio}-{Visual} {Language} {Model} for {Video} {Understanding},'' in \emph{EMNLP}.\hskip 1em plus 0.5em minus 0.4em\relax Association for Computational Linguistics, 2023, pp. 543--553.

\bibitem{zhang_llama_adapter_2023}
R.~Zhang, J.~Han, A.~Zhou, X.~Hu, S.~Yan, P.~Lu, H.~Li, P.~Gao, and Y.~Qiao, ``{LLaMA}-{Adapter}: {Efficient} {Fine}-tuning of {Language} {Models} with {Zero}-init {Attention},'' 2023, arXiv: 2303.16199.

\bibitem{moviechat}
E.~Song, W.~Chai, G.~Wang, Y.~Zhang, H.~Zhou, F.~Wu, H.~Chi, X.~Guo, T.~Ye, Y.~Zhang, Y.~Lu, J.-N. Hwang, and G.~Wang, ``{MovieChat}: {From} {Dense} {Token} to {Sparse} {Memory} for {Long} {Video} {Understanding},'' in \emph{CVPR}.\hskip 1em plus 0.5em minus 0.4em\relax IEEE, 2024, pp. 9876--9886.

\bibitem{ma_vista_llama_2023}
F.~Ma, X.~Jin, H.~Wang, Y.~Xian, J.~Feng, and Y.~Yang, ``Vista-{LLaMA}: {Reliable} {Video} {Narrator} via {Equal} {Distance} to {Visual} {Tokens},'' 2023, arXiv: 2312.08870.

\bibitem{llama_vid}
Y.~Li, C.~Wang, and J.~Jia, ``{LLaMA}-{VID}: {An} {Image} is {Worth} 2 {Tokens} in {Large} {Language} {Models},'' 2023, arXiv:2311.17043.

\bibitem{st_llm_2024}
R.~Liu, C.~Li, H.~Tang, Y.~Ge, Y.~Shan, and G.~Li, ``{ST-LLM:} large language models are effective temporal learners,'' 2024, arXiv: 2404.00308.

\bibitem{wang_omnivl_2022}
J.~Wang, D.~Chen, Z.~Wu, C.~Luo, L.~Zhou, Y.~Zhao, Y.~Xie, C.~Liu, Y.-G. Jiang, and L.~Yuan, ``{OmniVL}: {One} {Foundation} {Model} for {Image}-{Language} and {Video}-{Language} {Tasks},'' in \emph{NeurIPS}, 2022.

\bibitem{cheng_vindlu_2023}
F.~Cheng, X.~Wang, J.~Lei, D.~J. Crandall, M.~Bansal, and G.~Bertasius, ``{VindLU}: {A} {Recipe} for {Effective} {Video}-and-{Language} {Pretraining},'' in \emph{CVPR}.\hskip 1em plus 0.5em minus 0.4em\relax IEEE, 2023, pp. 10\,739--10\,750.

\bibitem{wang_rtq_2023}
X.~Wang, Y.~Li, T.~Gan, Z.~Zhang, J.~Lv, and L.~Nie, ``{RTQ}: {Rethinking} {Video}-{language} {Understanding} {Based} on {Image}-text {Model},'' in \emph{MM}.\hskip 1em plus 0.5em minus 0.4em\relax ACM, 2023, pp. 557--566.

\bibitem{guan_pidro_2023}
P.~Guan, R.~Pei, B.~Shao, J.~Liu, W.~Li, J.~Gu, H.~Xu, S.~Xu, Y.~Yan, and E.~Y. Lam, ``{Pidro}: {Parallel} {Isomeric} {Attention} with {Dynamic} {Routing} for {Text}-video {Retrieval},'' in \emph{ICCV}.\hskip 1em plus 0.5em minus 0.4em\relax IEEE, 2023, pp. 11\,164--11\,173.

\bibitem{internvideo_2022}
Y.~Wang, K.~Li, Y.~Li, Y.~He, B.~Huang, Z.~Zhao, H.~Zhang, J.~Xu, Y.~Liu, Z.~Wang \emph{et~al.}, ``{Internvideo}: {General} {Video} {Foundation} {Models} via {Generative} and {Discriminative} {Learning},'' 2022, arXiv:2212.03191.

\bibitem{xue_clip-vip_2023}
H.~Xue, Y.~Sun, B.~Liu, J.~Fu, R.~Song, H.~Li, and J.~Luo, ``{CLIP}-{ViP}: {Adapting} {Pre}-trained {Image}-{Text} {Model} to {Video}-{Language} {Alignment},'' in \emph{ICLR}.\hskip 1em plus 0.5em minus 0.4em\relax OpenReview.net, 2023, pp. 1--12.

\bibitem{hendricks_didemo_2017}
L.~A. Hendricks, O.~Wang, E.~Shechtman, J.~Sivic, T.~Darrell, and B.~C. Russell, ``Localizing {Moments} in {Video} with {Natural} {Language},'' in \emph{ICCV}.\hskip 1em plus 0.5em minus 0.4em\relax IEEE, 2017, pp. 5804--5813.

\bibitem{zellers_yt-tmp-2b_2022}
R.~Zellers, J.~Lu, X.~Lu, Y.~Yu, Y.~Zhao, M.~Salehi, A.~Kusupati, J.~Hessel, A.~Farhadi, and Y.~Choi, ``{MERLOT} {RESERVE}: {Neural} {Script} {Knowledge} through {Vision} and {Language} and {Sound},'' in \emph{CVPR}.\hskip 1em plus 0.5em minus 0.4em\relax IEEE, 2022, pp. 16\,354--16\,366.

\end{thebibliography}
%\end{thebibliography}

% biography section
% 
% If you have an EPS/PDF photo (graphicx package needed) extra braces are
% needed around the contents of the optional argument to biography to prevent
% the LaTeX parser from getting confused when it sees the complicated
% \includegraphics command within an optional argument. (You could create
% your own custom macro containing the \includegraphics command to make things
% simpler here.)
%\begin{biography}[{\includegraphics[width=1in,height=1.25in,clip,keepaspectratio]{mshell}}]{Michael Shell}
% or if you just want to reserve a space for a photo:

% ==== SWITCH OFF the BIO for submission

\begin{IEEEbiography}[{\includegraphics[width=1in,height=1.25in,clip,keepaspectratio]{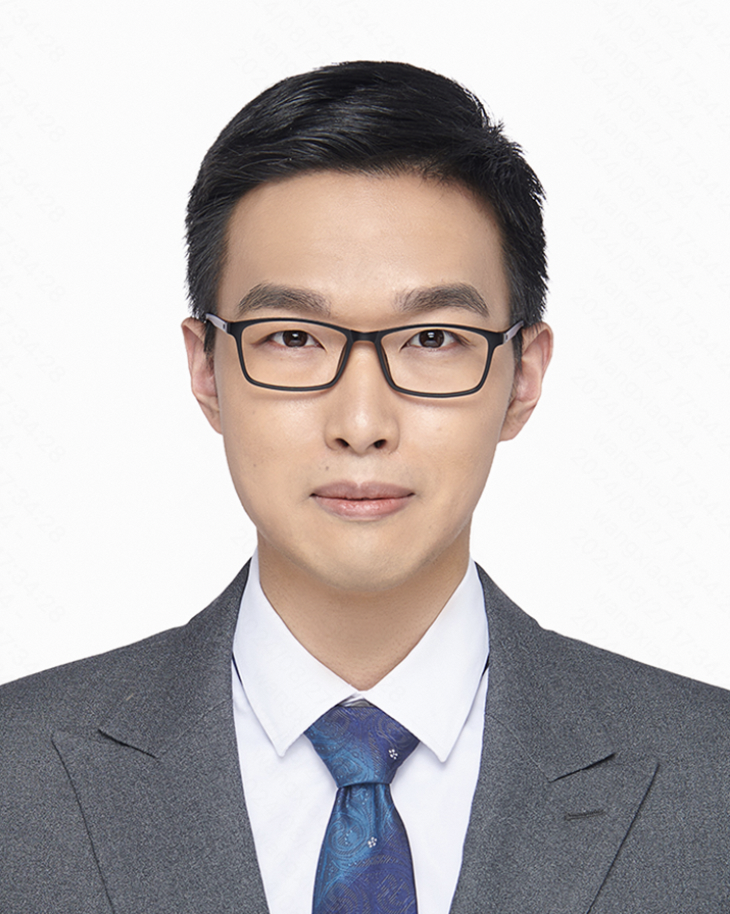}}]{Xiao Wang}
is currently working toward a PhD degree with the School of Computer Science and Technology, Harbin Institute of Technology, Shenzhen, China. He received the BS and MS degrees in Physics and Computer Science respectively from Shandong University, China. His research interests focus on video understanding based on multi-modal language models.
\end{IEEEbiography}

\begin{IEEEbiography}
[{\includegraphics[width=1in,height=1.25in,clip,keepaspectratio]{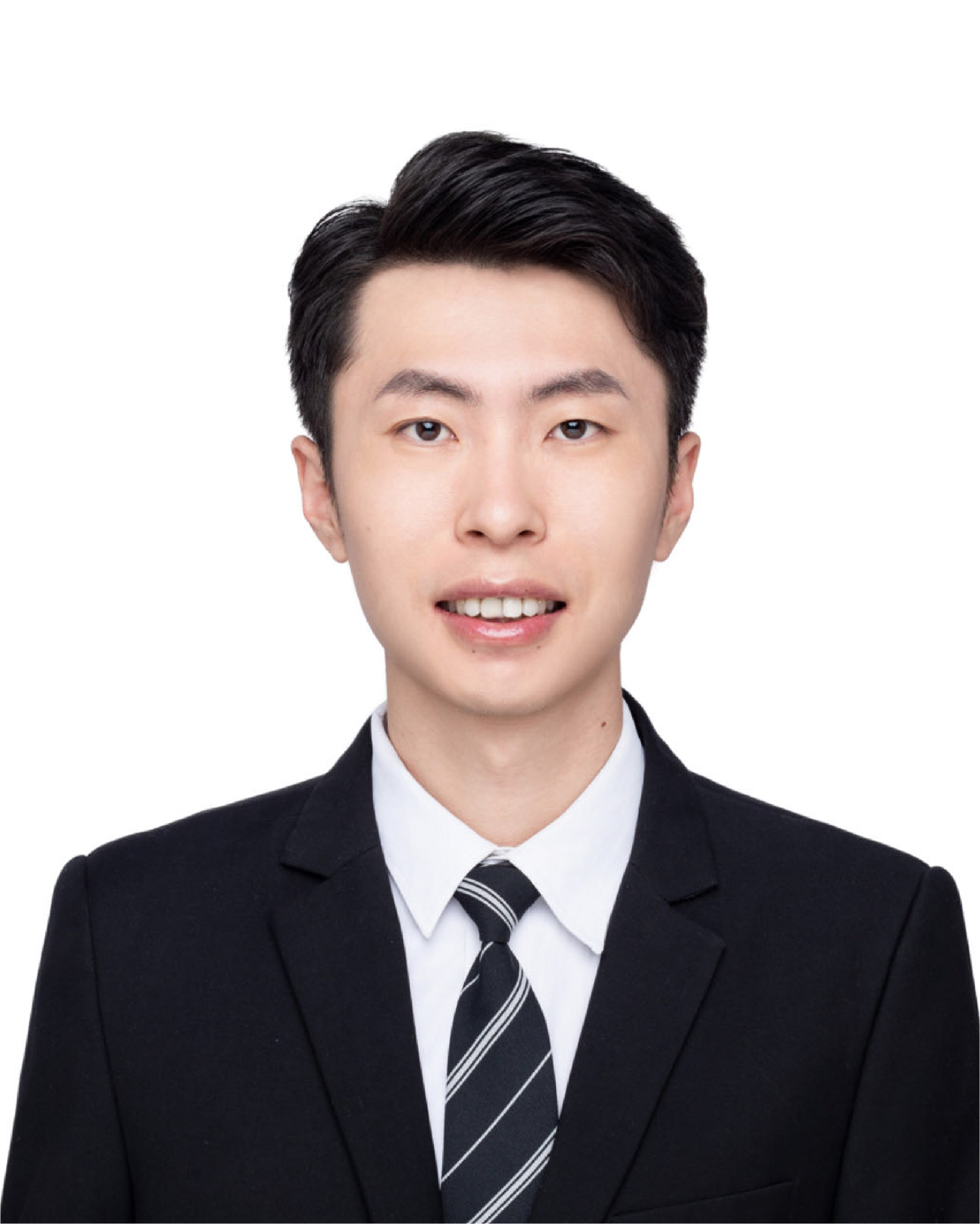}}]{Jianlong Wu}
(Member, IEEE) received the B.Eng. degree from the Huazhong University of Science and Technology, China, in 2014, and the Ph.D. degree from Peking University, China, in 2019. He is currently an Associate Professor with the Harbin Institute of Technology (Shenzhen), China. He was an Assistant Professor at the Shandong University from 2019 to 2022. He has published over 40 papers in top journals and conferences, such as IEEE TPAMI, ICML, NeurIPS, and ICCV. His research interests include computer vision and multi-modal learning. He received many awards, such as the Outstanding Reviewer of ICML 2020 and the Best Student Paper of SIGIR 2021. He serves as the Area Chair for NeurIPS and ACM MM, and reviewer for many top journals and conferences, such as IEEE TPAMI and International Journal of Computer Vision.
\end{IEEEbiography}

\begin{IEEEbiographynophoto}{Zijia Lin}
is now a technical leader in Kuaishou. He received his BS and PhD degrees from the Department of Computer Science and Technology, Tsinghua University, China. He worked with Microsoft Research Asia before joining Kuaishou. His research areas include natural language processing, information retrieval, and computer vision.
\end{IEEEbiographynophoto}

\begin{IEEEbiography}
[{\includegraphics[width=1in,height=1.25in,clip,keepaspectratio]{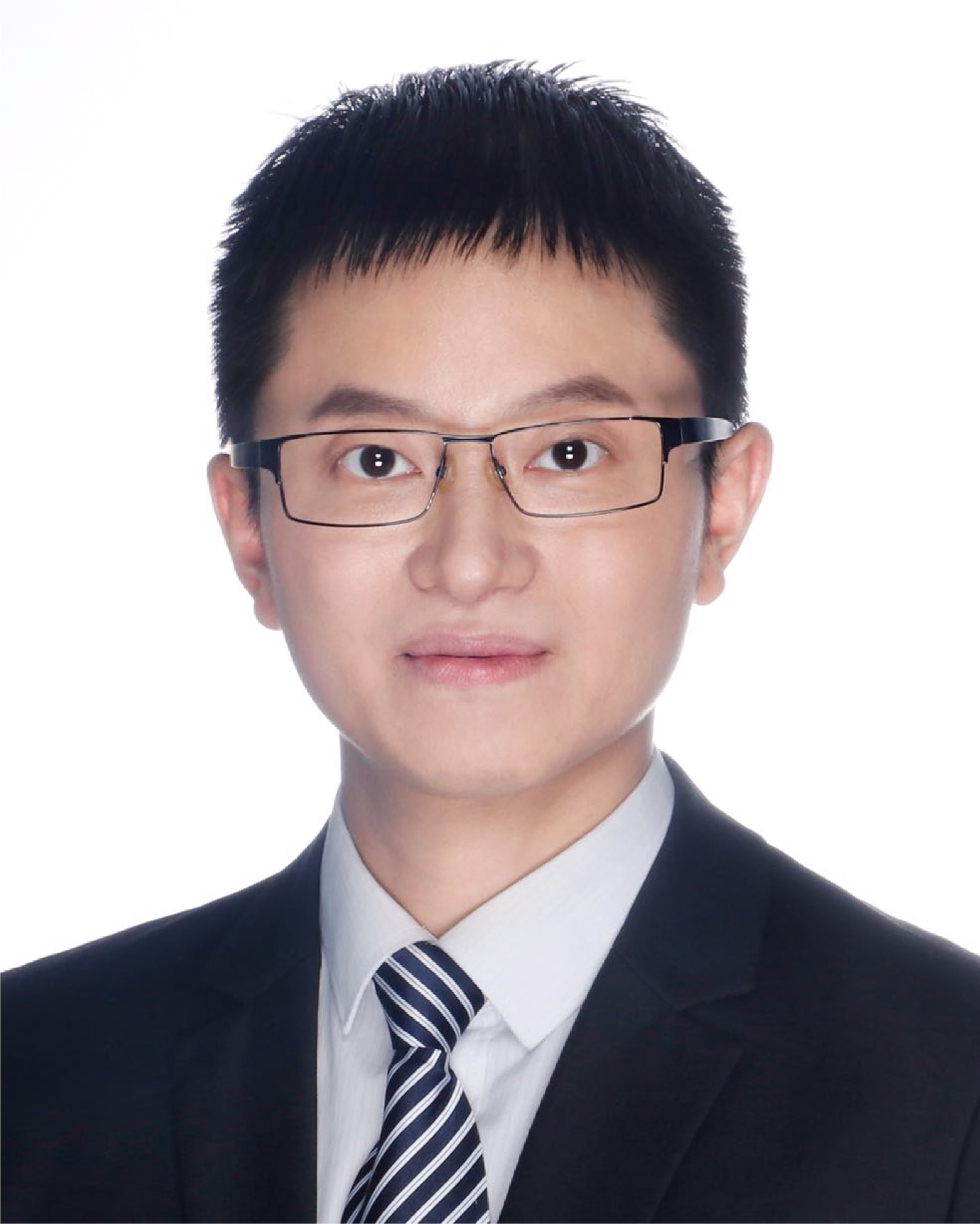}}]{Fuzheng Zhang}
is now a director in Kuaishou. His research interests include NLP, knowledge graphs, information retrieval, and recommender systems. He obtained his Ph.D. degree in computer science, and is supervised jointly by the University of Science and Technology of China and Microsoft Research Asia. Dr. Zhang has published many top-tier international conference papers and journal articles in his research area, such as KDD, WWW, AAAI, and IJCAI. He has received the best paper award in ICDM2013 and CIKM2020. Dr. Zhang is also active in academic activities. For example, he served as the industry chair in ASONAM2018 and has long served as the reviewer on top-tier international conferences and journals, such as KDD, WWW, and TKDE.
\end{IEEEbiography}

\begin{IEEEbiographynophoto}{Di Zhang}
Di Zhang is now a vice president in Kuaishou. He received his BS and MS degrees from the Department of Computer Science and Technology, Shanghai Jiao Tong University, China. He worked with Alibaba before joining Kuaishou. His research area focuses on large-scale AI infrastructure.
\end{IEEEbiographynophoto}

\begin{IEEEbiography}
[{\includegraphics[width=1in,height=1.25in,clip,keepaspectratio]{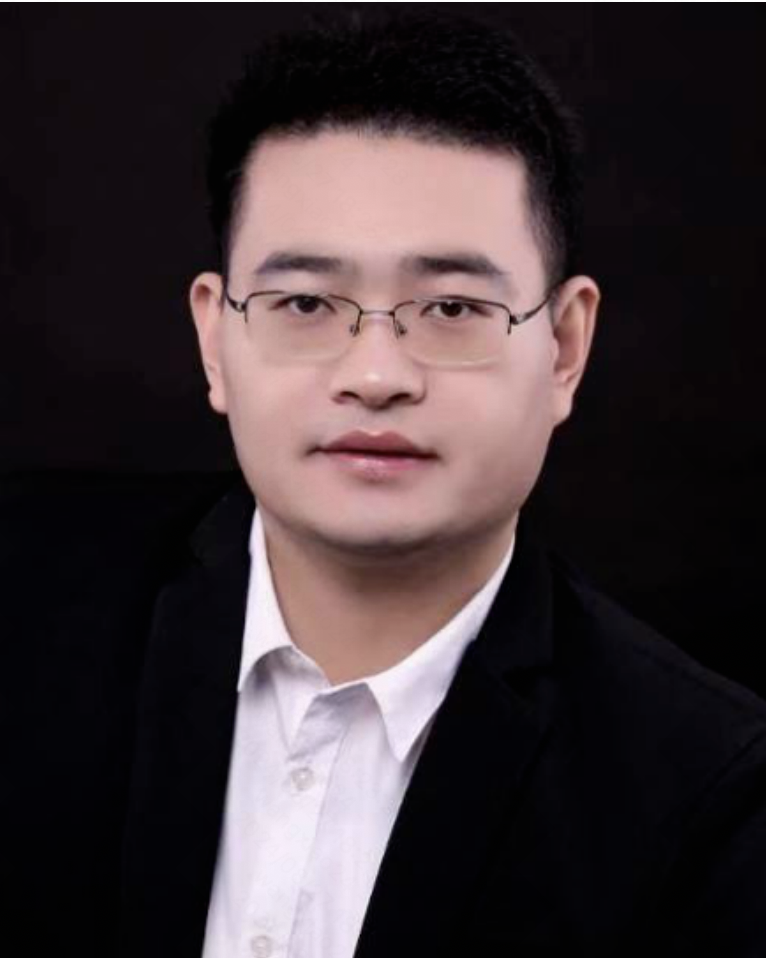}}]{Liqiang Nie}
(Senior Member, IEEE) is currently the dean of the School of Computer Science and Technology, Harbin Institute of Technology (Shenzhen). He is a fellow of IAPR and AAIA. He received his B.Eng. and Ph.D. degrees from Xi’an Jiaotong University and the National University of Singapore, respectively. His research interests lie primarily in multimedia content analysis and information retrieval. He is an AE of IEEE TKDE, IEEE TMM, IEEE TCSVT, ACM ToMM, and Information Science. Meanwhile, he is the regular AC or SPC of ACM MM, NeurIPS, IJCAI, and AAAI. He has received many awards, like the ACM MM and SIGIR Best Paper Honorable Mention in 2019, SIGMM Rising Star in 2020, SIGIR Best Student Paper in 2021, and ACM MM Best Paper Award in 2022.
\end{IEEEbiography}

%% insert where needed to balance the two columns on the last page with
%% biographies
%%\newpage

%\begin{IEEEbiographynophoto}{Jane Doe}
%Biography text here.
%\end{IEEEbiographynophoto}
% ==== SWITCH OFF the BIO for submission
% ==== SWITCH OFF the BIO for submission

% You can push biographies down or up by placing
% a \vfill before or after them. The appropriate
% use of \vfill depends on what kind of text is
% on the last page and whether or not the columns
% are being equalized.

\vfill

% Can be used to pull up biographies so that the bottom of the last one
% is flush with the other column.
%\enlargethispage{-5in}

% if have a single appendix:
%\appendix[Proof of the Zonklar Equations]
% or
%\appendix  % for no appendix heading
% do not use \section anymore after \appendix, only \section*
% is possibly needed

% use appendices with more than one appendix
% then use \section to start each appendix
% you must declare a \section before using any
% \subsection or using \label (\appendices by itself
% starts a section numbered zero.)
%

% ============================================
\appendices

\section{The Impossible Data Trinity} \label{sec:sup_imp_trinity}

\subsection{Quantity}

\begin{table}[t]

\caption{
Cost of human-annotated datasets.
}

\label{tab:anno_cost_human}
\resizebox{\columnwidth}{!}{

\begin{tabular}{ccccc}
\hline
\textbf{Dataset name}                                                                   & \textbf{\#Captions} & \textbf{\#Tokens} & \textbf{\$/Caption} & \textbf{\#Tokens/\$} \\ \hline
MSVD \cite{msvd_2011}                 & 81K  & 0.70M & 0.05 & 173 \\
VATEX \cite{vatex_2019}               & 350K & 6.38M & 0.12 & 152 \\
\begin{tabular}[c]{@{}c@{}}ActivityNet\\ Captions \cite{anet_caption_2017}\end{tabular} & 72K                & 1.24M            & 0.12                & 144               \\
QVHighlights \cite{qvhighlights_2021} & 10K  & 0.14M & 0.25 & 56   \\ \hline
\end{tabular}

}
\end{table}

\begin{table}[t]

\caption{
Cost of the art asset and ASR datasets. 36C 96G$\times$ 2 means 2 cloud instances each with 36 CPU and 96GB memory. We only download a subset of YT-Temporal-180M \cite{zellers_yt-tmp-180m_2021} for cost estimation.
}

\label{tab:anno_cost_crawl}
\centering
% \resizebox{\columnwidth}{!}{

\begin{tabular}{ccc}
\hline
\textbf{Dataset name}    & YT-Temporal \cite{zellers_yt-tmp-180m_2021} & WebVid \cite{bain_webvid_2021} \\ \hline
\textbf{\#Captions}             & 6,665,285   & 9,895,441   \\
\textbf{\#Tokens}               & 213,289,120 & 227,814,515 \\
\textbf{Cloud instances} & 36C 96G $\times$ 2                          & 8C 16G $\times$ 32             \\
\textbf{\$/(instance*h)}       & 1.944       & 0.272       \\
\textbf{Download duration (h)} & 554         & 266         \\
\textbf{\#Tokens/\$}              & 99022       & 98397       \\ \hline
\end{tabular}

% }
\end{table}

\textbf{Definition.}
Quantity refers to the number of text annotations we can collect under a certain budget:
\begin{equation}
    \textmd{Quantity}=
    \frac{
        \left | \textmd{Tokens in text annotations} \right | 
    }
    {
        \textmd{Dataset collection cost (in \$)}
    }.
\end{equation}
We use the number of tokens instead of sentences to measure the number of text annotations, since the length of each sentence varies between datasets. We use the same tokenizer as Vicuna 1.5 \cite{vicuna_llm}.
We do not compare the number of annotations among datasets directly, since some datasets are collected with similar methods but varying budgets (e.g. YT-Temporal 2B \cite{zellers_yt-tmp-2b_2022} and HD-VILA 100M \cite{xue_hd-vila_2022}).

\textbf{Calculation.}
For human-annotated datasets, we list datasets that the collection cost is revealed in their paper in \autoref{tab:anno_cost_human}. 
For arts assert and ASR datasets, we estimate their collection cost through experiments. Specifically, we download subsets of YT-Temporal-180M \cite{zellers_yt-tmp-180m_2021} and WebVid \cite{bain_webvid_2021} datasets, and estimate the collection cost through the resources consumed. The collection is composed of data crawling and downloading costs. The downloading cost is the majority since it is a compute-heavy task. The results are presented in \autoref{tab:anno_cost_crawl}. The two datasets use different cloud instances because we chose them to optimize the utilization rate. The ASR datasets require video clipping after downloading, requiring more CPU cores per instance. The price of could instances is calculated based on Amazon Web Services Pricing Calculator\footnote{\url{https://calculator.aws/\#/?nc2=h_ql_pr_calc}}. Note that the whole downloading process is completed in clusters of our own company. We chose Amazon Elastic Compute Cloud instances that share similar abilities with ours (c6g.2xlarge for WebVid and c5n.9xlarge for YT-Temporal).

\subsection{Diversity}

\begin{figure}[t]
    \centering
    \includegraphics[width=0.66\linewidth]{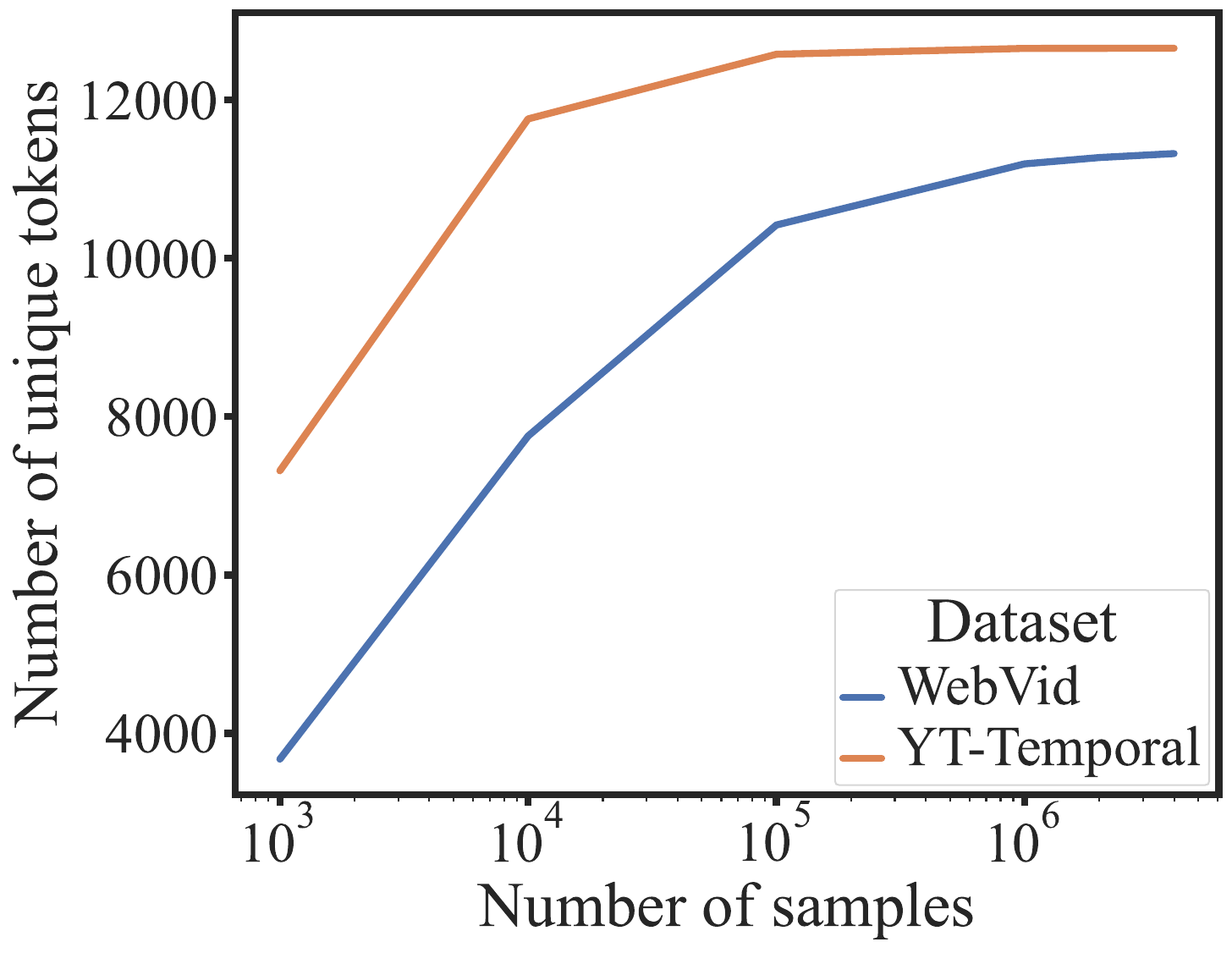}
    % \vspace{-2em}
    \caption{Number of unique tokens saturates with the increase of sampled text annotations.}
    % \vspace{-1em}
    \label{fig:diversity_scaling}
\end{figure}

\begin{table}[t]

\caption{
Diversity of the art asset and ASR datasets.
}

\label{tab:diversity_crawl}
% \resizebox{\columnwidth}{!}{

\centering

\begin{tabular}{ccc}
\hline
\textbf{Dataset name}                       & \textbf{\#Unique tokens} & \textbf{Diversity} \\ \hline
YT-Temporal \cite{zellers_yt-tmp-180m_2021} & 28589                    & 12650              \\
WebVid \cite{bain_webvid_2021}              & 20399                    & 11321              \\ \hline
\end{tabular}

% }
\end{table}

\textbf{Definition.}
Inspired by Nguyen et al. \cite{nguyen_improving_2023}, we define diversity as the number of unique tokens in text annotations:
\begin{equation}
    \textmd{Diversity} = \left | \textmd{Unique tokens in text annotations} \right |. 
\end{equation}
Our definition differs from Nguyen et al. \cite{nguyen_improving_2023} in two aspects. 1) We consider tokens instead of tri-gram since the latter has no specific meanings. 2) We consider only tokens that appeared in all human-annotated video-language datasets. This helps filter some rare long-tailed tokens such as names and special symbols.

\textbf{Calculation.} We randomly sample video-text pairs from Webvid \cite{bain_webvid_2021} and YT-Temporal-180M \cite{zellers_yt-tmp-180m_2021}, and tokenized their text annotations using the same tokenizer as Vicuna 1.5 \cite{vicuna_llm}. We only consider tokens that appeared in human-annotated video-language datasets MSR-VTT \cite{msrvtt_2016}, MSVD \cite{msvd_2011}, VATEX \cite{vatex_2019}, Youcook2 \cite{youcook_2016}, ActivityNet-Captions \cite{anet_caption_2017}, and QV-Highlights \cite{qvhighlights_2021}. We found that the number of unique tokens will saturate with the increase of sampled video-text pairs, as illustrated in \autoref{fig:diversity_scaling}. We fix the number of sampled video-text pairs into 6M. The results of the art assets and ASR datasets are listed in \autoref{tab:diversity_crawl}.

\subsection{Quality}

We define quality as the accuracy of text annotations in the dataset. 
For ASR datasets, we adopt results from the manual evaluation by Miech et al. \cite{miech_howto100m_2019}. It says that 51\% of text annotations have corresponding visual content in the video.
For human-annotated datasets, we set their quality as 100\%.
For art assets datasets, according to the qualitative analysis by Bain et al., the quality can be regarded as 100\% since the text annotations are uploaded by the artists.

\section{Discussion on noise control baselines} \label{sec:sup_noise_baselines}

\begin{figure}[t]
    \centering
    \includegraphics[width=0.495\linewidth]{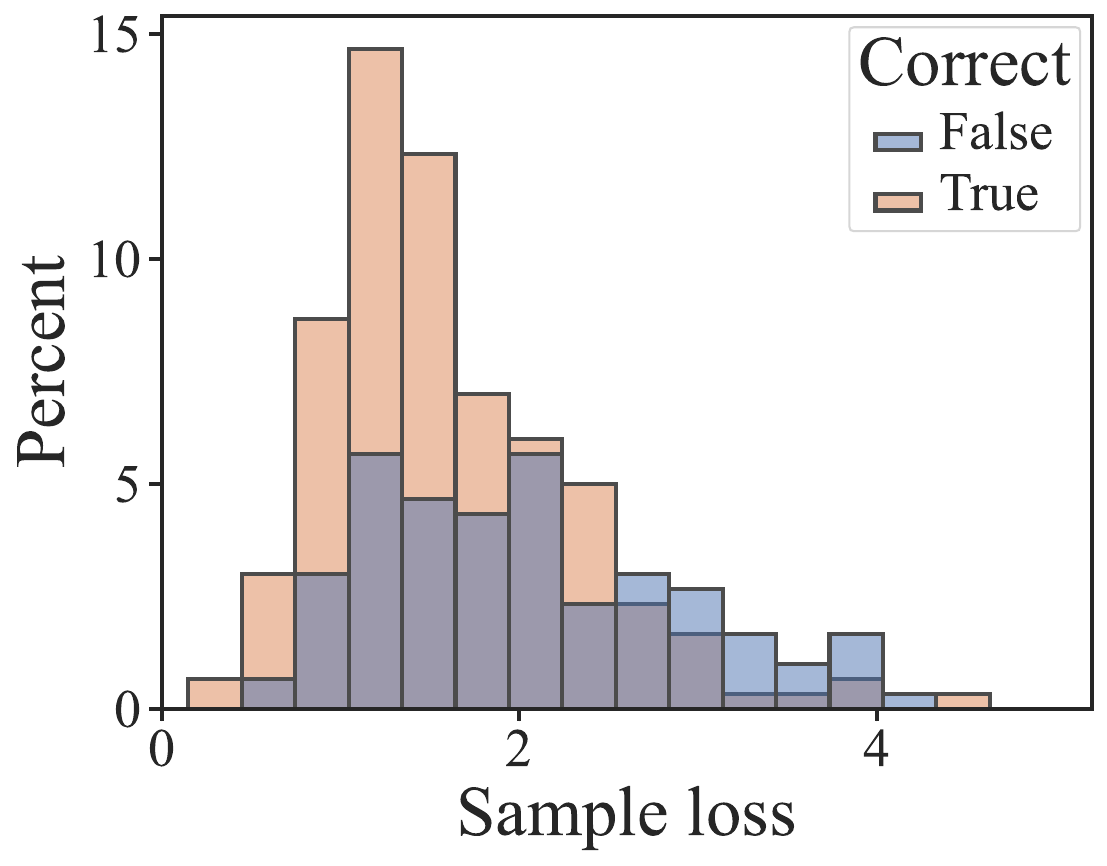}
    % \vspace{-2em}
    \caption{Sample loss of our annotated InternVid \cite{wang_internvid_2023} subset. "Correct" means whether the text annotation can describe the video content. Clearly the sample loss does not follow the mixture of two Gaussian distributions.}
    % \vspace{-1em}
    \label{fig:internvid_sample_loss}
\end{figure}

The noise control targets noise reduction in raw or synthetic annotations. These methods typically make assumptions about noise distribution and mitigate data noise based on these assumptions, such as MIL-NCE \cite{miech_mil_nce_2020}, NCR \cite{huang_ncr_2021}, and CTPR \cite{feng_ctpr_2023}. However, these assumptions may not always align with real data distribution.

MIL-NCE \cite{miech_mil_nce_2020} assumes that video clips and ASR transcripts are just temporally misaligned, and formulates video-language contrastive learning as a multi-instance learning task. However, as revealed by Panda-70M \cite{chen_panda_70m_2024}, the biggest problem with ASR transcripts is that they are weakly aligned with the video, and many of them are irrelevant to the visual content.

NCR \cite{huang_ncr_2021} and CTPR \cite{feng_ctpr_2023} model the loss distribution as a mixture of clean and noisy Gaussian distribution, and re-weight the sample loss by using the posterior probability of clean distribution. Specifically, they assume that the sample loss of all training data is composed of a $K$ component Gaussian Mixture Model:
\begin{equation}
    p(l|\theta) = \sum_{k=1}^K{\beta_k\phi(l|k)},
\end{equation}
where $\beta_k$ and $\phi(l|k)$ are the mixture coefficient and the probability density of the $k$-th component, respectively. For NCR \cite{huang_ncr_2021}, K=2 (clean and noise). For CTPR \cite{feng_ctpr_2023}, K=3 (clean, hard, and noise). To examine this assumption, we manually annotate 300 video-text pairs from the synthetic dataset InternVid \cite{wang_internvid_2023}, and train Video-LLaVA \cite{video_llava} on a randomly sampled 770K subset of InternVid (the annotated data is excluded). The Video-LLaVA is trained exactly as the original paper, except we changed the video-pertaining dataset into Internvid. Then we plot the loss of annotated clean and noisy samples. As illustrated in \autoref{fig:internvid_sample_loss}, the loss distribution does not follow the mixture of Gaussians.

\section{Proof of AdaTaiLr}

\subsection{Preliminaries}

% Restatement of equations in \autoref{sec:adatailr}. The TaiLr estimation error of the upper bound of TVD $\epsilon$:
% \begin{equation} \label{eq:estimation_error}
%     \epsilon(p_o^{<t}, p_\theta^{<t}, \gamma) = (1 - \gamma) \mathcal{L}_{\textmd{TVD}}(p_o^{<t}, p_\theta^{<t}) + 
%     \gamma 2H_2(p_o^{<t}),
% \end{equation}

Hölder's inequality. Let $p,q\in[0, \infty]$ with $\frac{1}{p} + \frac{1}{q} = 1$. Then for any vectors $\mathbf{u}, \mathbf{v}$, the following inequality holds: 
\begin{equation} \label{eq:holder}
    \left \| \mathbf{u}\odot\mathbf{v} \right \|_1 \le \left \| \mathbf{u} \right \|_p \left \| \mathbf{v} \right \|_q,
\end{equation}
where $\left \| \cdot \right \|_*$ is the $L_*$-norm, and $\odot$ denotes element-wise vector multiplication.

% === Proof of optimal gamma ====================================================================
\subsection{Proof of Theorem~\ref{thm:og}} \label{sec:proof_og}

\og*

\begin{proof}
The estimation error $\epsilon(p_o^{<t}, p_\theta^{<t}, \gamma)$ in \autoref{eq:estimation_error} can be written into a linear function with respect to $\gamma\in[0,1]$:
\begin{equation}
    \epsilon = 
    - \gamma \left [
        \mathcal{L}_{\textmd{TVD}}(p_o^{<t}, p_\theta^{<t}) - 2H_2(p_o^{<t})
    \right ]
    + \mathcal{L}_{\textmd{TVD}}(p_o^{<t}, p_\theta^{<t}).
\end{equation}
When $\mathcal{L}_{\textmd{TVD}}(p_o^{<t}, p_\theta^{<t}) \ge 2H_2(p_o^{<t})$, the above linear function achieves its minimal at $\gamma=1$. When $\mathcal{L}_{\textmd{TVD}}(p_o^{<t}, p_\theta^{<t}) < 2H_2(p_o^{<t})$, it achieves its minimal at $\gamma=0$. 
In conclusion, $\gamma$ achieves its minimum at $\gamma=\Gamma_{opt}(p_o^{<t}, p_\theta^{<t})$.
\end{proof}

% === Proof of approximation of optimal gamma ====================================================================
\subsection{Proof of Theorem~\ref{thm:aog}} \label{sec:proof_aog}

Before we start the proof, we introduce the basic ideas to approximate the $\Gamma_{opt}$ in \autoref{eq:opt_gamma}, followed by some lemmas. 

As discussed in \autoref{sec:adatailr}, one issue for $\Gamma_{opt}$ is the rough indicator function $\mathbbm{1}[z]$. Therefore, we first use function $f(z)$ as the smooth approximation:
\begin{equation} \label{eq:def_of_f}
    f(z) = \begin{cases}
        0 & \text{ if } z < -\frac{1}{2\lambda} \\
        \lambda z + \frac{1}{2}   & \text{ if } -\frac{1}{2\lambda} \le z \le \frac{1}{2\lambda} \\
        1 & \text{ if } z > \frac{1}{2\lambda}
    \end{cases},
\end{equation}
where $z$ stands for:
\begin{equation} \label{eq:z}
    z=\mathcal{L}_{\textmd{TVD}}(p_o^{<t}, p_\theta^{<t}) - 2H_2(p_o^{<t}),
\end{equation}
and $\lambda>0$ is a constant controlling the smoothness of the approximation. Obviously, we have
\begin{equation}
    \mathbbm{1}[z] = \lim_{\lambda \to \infty} f(z).
\end{equation}

For the other issue that real data distribution $p_o$ which is unavailable during training, we use the predicted distribution $p_\theta$ and the one-hot distribution sampled from real data $e^{(w)}\sim p_o^{<t}$ instead. Specifically, we use $\Tilde{z}$ as the approximation of $z$: 
\begin{equation} \label{eq:appo_z}
    \Tilde{z}=\mathcal{L}_{\textmd{TVD}}(e^{(w)}, p_\theta^{<t}) - 2H_2(p_\theta^{<t}).
\end{equation}

Finally, the optimal $\Gamma_{\textmd{opt}}$ function and our approximation $\Tilde{\Gamma}_{\textmd{opt}}$ can be written as:
\begin{equation}
    \begin{cases}
        \Gamma_{\textmd{opt}} &= \mathbbm{1}[z], \\
        \Tilde{\Gamma}_{\textmd{opt}} &= f(\Tilde{z}).
    \end{cases}
\end{equation}

\begin{lemma}
\label{lem:expection_sample_tvd}
    Given one-hot distribution sampled from real data $e^{(w)}\sim p_o^{<t}$, the expectation of empirical TVD in \autoref{eq:appo_z} is:
    \begin{equation}
        \mathbb{E}_{w\sim p_o^{<t}} \left [
            \mathcal{L}_{\textmd{TVD}}(e^{(w)}, p_\theta^{<t})
        \right ] = 1 - \left \langle p_\theta^{<t}, p_o^{<t} \right \rangle,
    \end{equation}
    where $\left \langle \cdot \right \rangle$ means inner product between vectors.
\end{lemma}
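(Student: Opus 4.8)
The plan is to reduce the expectation to a pointwise evaluation of the total variation distance against a one-hot vector, which collapses to a single closed form. First I would fix the sampled index $w$ and expand $\mathcal{L}_{\textmd{TVD}}(e^{(w)}, p_\theta^{<t})$ directly from its definition as $\frac{1}{2}\sum_i |e^{(w)}_i - p_\theta^{<t}(i)|$. Splitting the vocabulary sum into the single coordinate $i=w$ and the remaining coordinates $i\ne w$, and using that each $p_\theta^{<t}(i)\in[0,1]$, every absolute value can be removed: the $i=w$ term contributes $|1-p_\theta^{<t}(w)| = 1-p_\theta^{<t}(w)$, while the off-diagonal terms contribute $\sum_{i\ne w} p_\theta^{<t}(i)$. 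Invoking normalization $\sum_i p_\theta^{<t}(i)=1$ gives $\sum_{i\ne w} p_\theta^{<t}(i) = 1-p_\theta^{<t}(w)$, so the two contributions coincide and the prefactor $\frac{1}{2}$ cancels the doubling, yielding the clean identity $\mathcal{L}_{\textmd{TVD}}(e^{(w)}, p_\theta^{<t}) = 1-p_\theta^{<t}(w)$.

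Next I would take the expectation over $w\sim p_o^{<t}$. By linearity, $\mathbb{E}_{w\sim p_o^{<t}}[1-p_\theta^{<t}(w)] = 1 - \sum_w p_o^{<t}(w)\,p_\theta^{<t}(w)$, and the residual sum is precisely the inner product $\left \langle p_\theta^{<t}, p_o^{<t} \right \rangle$, which is exactly the claimed value $1 - \left \langle p_\theta^{<t}, p_o^{<t} \right \rangle$.

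I do not anticipate a genuine obstacle here, since the computation is elementary; the only point requiring care is the removal of absolute values, which relies on $p_\theta^{<t}$ being a bona fide probability vector (nonnegative entries summing to one), so that both $1-p_\theta^{<t}(w)\ge 0$ and each off-diagonal entry is already nonnegative. This lemma is plainly a stepping stone toward Theorem~\ref{thm:aog}: having the expectation of the single-sample empirical TVD $\mathcal{L}_{\textmd{TVD}}(e^{(w)}, p_\theta^{<t})$ in closed form is what lets one relate the training-time surrogate $\tilde{z}$ in \autoref{eq:appo_z} back to the true quantity $z$ in \autoref{eq:z}, and thereby control the approximation error of $\tilde{\Gamma}_{opt}$.
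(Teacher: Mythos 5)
Your proof is correct and follows essentially the same route as the paper's: both arguments reduce to the pointwise identity $\mathcal{L}_{\textmd{TVD}}(e^{(w)}, p_\theta^{<t}) = 1 - p_{\theta w}^{<t}$ and then conclude by linearity of expectation, giving $1 - \left\langle p_\theta^{<t}, p_o^{<t} \right\rangle$. The only difference is cosmetic: the paper derives the pointwise identity via the overlap formula $\mathcal{L}_{\textmd{TVD}}(p,q) = 1 - \sum_i \min\left(p_i, q_i\right)$, whereas you expand the $\tfrac{1}{2}L_1$ definition in Eqn.~(2) directly and split off the coordinate $i=w$ — your version is, if anything, slightly more self-contained, since the overlap formula is used in the paper without being stated or proved.
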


\begin{proof}
    \begin{align}
        &\mathbb{E}_{w\sim p_o^{<t}} \left [
            \mathcal{L}_{\textmd{TVD}}(e^{(w)}, p_\theta^{<t})
        \right ] \\
        &= \mathbb{E}_{w\sim p_o^{<t}} \left [
            1 - \sum_i \textmd{min}\left( e_i^{(w)}, p_{\theta i}^{<t} \right )
        \right ] \\
        &= \mathbb{E}_{w\sim p_o^{<t}} \left [
            1 - p_{\theta w}^{<t}
        \right ] \\
        &= 1 - \sum_i p_{\theta i}^{<t} p_{o i}^{<t} \\
        &= 1 - \left \langle p_\theta^{<t}, p_o^{<t} \right \rangle.
    \end{align}
\end{proof}

\begin{lemma}
\label{lem:l1_and_inf_norm}
    Let $\mathbf{u}, \mathbf{v}\in\mathbb{R}^n$ be two vectors with:
    \begin{equation}
        u_i, v_i \ge 0,
        \sum_i^n{u_i} = 1,
        \sum_i^n{v_i} = 1.
    \end{equation}
    Then the following inequality holds:
    \begin{equation}
        \left \| \mathbf{u} - \mathbf{v} \right \|_\infty
        \le
        \frac{1}{2} \left \| \mathbf{u} - \mathbf{v} \right \|_1.
    \end{equation}
\end{lemma}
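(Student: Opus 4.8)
The plan is to reduce the inequality to a statement about the single vector $\mathbf{w} := \mathbf{u} - \mathbf{v}$, since the decisive structural fact is that its entries sum to zero, and it is precisely this that produces the factor $\tfrac12$. First I would record that $\sum_{i=1}^n w_i = \sum_i u_i - \sum_i v_i = 1 - 1 = 0$ from the two normalization hypotheses. I would note that the sign constraints $u_i, v_i \ge 0$ serve only to make $\mathbf{u}, \mathbf{v}$ genuine probability vectors; the argument itself uses nothing beyond $\sum_i w_i = 0$.

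Next I would fix an arbitrary coordinate $k$ and bound $|w_k|$ directly. Writing $w_k = -\sum_{i \ne k} w_i$ via the zero-sum identity and applying the triangle inequality gives $|w_k| \le \sum_{i \ne k} |w_i| = \|\mathbf{w}\|_1 - |w_k|$. Rearranging yields $2|w_k| \le \|\mathbf{w}\|_1$, i.e. $|w_k| \le \tfrac12 \|\mathbf{w}\|_1$. Since $k$ is arbitrary, taking the maximum over all coordinates gives $\|\mathbf{w}\|_\infty \le \tfrac12 \|\mathbf{w}\|_1$, which is exactly the assertion with $\mathbf{w} = \mathbf{u} - \mathbf{v}$.

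For intuition I might also mention the equivalent partition argument: splitting the indices into $P = \{i : w_i > 0\}$ and $N = \{i : w_i < 0\}$, the zero-sum condition forces $\sum_{i \in P} w_i = \sum_{i \in N} |w_i| = \tfrac12 \|\mathbf{w}\|_1$, and each individual $|w_i|$ is bounded by the total mass on its own side, hence by $\tfrac12 \|\mathbf{w}\|_1$.

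There is essentially no hard step here, as this is an elementary norm inequality; the only things to watch are invoking the zero-sum constraint at the right moment and handling the absolute value (equivalently the sign of $w_k$) cleanly. The single conceptual point worth flagging is that without the equal-sum hypothesis only the trivial bound $\|\mathbf{w}\|_\infty \le \|\mathbf{w}\|_1$ holds; the improvement by a factor of two is entirely a consequence of $\sum_i w_i = 0$.
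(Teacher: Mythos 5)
Your proof is correct, and your primary argument takes a slightly different route than the paper's. The paper partitions the indices into $I^+=\{i : u_i - v_i \ge 0\}$ and $I^-=\{i : u_i - v_i < 0\}$ and uses the equal-sum hypothesis to derive the identity $\left\| \mathbf{u}-\mathbf{v} \right\|_1 = 2\sum_{i\in I^+}(u_i-v_i) = 2\sum_{i\in I^-}(v_i-u_i)$, from which the bound follows because each $|u_i - v_i|$ is dominated by the sum over its own side; this is exactly the ``equivalent partition argument'' you relegate to an intuition remark. Your main argument instead isolates a single coordinate, writes $w_k = -\sum_{i\ne k} w_i$ from the zero-sum condition, and applies the triangle inequality to get $2|w_k| \le \left\| \mathbf{w} \right\|_1$ directly — this avoids any case split on signs and is arguably the cleaner of the two, though it yields only the inequality, whereas the paper's partition yields the stronger exact identity that the $L_1$ mass splits evenly between the positive and negative parts. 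Your observation that non-negativity of $\mathbf{u},\mathbf{v}$ is never used and only $\sum_i u_i = \sum_i v_i$ matters is also accurate: the paper's proof likewise relies only on the equal sums. Both arguments are valid; the difference is one of packaging rather than substance.
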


\begin{proof}
    Denote $I^+=\{i|u_i - v_i \ge 0\}$ and $I^-=\{i|u_i - v_i < 0\}$.
    \begin{align}
        & \left \| \mathbf{u} - \mathbf{v} \right \|_1 \\
        &= \sum_i{\left | u_i - v_i \right |} \\
        &= \sum_{i\in I^+}{\left(u_i - v_i\right)} + \sum_{i\in I^-}{\left(v_i - u_i\right)} \\
        &= \sum_{i\in I^+}{\left(u_i - v_i\right)} - (1 - \sum_{i\in I^-}{v_i}) + (1 - \sum_{i\in I^-}{u_i}) \\
        &= \sum_{i\in I^+}{\left(u_i - v_i\right)} - \sum_{i\in I^+}{v_i} + \sum_{i\in I^+}{u_i} \\
        &= 2\sum_{i\in I^+}{\left(u_i - v_i\right)}.
    \end{align}
    Symmetrically, we can also get:
    \begin{equation}
        \left \| \mathbf{u} - \mathbf{v} \right \|_1 = 2\sum_{i\in I^-}{\left(v_i - u_i\right)}.
    \end{equation}
    Therefore, 
    \begin{align}
        & \left \| \mathbf{u} - \mathbf{v} \right \|_1 \\
        &\ge 2 \max_{i}{\left|u_i - v_i\right|} \\
        &= 2 \left \|u_i - v_i\right \|_\infty.
    \end{align}
    That is
    \begin{equation}
        \left \| \mathbf{u} - \mathbf{v} \right \|_\infty
        \le
        \frac{1}{2} \left \| \mathbf{u} - \mathbf{v} \right \|_1.
    \end{equation}
\end{proof}

\begin{lemma}
    \label{lem:diff_of_z}
    Assume that after some warm-up steps during training, there exists $D>0$ under which $\left \| p_\theta - p_o \right \|_1 \le 2D$.
    Given one-hot distribution sampled from real data $e^{(w)}\sim p_o^{<t}$, the distance between $z$ and $\Tilde{z}$ can be characterized by the subsequent bound:
    \begin{equation}
         \left | z - \mathbb{E}_{w\sim p_o} \left [ \Tilde{z} \right ] \right | \le 4D.
    \end{equation}
\end{lemma}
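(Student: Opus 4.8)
The plan is to reduce the claim to a closed-form expression for $z - \mathbb{E}_{w\sim p_o}[\tilde z]$ and then bound it termwise under the warm-up hypothesis $\|p_\theta - p_o\|_1 \le 2D$. First I would note that the correction $2H_2(p_\theta^{<t})$ appearing in $\tilde z$ (\autoref{eq:appo_z}) does not depend on the sampled index $w$, so it passes through the expectation untouched; the only random ingredient is $\mathcal{L}_{\textmd{TVD}}(e^{(w)}, p_\theta^{<t})$, whose expectation is already supplied by \autoref{lem:expection_sample_tvd} as $1 - \langle p_\theta^{<t}, p_o^{<t}\rangle$. Substituting this together with the definition of $z$ from \autoref{eq:z} gives the decomposition
\begin{equation}
z - \mathbb{E}_{w}[\tilde z] = \Big(\mathcal{L}_{\textmd{TVD}}(p_o^{<t},p_\theta^{<t}) - 1 + \langle p_\theta^{<t}, p_o^{<t}\rangle\Big) + \Big(2H_2(p_\theta^{<t}) - 2H_2(p_o^{<t})\Big),
\end{equation}
which isolates an entropy gap and a term measuring the bias incurred by estimating the TVD from a single one-hot sample. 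The natural target splits the bound $4D$ as $2D + 2D$, one for each bracket.

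The entropy gap is the easy half. Using the closed form $2H_2(p) = 1 - \sum_i p_i^2$ of the Tsallis $2$-entropy, the second bracket equals $\sum_i\big((p_{oi}^{<t})^2 - (p_{\theta i}^{<t})^2\big) = \langle p_o^{<t} - p_\theta^{<t},\, p_o^{<t} + p_\theta^{<t}\rangle$. I would apply H\"older's inequality (\autoref{eq:holder}) with the conjugate pair $(\infty,1)$, bounding the magnitude by $\|p_o^{<t} - p_\theta^{<t}\|_\infty \,\|p_o^{<t} + p_\theta^{<t}\|_1$. Since both arguments are probability vectors, $\|p_o^{<t} + p_\theta^{<t}\|_1 = 2$, while \autoref{lem:l1_and_inf_norm} converts the hypothesis into $\|p_o^{<t} - p_\theta^{<t}\|_\infty \le \tfrac12\|p_o^{<t} - p_\theta^{<t}\|_1 \le D$; together these give the clean bound $2D$ for this bracket.

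The first bracket, the sampling bias of the TVD, is where I expect the real obstacle to sit, since a one-hot draw $e^{(w)}$ is a high-variance estimate of $p_o^{<t}$. Rewriting it as $\langle p_\theta^{<t}, p_o^{<t}\rangle - \sum_i \min(p_{oi}^{<t}, p_{\theta i}^{<t})$ and using $\sum_i \min = 1 - \mathcal{L}_{\textmd{TVD}}(p_o^{<t}, p_\theta^{<t})$, one sees that a direct reverse-triangle bound on the metric $\mathcal{L}_{\textmd{TVD}}$ only returns $\mathbb{E}_w[\mathcal{L}_{\textmd{TVD}}(p_o^{<t}, e^{(w)})] = 2H_2(p_o^{<t})$, which is $\Theta(1)$ rather than $O(D)$. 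Thus the decisive and delicate step is to show that this $\Theta(1)$ entropy-scale contribution of the first bracket is offset against the entropy terms once the warm-up hypothesis is in force: substituting $p_\theta^{<t} = p_o^{<t} + \delta$ with $\|\delta\|_1 \le 2D$ and $\sum_i \delta_i = 0$, the aim is to cancel the $\delta$-independent pieces and leave only first-order inner products $\langle \delta, \cdot\rangle$, which H\"older (\autoref{eq:holder}) and \autoref{lem:l1_and_inf_norm} again pin to within $2D$. Verifying precisely which zeroth-order terms cancel — and checking that the leftover one-hot-sampling fluctuation is genuinely first order in $\delta$ — is the technical heart of the lemma; once this is in place, summing the two $2D$ contributions yields $|z - \mathbb{E}_w[\tilde z]| \le 4D$.
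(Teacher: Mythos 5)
Your reduction is set up correctly, and you have put your finger on exactly the right obstruction --- but the proposal stops precisely where the proof has to be carried out, and under the reading of the definitions you adopt, the step you defer cannot be completed. Your first two moves (pulling $2H_2(p_\theta^{<t})$ through the expectation via Lemma~\ref{lem:expection_sample_tvd}, and bounding the entropy gap $\|p_o^{<t}\|_2^2-\|p_\theta^{<t}\|_2^2$ by $2D$ via H\"{o}lder and Lemma~\ref{lem:l1_and_inf_norm}) coincide with how the paper handles its terms $A$ and $C$. The problem is the remaining bracket. Writing $1-\langle p_\theta^{<t},p_o^{<t}\rangle = 2H_2(p_o^{<t}) + \langle p_o^{<t}-p_\theta^{<t},\,p_o^{<t}\rangle$, your first bracket equals $-2H_2(p_o^{<t}) + O(D)$, while your second bracket is $O(D)$: the $\Theta(1)$ term $-2H_2(p_o^{<t})$ has nothing to cancel against, so the cancellation you hope to verify does not exist. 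Concretely, take $p_\theta^{<t}=p_o^{<t}$ near-uniform: every $D>0$ then satisfies the warm-up hypothesis and your entropy gap vanishes, yet $z-\mathbb{E}_{w}[\tilde{z}] = -2H_2(p_o^{<t})\approx -1$, so no bound of the form $4D$ can hold. In other words, with $2H_2(p)$ meaning $1-\|p\|_2^2$ in both $z$ and $\tilde{z}$ (the paper's stated Tsallis normalization), the statement you are trying to prove is false, and your route cannot be finished by more careful bookkeeping.

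The paper's own proof goes through because it expands the subtracted term in $\tilde{z}$ as $2\left(1-\|p_\theta^{<t}\|_2^2\right)$ --- i.e.\ \emph{twice} the Tsallis value, double the normalization it uses for $2H_2(p_o^{<t})$ inside $z$. With that extra copy of $1-\|p_\theta^{<t}\|_2^2$, the difference becomes exactly $A+B+C$ with $A=\frac{1}{2}\|p_\theta^{<t}-p_o^{<t}\|_1$, $B=\langle p_\theta^{<t},\,p_o^{<t}-p_\theta^{<t}\rangle$, and $C=\|p_o^{<t}\|_2^2-\|p_\theta^{<t}\|_2^2$; each is genuinely first order in $p_\theta^{<t}-p_o^{<t}$ and is bounded by $D$, $D$, and $2D$ respectively, using the same H\"{o}lder and $\ell_\infty$-vs-$\ell_1$ arguments you employ, giving $4D$ in total. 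The extra $1-\|p_\theta^{<t}\|_2^2$ is precisely what absorbs the one-hot sampling bias $1-\langle p_\theta^{<t},p_o^{<t}\rangle\approx 2H_2(p_o^{<t})$ that you correctly identified as $\Theta(1)$. So the gap in your proposal is not a technicality: closing it requires changing the normalization of the $H_2(p_\theta^{<t})$ term in $\tilde{z}$, which is what the paper's proof implicitly does (at the cost of an inconsistency with its own definition of $H_\alpha$); under the normalization you kept, the lemma itself fails.
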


\begin{proof}
    Firstly, by expanding $z$ in \autoref{eq:z} and $\Tilde{z}$ in \autoref{eq:appo_z} we can get:
    \begin{align}
        z &= \mathcal{L}_{\textmd{TVD}}(p_o^{<t}, p_\theta^{<t}) - 2H_2(p_o^{<t}) \\
          &= \frac{1}{2} \left \| p_\theta^{<t} - p_o^{<t} \right \|_1 - \left ( 1 - \left \| p_o^{<t} \right \|_2^2 \right ),
    \end{align}
    and
    \begin{align}
        \mathbb{E}_{w\sim p_o} \left [ \Tilde{z} \right ] &= \mathbb{E}_{w\sim p_o} \left [ \mathcal{L}_{\textmd{TVD}}(e^{(w)}, p_\theta^{<t})  \right ] - 2H_2(p_\theta^{<t}) \\
        &= 1 - \left \langle p_\theta^{<t}, p_o^{<t} \right \rangle - 2 \left ( 1 - \left \| p_\theta^{<t} \right \|_2^2 \right ), \label{eq:diff_of_z_proof_appo_z_wo_E}
    \end{align}
    where \autoref{eq:diff_of_z_proof_appo_z_wo_E} uses \autoref{lem:expection_sample_tvd}.

    Then, $\left | z - \Tilde{z} \right |$ can be represented as:
    \begin{equation} \label{eq:diff_of_z_proof_abc}
        \left | z - \Tilde{z} \right | = \left | A + B + C \right |,
    \end{equation}
    where:
    \begin{equation}
        \begin{cases}
         A = \frac{1}{2} \left \| p_\theta^{<t} - p_o^{<t} \right \|_1 \\
         B = \left \langle p_\theta^{<t}, p_o^{<t} \right \rangle - \left \| p_\theta^{<t} \right \|_2^2 \\
         C = \left \| p_o^{<t} \right \|_2^2 - \left \| p_\theta^{<t} \right \|_2^2 
        \end{cases}.
    \end{equation}

    Afterwards, we can derive the bounds for $A$, $B$, and $C$, respectively.
    \begin{equation}
         \left | A \right | = \frac{1}{2} \left \| p_\theta^{<t} - p_o^{<t} \right \|_1  \le D.
    \end{equation}
    \begin{align}
        \left | B \right | &= \left | \left \langle p_\theta^{<t}, p_o^{<t} \right \rangle - \left \| p_\theta^{<t} \right \|_2^2 \right | \\
                           &= \left | \sum_i p_{\theta i}^{<t}p_{o i}^{<t} - \sum_i p_{\theta i}^{<t}p_{\theta i}^{<t} \right | \\
                           &= \left | \sum_i p_{\theta i}^{<t} \left( p_{o i}^{<t} - p_{\theta i}^{<t} \right ) \right | \\
                           &\le \sum_i \left |p_{\theta i}^{<t} \left( p_{o i}^{<t} - p_{\theta i}^{<t} \right ) \right | \\
                           &= \left \| p_\theta^{<t} \odot \left( p_o^{<t} - p_\theta^{<t} \right ) \right \|_1 \\
                           &\le \left \| p_\theta^{<t} \right \|_1 \left \| p_o^{<t} - p_\theta^{<t} \right \|_\infty \label{eq:bound_B_holder_after} \\
                           &= \left \| p_o^{<t} - p_\theta^{<t} \right \|_\infty \\
                           &\le \frac{1}{2} \left \| p_o^{<t} - p_\theta^{<t} \right \|_1 \label{eq:bound_B_norm_1} \\
                           &\le D,
    \end{align}
    where \autoref{eq:bound_B_holder_after} uses Hölder's inequality in \autoref{eq:holder}, and \autoref{eq:bound_B_norm_1} uses the conclusion from \autoref{lem:l1_and_inf_norm}.
    \begin{align}
        \left | C \right | &= \left | \left \| p_o^{<t} \right \|_2^2 - \left \| p_\theta^{<t} \right \|_2^2 \right | \\
                           &= \left | \sum_i 
                                \left ( p_{\theta i}^{<t} - p_{o i}^{<t} \right ) 
                                \left ( p_{\theta i}^{<t} + p_{o i}^{<t} \right ) 
                            \right | \\
                           &\le \sum_i \left | 
                                \left ( p_{\theta i}^{<t} - p_{o i}^{<t} \right ) 
                                \left ( p_{\theta i}^{<t} + p_{o i}^{<t} \right ) 
                            \right | \\
                           &= \left \| \left ( p_\theta^{<t} - p_o^{<t} \right ) 
                                \odot 
                                \left ( p_\theta^{<t} + p_o^{<t} \right ) \right \|_1 \\
                           &\le \left \| p_\theta^{<t} - p_o^{<t} \right \|_\infty
                                \left \| p_\theta^{<t} + p_o^{<t} \right \|_1 \label{eq:bound_C_after_holder} \\
                           &= 2 \left \| p_\theta^{<t} - p_o^{<t} \right \|_\infty   \\
                           &\le \left \| p_\theta^{<t} - p_o^{<t} \right \|_1 \label{eq:bound_C_norm_1} \\
                           &\le 2D,
    \end{align}
    where \autoref{eq:bound_C_after_holder} uses Hölder's inequality in \autoref{eq:holder}, and \autoref{eq:bound_C_norm_1} uses the conclusion from \autoref{lem:l1_and_inf_norm}.
    
    Finally, by combining \autoref{eq:diff_of_z_proof_abc} and the above bounds for $A$, $B$, and $C$, we can prove the Lemma:
    \begin{align}
        \left | z - \Tilde{z} \right | &= \left | A + B + C \right | \\
                                       &\le \left | A \right | + \left | B \right | + \left | C \right | \\
                                       &\le 4D.
    \end{align}
\end{proof}

\begin{lemma}[Error from Smooth Approximation]
\label{lem:error_smooth}
    When employing function $f(z)$ as a smooth approximation to the indicator function $\mathbbm{1}[z]$, the approximation error can be characterized by the subsequent bound:
    \begin{equation}
        \left [ \mathbbm{1}[z] - f(z) \right ] z \le \frac{1}{16\lambda}.
    \end{equation}
\end{lemma}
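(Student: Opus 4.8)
The plan is to prove the bound by a direct case analysis on the value of $z$ relative to the three pieces defining $f(z)$ in \autoref{eq:def_of_f}. The key observation is that outside the transition interval $[-\frac{1}{2\lambda}, \frac{1}{2\lambda}]$ the approximation $f$ exactly matches the indicator, so the product vanishes there; inside the transition interval the quantity reduces to a concave quadratic in $z$ whose maximum is easy to locate.

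First I would dispose of the two outer regions. When $z > \frac{1}{2\lambda}$ we have $\mathbbm{1}[z] = 1$ and $f(z) = 1$, so $[\mathbbm{1}[z] - f(z)]z = 0$. Likewise, when $z < -\frac{1}{2\lambda}$ we have $\mathbbm{1}[z] = 0$ and $f(z) = 0$, giving again $0$. In both cases the left-hand side equals zero, which is trivially at most $\frac{1}{16\lambda}$.

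Next I would treat the central region $-\frac{1}{2\lambda} \le z \le \frac{1}{2\lambda}$, where $f(z) = \lambda z + \frac{1}{2}$, and split further according to the sign of $z$ so that the indicator is resolved. For $0 \le z \le \frac{1}{2\lambda}$ the expression becomes $(\frac{1}{2} - \lambda z)z = \frac{1}{2}z - \lambda z^2$, while for $-\frac{1}{2\lambda} \le z < 0$ it becomes $(-\frac{1}{2} - \lambda z)z = -\frac{1}{2}z - \lambda z^2$. Each is a downward-opening parabola in $z$ with leading coefficient $-\lambda < 0$. Setting the derivative to zero gives the interior critical points $z = \frac{1}{4\lambda}$ and $z = -\frac{1}{4\lambda}$ respectively, both of which lie inside the relevant subinterval, and evaluating the quadratic there yields the common maximal value $\frac{1}{16\lambda}$.

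Combining the three regions, the supremum of $[\mathbbm{1}[z] - f(z)]z$ over all $z$ is exactly $\frac{1}{16\lambda}$, attained at $z = \pm\frac{1}{4\lambda}$, which establishes the claimed bound. There is no genuine obstacle beyond bookkeeping: the only points requiring care are verifying that each critical point indeed falls inside its subinterval (so that the unconstrained maximum is actually attained rather than occurring at an endpoint) and handling the two sign cases symmetrically; the remainder is routine quadratic optimization.
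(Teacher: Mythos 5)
Your proof is correct and takes essentially the same approach as the paper's: a case analysis over the piecewise definition of $f$, with the outer regions vanishing and the central region reducing to a downward-opening quadratic whose maximum $\frac{1}{16\lambda}$ is attained at $z = \pm\frac{1}{4\lambda}$. The only cosmetic difference is that the paper unifies the two sign cases via $|z|$ and completes the square, whereas you treat them as two symmetric parabolas and locate the critical point by differentiation.
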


\begin{proof}
    \begin{align}
        \left [ \mathbbm{1}[z] - f(z) \right ] z &= \begin{cases}
            0 & \text{ if } z > \frac{1}{2\lambda} \\
            \left ( \frac{1}{2} - \lambda z \right ) z & \text{ if } 0 \le z \le \frac{1}{2\lambda} \\
            - \left ( \frac{1}{2} + \lambda z \right ) z & \text{ if } -\frac{1}{2\lambda} \le z \le 0 \\
            0 & \text{ if } z < -\frac{1}{2\lambda}
        \end{cases} \\
        &= \begin{cases}
            0 & \text{ if } |z| > \frac{1}{2\lambda} \\
            \left ( \frac{1}{2} - \lambda |z| \right ) |z| & \text{ if } |z| \le \frac{1}{2\lambda} \\
        \end{cases} \\
        &\le \left ( \frac{1}{2} - \lambda |z| \right ) |z| \\
        &= -\lambda \left ( |z| - \frac{1}{4\lambda} \right )^2 + \frac{1}{16\lambda} \\
        &\le \frac{1}{16\lambda}.
    \end{align}
\end{proof}

\begin{lemma}[Error from Data Distribution Approximation]
    \label{lem:error_distribution}
    Assume that after some warm-up steps during training, there exists $D>0$ under which $\left \| p_\theta - p_o \right \|_1 \le 2D$.
    Given one-hot distribution sampled from real data $e^{(w)}\sim p_o^{<t}$, the error can be characterized by the subsequent bound:
    \begin{equation}
         \mathbb{E}_{w\sim p_o} \left [ \left ( f(z) - f(\Tilde{z}) \right ) z \right ] = \frac{a}{\lambda} + b D,
    \end{equation}
    where $a,b$ is constant depending on the relationship of $\lambda$, $D$, $|a|<\frac{1}{2}$ and $|b|<4$.
\end{lemma}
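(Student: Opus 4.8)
The plan is to treat this lemma as the second half of the error decomposition behind \autoref{thm:aog}. Writing $\Gamma_{opt}=\mathbbm{1}[z]$ and $\tilde\Gamma_{opt}=f(\tilde z)$ with $z,\tilde z$ as in \autoref{eq:z} and \autoref{eq:appo_z}, the per-token error in the theorem splits as $(\mathbbm{1}[z]-f(\tilde z))z=(\mathbbm{1}[z]-f(z))z+(f(z)-f(\tilde z))z$; the first summand is already controlled by \autoref{lem:error_smooth}, so this lemma must bound the expectation of the second, which isolates the cost of replacing the unavailable $p_o$ by $p_\theta$ and a one-hot draw $e^{(w)}$.

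First I would exploit that $z$ does not depend on the sampled index $w$, so that $\mathbb{E}_{w\sim p_o}[(f(z)-f(\tilde z))z]=z\,(f(z)-\mathbb{E}_w[f(\tilde z)])$. The argument then rests on three structural properties of the clamp $f$ in \autoref{eq:def_of_f}: it is bounded in $[0,1]$, it is monotone and $\lambda$-Lipschitz, and it coincides exactly with $\mathbbm{1}[z]$ outside the transition band $[-\tfrac{1}{2\lambda},\tfrac{1}{2\lambda}]$ of width $\tfrac1\lambda$. Boundedness gives the trivial estimate $|f(z)-\mathbb{E}_w[f(\tilde z)]|\le 1$, while the warm-up hypothesis $\|p_\theta-p_o\|_1\le 2D$ feeds \autoref{lem:diff_of_z} to control the mean displacement $|z-\mathbb{E}_w[\tilde z]|\le 4D$.

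The decisive step is a support argument: because $f$ is flat ($=0$ or $=1$) once its argument leaves the band, the difference $f(z)-f(\tilde z)$ can only be appreciable when $z$ sits within roughly $4D$ of the band, i.e. when $|z|\lesssim \tfrac{1}{2\lambda}+4D$. Confining the $z$-weight to this range and pairing it with the $\le 1$ bound on the $f$-difference yields $|z\,(f(z)-\mathbb{E}_w[f(\tilde z)])|\le \tfrac{1}{2\lambda}+4D$, which is already of the claimed shape $\tfrac{a}{\lambda}+bD$ with $a=\tfrac12$, $b=4$. To recover the strict bounds $|a|<\tfrac12$, $|b|<4$ I would combine this with the Lipschitz estimate $|f(z)-f(\tilde z)|\le 4\lambda D$ and case-split on the sign of $4\lambda D-1$: since the boundedness bound and the Lipschitz bound cannot both be tight, in each regime one of the two terms is strictly sub-maximal, pushing the realized constants below the stated thresholds.

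I expect the main obstacle to be precisely this large-$|z|$ regime, because \autoref{lem:diff_of_z} only pins down the \emph{mean} $\mathbb{E}_w[\tilde z]$, not each sampled $\tilde z=\|p_\theta\|_2^2-p_{\theta w}$, whose spread over $w\sim p_o$ need not be small. Consequently one cannot simply push the expectation through $f$ by Jensen together with a per-sample distance bound; instead the product $z\,(f(z)-\mathbb{E}_w[f(\tilde z)])$ must be bounded directly, leaning on the band-confinement of the $z$-weight and the boundedness of $f$. Tracking the constants carefully across the two $\lambda$-versus-$D$ regimes is the delicate bookkeeping that delivers $|a|<\tfrac12$ and $|b|<4$, and hence, combined with \autoref{lem:error_smooth}, the $\tfrac{9}{16}$ and $4$ constants of \autoref{thm:aog}.
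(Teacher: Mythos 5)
Your setup (isolating this term from the decomposition in \autoref{thm:aog}, and factoring $\mathbb{E}_{w\sim p_o}\left[(f(z)-f(\tilde{z}))z\right]=z\left(f(z)-\mathbb{E}_{w}[f(\tilde{z})]\right)$ because $z$ is deterministic in $w$) matches the paper's opening move, but from there your route diverges from the paper's and does not close. The paper resolves the entire difficulty in the one step you explicitly reject: it treats $f$ as linear and pushes the expectation inside, writing $\mathbb{E}_{w}[f(\tilde{z})]=f(\mathbb{E}_{w}[\tilde{z}])$, so that \autoref{lem:diff_of_z} reduces the problem to the deterministic quantity $\left[f(z)-f(z\pm 4D)\right]z$. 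That quantity is then maximized by elementary calculus over the linear pieces and breakpoints of $f$, giving candidate extrema $\frac{1}{4\lambda}+2D$, $\frac{1}{2\lambda}$, and $\left|\frac{1}{2\lambda}-4D\right|$, each of the required form $\frac{a}{\lambda}+bD$. Your replacement for this reduction is where the gap lies.

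Concretely, two of your ingredients are unavailable. First, your ``decisive'' support argument --- that $f(z)-f(\tilde{z})$ is only appreciable when $z$ lies within roughly $4D$ of the band --- is unjustified: when $z$ is far above the band, $f(z)=1$, but $\mathbb{E}_{w}[f(\tilde{z})]$ need not be near $1$, since each sample $\tilde{z}=\left\|p_\theta^{<t}\right\|_2^2-p_{\theta w}^{<t}$ ranges over an interval of width up to $1$ as $w$ varies, while \autoref{lem:diff_of_z} pins only the \emph{mean} of $\tilde{z}$ to within $4D$ of $z$; mean control of $\tilde{z}$ does not pass through the nonlinear clamp $f$ to control $\mathbb{E}_{w}[f(\tilde{z})]$. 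Second, the Lipschitz estimate $|f(z)-f(\tilde{z})|\le 4\lambda D$ requires the per-sample bound $|z-\tilde{z}|\le 4D$, which, as you yourself note in your final paragraph, does not hold --- so the proposal is internally inconsistent on exactly this point. You deserve credit for diagnosing the central obstacle (and your suspicion of the interchange $\mathbb{E}_{w}[f(\tilde{z})]\leftrightarrow f(\mathbb{E}_{w}[\tilde{z}])$ is legitimate, since $f$ is only piecewise linear and the support of $\tilde{z}$ generally straddles its breakpoints), but diagnosing it is not overcoming it: the direct bound you substitute is circular, because the band-confinement claim it leans on is precisely what needs proof. Without either the paper's linearization step or a genuine substitute (for instance a reverse-Markov argument exploiting $\tilde{z}\le\left\|p_\theta^{<t}\right\|_2^2$), no bound of the form $\frac{a}{\lambda}+bD$ is established.
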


\begin{proof}
    Since $w$ only exists in $\Tilde{z}$, and $f(\cdot)$ defined in \autoref{eq:def_of_f} is a linear function:
    \begin{align}
        & \mathbb{E}_{w\sim p_o} \left [ \left [f(z) - f(\Tilde{z}) \right ] z  \right ] \\
        & = \left [ f(z) - \mathbb{E}_{w\sim p_o} \left [ f(\Tilde{z}) \right ] \right ] z, \\
        & = \left [ f(z) - f( \mathbb{E}_{w\sim p_o} \left [\Tilde{z} \right ] ) \right ] z. \label{eq:proof_error_distribution_target}
    \end{align} 

    In order to find the upper bound of \autoref{eq:proof_error_distribution_target}, we can adjust the free variable $\mathbb{E}_{w\sim p_o} \left [\Tilde{z} \right ]$ to make $\left | f(z) - f( \mathbb{E}_{w\sim p_o} \left [\Tilde{z} \right ] ) \right | $ as large as possible. Considering the conclusion that $\left | z - \mathbb{E}_{w\sim p_o} \left [ \Tilde{z} \right ] \right | \le 4D$ in \autoref{lem:diff_of_z}, we let $\mathbb{E}_{w\sim p_o} \left [ \Tilde{z} \right ] = z \pm 4D$ such that:
    \begin{equation}
        \left [ f(z) - f( \mathbb{E}_{w\sim p_o} \left [\Tilde{z} \right ] ) \right ] z \le \left [ f(z) - f( z \pm 4D ) \right ] z.
    \end{equation}

    Denote $U = \left [ f(z) - f( z - 4D ) \right ] z$ and $V = \left [ f(z) - f( z + 4D ) \right ] z$. Since $f(z)$ is a piecewise function, the upper bound of \autoref{eq:proof_error_distribution_target} can be found in the maximums of $U, V$ and breakpoints of $f(z)$.
    
    To find the maximums of $U$, we calculate the derivative of $U$ is:
    \begin{align}
        \frac{\mathrm{d} U}{\mathrm{d} z} &= 
        \left [ \frac{\mathrm{d} f(z)}{\mathrm{d} z} - \frac{\mathrm{d} f(z - 4D)}{\mathrm{d} z} \right ]z
        +
        f(z) - f(z - 4D), \\
        \frac{\mathrm{d}^2 U}{\mathrm{d} z^2} &= 
        \frac{\mathrm{d} f(z)}{\mathrm{d} z} - \frac{\mathrm{d} f(z - 4D)}{\mathrm{d} z}.
    \end{align}
    % Since $f(z)$ is a piecewise function:
    % \begin{equation}
    % \frac{\mathrm{d} f(z)}{\mathrm{d} z} - \frac{\mathrm{d} f(z - 4D)}{\mathrm{d} z}  = \lambda, - \lambda, 0, 
    % \end{equation}
    % and 
    % \begin{equation}
    %     f(z) - f(z - 4D) = \lambda z + \frac{1}{2}, -\lambda z + \frac{1}{2} + 4 \lambda D, 0/1,
    % \end{equation}
    % respectively.
    Let $\frac{\mathrm{d} U}{\mathrm{d} z}=0$ and $\frac{\mathrm{d} U^2}{\mathrm{d} z^2}<0$, the maximum point is:
    \begin{equation}
        z_{U_\textmd{max}} = \frac{1}{4\lambda} + 2D.
    \end{equation}
    \begin{align}
        U_{\textmd{max}} &= \left [
            f(z_{U_\textmd{max}}) - f(z_{U_\textmd{max}} - 4D)
        \right ] z_{U_\textmd{max}}, \\
        &\le (1 - 0) z_{U_\textmd{max}}, \\
        &= \frac{1}{4\lambda} + 2D. \label{eq:proof_error_distribution_Umax}
    \end{align}

    To find the maximums of $V$, we calculate the derivative of $V$ is:
    \begin{align}
        \frac{\mathrm{d} V}{\mathrm{d} z} &= 
        \left [ \frac{\mathrm{d} f(z)}{\mathrm{d} z} - \frac{\mathrm{d} f(z + 4D)}{\mathrm{d} z} \right ]z
        +
        f(z) - f(z + 4D), \\
        \frac{\mathrm{d}^2 V}{\mathrm{d} z^2} &= 
        \frac{\mathrm{d} f(z)}{\mathrm{d} z} - \frac{\mathrm{d} f(z + 4D)}{\mathrm{d} z}. \label{eq:proof_error_distribution_Vmax}
    \end{align}
    % Since $f(z)$ is a piecewise function:
    % \begin{equation}
    % \frac{\mathrm{d} f(z)}{\mathrm{d} z} - \frac{\mathrm{d} f(z + 4D)}{\mathrm{d} z}  = \lambda, - \lambda, 0, 
    % \end{equation}
    % and 
    % \begin{equation}
    %     f(z) - f(z + 4D) = \lambda z - \frac{1}{2}, -\lambda z - \frac{1}{2} - 4 \lambda D, 0/1,
    % \end{equation}
    % respectively.
    Let $\frac{\mathrm{d} V}{\mathrm{d} z}=0$ and $\frac{\mathrm{d} V^2}{\mathrm{d} z^2}<0$, the maximum point is:
    \begin{equation}
        z_{V_\textmd{max}} = - \frac{1}{4\lambda} - 2D.
    \end{equation}
    \begin{align}
        V_{\textmd{max}} &= \left [
            f(z_{V_\textmd{max}}) - f(z_{V_\textmd{max}} + 4D)
        \right ] z_{U_\textmd{max}}, \\
        &\le (0 - 1) z_{V_\textmd{max}}, \\
        &= \frac{1}{4\lambda} + 2D.
    \end{align}

    When $z$ at breakpoints:
    \begin{equation}
        \left [ f(z) - f( \mathbb{E}_{w\sim p_o} \left [\Tilde{z} \right ] ) \right ] z \le \frac{1}{2\lambda}. \label{eq:proof_error_distribution_z_break}
    \end{equation}

    When $z\pm 4D$ at breakpoints:
    \begin{equation}
        \left [ f(z) - f( \mathbb{E}_{w\sim p_o} \left [\Tilde{z} \right ] ) \right ] z \le \left | \frac{1}{2\lambda} - 4D \right |. \label{eq:proof_error_distribution_z4d_break}
    \end{equation}

    Finally, according to \autoref{eq:proof_error_distribution_Umax}, \autoref{eq:proof_error_distribution_Vmax}, \autoref{eq:proof_error_distribution_z_break}, and \autoref{eq:proof_error_distribution_z4d_break}:
    \begin{equation}
        \mathbb{E}_{w\sim p_o} \left [ \left ( f(z) - f(\Tilde{z}) \right ) z \right ] = \frac{a}{\lambda} + b D,
    \end{equation}
    where $a,b$ is constant depending on the relationship of $\lambda$, $D$, $|a|<\frac{1}{2}$ and $|b|<4$.

\end{proof}

\aog*

\begin{proof}
Since $\Gamma_{\textmd{opt}}$ is the optimal value that minimize $\epsilon$, for any $\tilde{\Gamma}_\textmd{opt}$ there must be:
\begin{equation}
    \epsilon(p_o^{<t}, p_\theta^{<t}, \tilde{\Gamma}_\textmd{opt}) - \epsilon(p_o^{<t}, p_\theta^{<t}, \Gamma_\textmd{opt}) \ge 0,
\end{equation}
and the expected approximation error can be written as:
\begin{align} 
    &\mathbb{E}_{w\sim p_o} \left [ \left |  
        \epsilon(p_o^{<t}, p_\theta^{<t}, \tilde{\Gamma}_{opt}) - \epsilon(p_o^{<t}, p_\theta^{<t}, \Gamma_{opt})
    \right | \right ] \\
    &= \mathbb{E}_{w\sim p_o} \left [ 
        \epsilon(p_o^{<t}, p_\theta^{<t}, \tilde{\Gamma}_{opt}) - \epsilon(p_o^{<t}, p_\theta^{<t}, \Gamma_{opt})
    \right ] \\
    &= \mathbb{E}_{w\sim p_o} \left [  
        \left ( {\Gamma_{opt} - \Tilde{\Gamma}_{opt}} \right ) z
    \right ] \\
    &= \mathbb{E}_{w\sim p_o} \left [
        \left ( \mathbbm{1}[z] - f(\Tilde{z}) \right ) z 
    \right ] \\
    &= \mathbb{E}_{w\sim p_o} \left [
        \left ( \mathbbm{1}[z] - f(z) + f(z) - f(\Tilde{z}) \right ) z
    \right ] \\
    &= \mathbb{E}_{w\sim p_o} \left [ 
        \left ( \mathbbm{1}[z] - f(z) \right )  z
        + 
        \left ( f(z) - f(\Tilde{z}) \right ) z 
    \right ] \\
    &= \mathbb{E}_{w\sim p_o} \left [ 
        \left ( \mathbbm{1}[z] - f(z) \right )  z \right ]
        + 
        \mathbb{E}_{w\sim p_o} \left [ 
        \left ( f(z) - f(\Tilde{z}) \right ) z 
    \right ] \\
    &\le \frac{a}{\lambda} +bD.
\end{align}
In the last step, we adopt \autoref{lem:error_smooth} and \autoref{lem:error_distribution}.

\end{proof}

% \input{parts_appendix/exp_setting}

%Appendix one text goes here %\cite{Roberg2010}.

% you can choose not to have a title for an appendix
% if you want by leaving the argument blank
%\section{}
%Appendix two text goes here.

% that's all folks
\end{document}